\newtheorem{theorem}{Theorem}[section]
\newtheorem{lemma}[theorem]{Lemma}
\newtheorem{definition}[theorem]{Definition}
\newtheorem{example}[theorem]{Example}
\newtheorem{proposition}[theorem]{Proposition}
\def\shownotes{1}  \ifnum\shownotes=0
\newcommand{\authnote}[2]{$\ll$\textsf{\footnotesize #1 notes: #2}$\gg$}
\newcommand{\authnote}[2]{}
\newcommand{\tnote}[1]{{\color{blue}\authnote{Tengyu}{#1}}}
\newcommand{\pl}[1]{}
\newcommand{\tm}[1]{}
\newcommand{\ak}[1]{}
\newcommand{\pldel}[1]{}
\newcommand{\expect}[0]{\ensuremath{\mathop{\mathbb{E}}}}
\newcommand{\prob}[0]{\ensuremath{\mathbb{P}}}
\newcommand{\ourcal}[0]{the scaling-binning calibrator}
\newcommand{\Ourcal}[0]{The scaling-binning calibrator}
\newcommand{\Ourcalnothe}[0]{Scaling-binning calibrator}
\DeclareMathOperator*{\argmax}{arg\,max}
\DeclareMathOperator*{\argmin}{arg\,min}
\newcommand{\bins}{\mathcal{B}}
\title{Verified Uncertainty Calibration}
\author{%
  Ananya Kumar, Percy Liang, Tengyu Ma \\
  Department of Computer Science\\
  Stanford University\\
  \texttt{\{ananya, pliang, tengyuma\}@cs.stanford.edu} \\
}
\begin{document}

\maketitle

\begin{abstract}
  Applications such as weather forecasting and personalized medicine demand models that output calibrated probability estimates---those representative of the true likelihood of a prediction. Most models are not calibrated out of the box but are recalibrated by post-processing model outputs. We find in this work that popular recalibration methods like Platt scaling and temperature scaling are (i) less calibrated than reported, and (ii) current techniques cannot estimate how miscalibrated they are. An alternative method, histogram binning, has measurable calibration error but is sample inefficient---it requires $O(B/\epsilon^2)$ samples, compared to $O(1/\epsilon^2)$ for scaling methods, where $B$ is the number of distinct probabilities the model can output. To get the best of both worlds, we introduce \emph{\ourcal{}}, which first fits a parametric function to reduce variance and then bins the function values to actually ensure calibration. This requires only $O(1/\epsilon^2 + B)$ samples.
  Next, we show that we can estimate a model's calibration error more accurately using an estimator from the meteorological community---or equivalently measure its calibration error with fewer samples ($O(\sqrt{B})$ instead of $O(B)$).
  We validate our approach with multiclass calibration experiments on CIFAR-10 and ImageNet, where we obtain a 35\% lower calibration error than histogram binning and, unlike scaling methods, guarantees on true calibration.
  In these experiments, we also estimate the calibration error and ECE more accurately than the commonly used plugin estimators.
  We implement all these methods in a Python library: \url{https://pypi.org/project/uncertainty-calibration}

\end{abstract}


\pl{when we say $O(...)$, we're actually hiding $\log B$ factors; probably should use $\tilde O$?}

\section{Introduction}

The probability that a system outputs for an event should reflect the true frequency of that event: if an automated diagnosis system says 1,000 patients have cancer with probability 0.1, approximately 100 of them should indeed have cancer.
In this case, we say the model is uncertainty calibrated. The importance of this notion of calibration has been emphasized in personalized medicine~\cite{jiang2012calibrating}, meteorological forecasting~\cite{murphy1973vector, murphy1977reliability, degroot1983forecasters,gneiting2005weather, brocker2009decomposition} and natural language processing applications~\cite{nguyen2015posterior, card2018calibration}.
As most modern machine learning models, such as neural networks, do not output calibrated probabilities out of the box~\cite{guo2017calibration, zadrozny2001calibrated, kuleshov2018accurate}, reseachers use recalibration methods that take the output of an uncalibrated model, and transform it into a calibrated probability.
\emph{Scaling} approaches for recalibration---Platt scaling~\cite{platt1999probabilistic}, isotonic regression~\cite{zadrozny2002transforming}, and temperature scaling~\cite{guo2017calibration}---are widely used and require very few samples, but do they actually produce calibrated probabilities?

\emph{We discover that these methods are less calibrated than reported.} Past work approximates a model's calibration error using a finite set of bins. We show that by using more bins, we can uncover a higher calibration error for models on CIFAR-10 and ImageNet.
We show that a fundamental limitation with approaches that output a continuous range of probabilities is that their true calibration error is unmeasurable with a finite number of bins (Example~\ref{ex:continuous-not-calibrated}).

An alternative approach, histogram binning~\cite{zadrozny2001calibrated}, outputs probabilities from a finite set.
Histogram binning can produce a model that is calibrated, and unlike scaling methods we can measure its calibration error, but it is sample inefficient.
In particular, the number of samples required to calibrate scales linearly with the number of distinct probabilities the model can output, $B$~\cite{naeini2014binary}, which can be large particularly in the multiclass setting where $B$ typically scales with the number of classes.
Recalibration sample efficiency is crucial---we often want to recalibrate our models in the presence of domain shift~\cite{hendrycks2019anomaly} or recalibrate a model trained on simulated data, and may have access to only a small labeled dataset from the target domain.

To get the sample efficiency of Platt scaling and the verification guarantees of histogram binning, \emph{we propose \ourcal{}} (Figure~\ref{fig:var_red_binning}).
Like scaling methods, we fit a simple function $g \in \mathcal{G}$ to the recalibration dataset.
We then bin the input space so that an equal number of inputs land in each bin.
In each bin, we output the average of the $g$ values in that bin---these are the gray circles in Figure~\ref{fig:var_red_binning}.
In contrast, histogram binning outputs the average of the label values in each bin (Figure~\ref{fig:hist_binning}).
The motivation behind our method is that the $g$ values in each bin are in a narrower range than the label values, so when we take the average we incur lower estimation error.
If $\mathcal{G}$ is well chosen\pl{what does this mean?}, our method requires $O(\frac{1}{\epsilon^2} + B)$ samples to achieve calibration error $\epsilon$ instead of $O(\frac{B}{\epsilon^2})$ samples for histogram binning, where $B$ is the number of model outputs (Theorem~\ref{thm:final-calib}). Note that in prior work, binning the outputs of a function was used for evaluation and without any guarantees, whereas in our case it is used for the method itself, and we show improved sample complexity.

\begin{figure}
     \centering
     \begin{subfigure}[b]{0.32\textwidth}
         \centering
         \includegraphics[width=\textwidth]{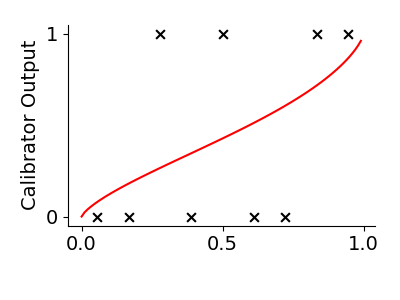}
         \caption{Platt scaling}
         \label{fig:platt_scaling}
     \end{subfigure}
     \hfill
     \begin{subfigure}[b]{0.32\textwidth}
         \centering
         \includegraphics[width=\textwidth]{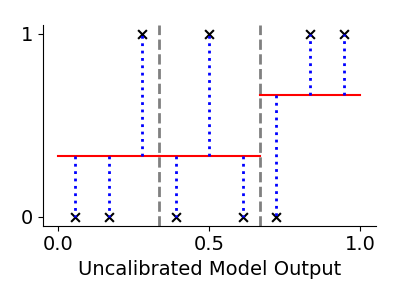}
         \caption{Histogram binning}
         \label{fig:hist_binning}
     \end{subfigure}
     \hfill
     \begin{subfigure}[b]{0.32\textwidth}
         \centering
         \includegraphics[width=\textwidth]{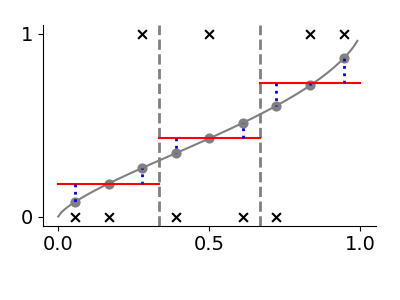}
         \caption{\Ourcalnothe{}}
         \label{fig:var_red_binning}
     \end{subfigure}
        \caption{
        Visualization of the three recalibration approaches.
        The black crosses are the ground truth labels, and the red lines are the output of the recalibration methods.
        Platt Scaling (Figure~\ref{fig:platt_scaling}) fits a function to the recalibration data, but its calibration error is not measurable.
        Histogram binning (Figure~\ref{fig:hist_binning}) outputs the average label in each bin.
        \Ourcal{} (Figure~\ref{fig:var_red_binning}) fits a function $g \in \mathcal{G}$ to the recalibration data and then \emph{takes the average of the function values (the gray circles)} in each bin.
        The function values have lower variance than the labels, as visualized by the blue dotted lines, which is why our approach has lower variance.
        }
        \label{fig:variance_reduced_illustration}
\end{figure}

We run multiclass calibration experiments on CIFAR-10~\cite{krizhevsky2009learningmultiple} and ImageNet~\cite{deng2009imagenet}.
\Ourcal{} achieves a lower calibration error than histogram binning, while allowing us to measure the true calibration error unlike for scaling methods.
We get a \emph{35\% lower calibration error on CIFAR-10} and a \emph{5x lower calibration error on ImageNet} than histogram binning for $B = 100$.

Finally, we show how to estimate the calibration error of models more accurately. Prior work in machine learning~\cite{nguyen2015posterior, guo2017calibration, hendrycks2019anomaly, kuleshov2015calibrated, hendrycks2019pretraining} directly estimates each term\pl{haven't introduced term, so doesn't make sense; choose another way to explain} in the calibration error from samples (Definition~\ref{dfn:plugin-estimator}). The sample complexity of this plugin estimator scales linearly with $B$. A debiased estimator introduced in the meteorological literature~\cite{brocker2012empirical, ferro2012bias} reduces the bias of the plugin estimator; \emph{we prove that it achieves sample complexity that scales with $\sqrt{B}$} by leveraging error cancellations across bins. Experiments on CIFAR-10 and ImageNet confirm that the debiased estimator measures the calibration error more accurately.


\section{Setup and background}
\label{sec:formulation}

\newcommand{\ce}[0]{\ensuremath{\textup{CE}}}
\newcommand{\lpce}[0]{\ensuremath{\ell_p\textup{-CE}}}
\newcommand{\ltwoce}[0]{\ensuremath{\ell_2\textup{-CE}}}
\newcommand{\lsquared}[0]{\ensuremath{L^2}}
\newcommand{\squaredce}[0]{\ensuremath{L^2\textup{-CE}}}
\newcommand{\topce}[0]{\ensuremath{\textup{TCE}}}
\newcommand{\margsquaredce}[0]{\ensuremath{\textup{MCE}}}
\newcommand{\mse}[0]{\ensuremath{\textup{MSE}}}

\subsection{Binary classification}

Let $\mathcal{X}$ be the input space and $\mathcal{Y}$ be the label space where $\mathcal{Y} = \{0, 1\}$ for binary classification.
Let $X \in \mathcal{X}$ and $Y \in \mathcal{Y}$ be random variables denoting the input and label, given by an unknown joint distribution \pl{use mathbb}$P$. As usual, expectations are taken over all random variables.

Suppose we have a model $f : \mathcal{X} \to [0, 1]$ where the (possibly uncalibrated) output of the model represents the model's confidence that the label is 1. The calibration error examines the difference between the model's probability and the true probability given the model's output:

\begin{definition}[Calibration error]
The calibration error of $f : \mathcal{X} \to [0, 1]$ is given by:
\begin{align}
\ce(f) = \Big(\expect\big[ \left|f(X) - \expect[Y \mid f(X)] \right|^2 \big] \Big)^{1/2}
\end{align}
\end{definition}

If $\ce(f) = 0$ then $f$ is perfectly calibrated. This notion of calibration error is most commonly used~\cite{murphy1973vector,murphy1977reliability,degroot1983forecasters, nguyen2015posterior, hendrycks2019anomaly, kuleshov2015calibrated, hendrycks2019pretraining, brocker2012empirical}. Replacing the $2$s in the above definition by $p \geq 1$ we get the $\ell_p$ calibration error---the $\ell_1$ and $\ell_{\infty}$ calibration errors are also used in the literature~\cite{guo2017calibration, naeini2015obtaining, nixon2019calibration}. In addition to $\ce{}$, we also deal with the $\ell_1$ calibration error (known as ECE) in Sections~\ref{sec:challenges-measuring} and~\ref{sec:verifying_calibration}.



Calibration alone is not sufficient: consider an image dataset containing $50\%$ dogs and $50\%$ cats.
If $f$ outputs $0.5$ on all inputs, $f$ is calibrated but not very useful.
We often also wish to minimize the mean-squared error---also known as the Brier score---subject to a calibration budget~\cite{gneiting2005weather, gneiting2007probabilistic}.

\begin{definition}
The mean-squared error of $f : \mathcal{X} \to [0, 1]$ is given by $\textup{MSE}(f) = \mathbb{E}[(f(X) - Y)^2]$.
\end{definition}

Note that $\textup{MSE}$ and $\ce{}$ are not orthogonal\pl{independent?} and $\mbox{MSE} = 0$ implies perfect calibration; in fact the MSE is the sum of the squared calibration error and a ``sharpness'' term~\cite{murphy1973vector,degroot1983forecasters, kuleshov2015calibrated}.

\subsection{Multiclass classification}

While calibration in binary classification is \pl{well-defined?}well-studied,
it's less clear what to do for multiclass\pl{make less casual}, where multiple definitions abound, differing in their strengths. In the multiclass setting, $\mathcal{Y} = [K] = \{1, \dots, K\}$ and $f : \mathcal{X} \to [0, 1]^K$ outputs a confidence measure for each class in $[K]$\pl{probability distribution over the $K$ classes [have to sum to 1, right?]}.

\begin{definition}[Top-label calibration error]
The top-label calibration error examines the difference between the model's probability for its top prediction and the true probability of that prediction given the model's output:
\begin{align}
\topce(f) = \Big( \expect\Big[ \Big( \prob\big(Y = \argmax_{j \in [K]} f(X)_j \mid \max_{j \in [K]} f(X)_j\big) - \max_{j \in [K]} f(X)_j \Big)^2 \Big] \Big)^{1/2}
\end{align}
\end{definition}

We would often like the model to be calibrated on less likely predictions as well---imagine that a medical diagnosis system says there is a $50\%$ chance a patient has a benign tumor, a $10\%$ chance she has an aggressive form of cancer, and a $40\%$ chance she has one of a long list of other conditions. We would like the model to be calibrated on all of these predictions so we define the marginal calibration error which examines, \emph{for each class}, the difference between the model's probability and the true probability of that class given the model's output.

\begin{definition}[Marginal calibration error]
\label{dfn:marginal-ce}
  Let $w_k \in [0, 1]$ denote how important calibrating class $k$ is, where $w_k = 1/k$ if all classes are equally important. The marginal calibration error is:
\begin{align}
\margsquaredce(f) = \Big( \sum_{k = 1}^K w_k \mathbb{E}\big[ (f(X)_k - \prob(Y = k \mid f(X)_k))^2 \big] \Big)^{1/2}
\end{align}
\end{definition}

Prior works~\cite{guo2017calibration, hendrycks2019anomaly, hendrycks2019pretraining} propose methods for multiclass calibration but only measure top-label calibration---\cite{nixon2019calibration} and concurrent work to ours~\cite{kull2019temperature} define similar per-class calibration metrics where temperature scaling~\cite{guo2017calibration} is worse than vector scaling despite having better top-label calibration.


For notational simplicity, our theory focuses on the binary classification setting. We can transform top-label calibration into a binary calibration problem---the model outputs a probability corresponding to its top prediction, and the label represents whether the model gets it correct or not. Marginal calibration can be transformed into $K$ one-vs-all binary calibration problems where for each $k \in [K]$ the model outputs the probability associated with the $k$-th class, and the label represents whether the correct class is $k$~\cite{zadrozny2002transforming}. We consider both top-label calibration and marginal calibration in our experiments.
Other notions of multiclass calibration include joint calibration (which requires the entire probability \emph{vector} to be calibrated)~\cite{murphy1973vector, brocker2009decomposition} and event-pooled calibration~\cite{kuleshov2015calibrated}.

\subsection{Recalibration}

Since most machine learning models do not output calibrated probabilities out of the box~\cite{guo2017calibration, zadrozny2001calibrated} recalibration methods take the output of an uncalibrated model, and transform it into a calibrated probability. That is, given a trained model $f: \mathcal{X} \to [0, 1]$, let $Z = f(X)$. We are given recalibration data $T = \{ (z_i, y_i) \}_{i=1}^n$ independently sampled from \pl{mathbb} $P(Z, Y)$, and we wish to learn a recalibrator $g : [0, 1] \to [0, 1]$ such that $g \circ f$ is well-calibrated.

\emph{Scaling methods}, for example Platt scaling~\cite{platt1999probabilistic}, output a function $g = \argmin_{g \in \mathcal{G}} \sum_{(z, y) \in T} \ell(g(z), y)$, where $\mathcal{G}$ is a model family, $g \in \mathcal{G}$ is differentiable, and $\ell$ is a loss function, for example the log-loss or mean-squared error. The advantage of such methods is that they converge very quickly since they only fit a small number of parameters.

\emph{Histogram binning} first constructs a set of bins (intervals) that partitions $[0, 1]$, formalized below.

\begin{definition}[Binning schemes]
A binning scheme $\mathcal{B}$ of size $B$ is a set of $B$ intervals $I_1, \dots, I_B$ that partitions $[0, 1]$. Given $z \in [0, 1]$, let $\beta(z) = j$, where $j$ is the interval that $z$ lands in ($z \in I_j$).
\end{definition}

The bins are typically chosen such that either $I_1 = [0, \frac{1}{B}], I_2 = (\frac{1}{B}, \frac{2}{B}], \dots, I_B = (\frac{B-1}{B}, 1]$ (equal width binning)~\cite{guo2017calibration} or so that each bin contains an equal number of $z_i$ values in the recalibration data (uniform mass binning)~\cite{zadrozny2001calibrated}. Histogram binning then outputs the average $y_i$ value in each bin.

\section{Is Platt scaling calibrated?}
\label{sec:challenges-measuring}

In this section, we show that methods like Platt scaling and temperature scaling are (i) less calibrated than reported and (ii) it is difficult\pl{impossible in general [difficult might mean that we just didn't try hard enough]} to tell how miscalibrated they are. That is we show, both theoretically and with experiments on CIFAR-10 and ImageNet, why the calibration error of models that output a continuous range of values is \emph{underestimated}.
We defer proofs to Appendix~\ref{sec:appendix-platt-not-calibrated}.

The key to estimating the calibration error is estimating the conditional expectation $\expect[Y \mid f(X)]$.  If $f(X)$ is continuous, without smoothness assumptions on $\expect[Y \mid f(X)]$ (that cannot be verified in practice), this is impossible. This is analogous to the difficulty of measuring the mutual information between two continuous signals~\cite{paninski2003entropy}.

To approximate the calibration error, prior work bins the output of $f$ into $B$ intervals.
The calibration error in each bin is estimated as the difference between the average value of $f(X)$ and $Y$ in that bin.
Note that the binning here is for evaluation only, whereas in histogram binning, it is used for the recalibration method itself.
We formalize the notion of this binned calibration error below.

\begin{definition}
The binned version of $f$ outputs the average value of $f$ in each bin $I_j$:
\begin{align}
  f_{\mathcal{B}}(x) = \expect[f(X) \mid f(X) \in I_j] \quad\quad\quad \mbox{where }x \in I_j
\end{align} 
\end{definition}

Given $\bins{}$, the binned calibration error of $f$ is simply the calibration error of $f_{\bins{}}$.
A simple example shows that using binning to estimate the calibration error can severely underestimate the true calibration error.

\newcommand{\continuousNotCalibratedText}{
  For any binning scheme $\bins{}$, and continuous bijective function $f : [0, 1] \to [0, 1]$, there exists a distribution \pl{mathbb}$P$ over $\mathcal{X}, \mathcal{Y}$ s.t. $\ce(f_{\bins{}}) = 0$ but $\ce(f) \geq 0.49$.
Note that for all $f$, $0 \leq \ce(f) \leq 1$.
}

\begin{example}
\label{ex:continuous-not-calibrated}
\continuousNotCalibratedText{}
\end{example}

\newtheorem*{continuousNotCalibrated}{Restatement of Example~\ref{ex:continuous-not-calibrated}}

The intuition of the construction is that in each interval $I_j$ in $\bins{}$, the model could underestimate the true probability $\expect[Y \mid f(X)]$ half the time, and overestimate the probability half the time. So if we average over the entire bin the model appears to be calibrated, even though it is very uncalibrated. The formal proof is in Appendix~\ref{sec:appendix-platt-not-calibrated}, and holds for arbitrary $\ell_p$ calibration errors including the ECE.

Next, we show that given a function $f$, its binned version always has lower calibration error. The proof, in Appendix~\ref{sec:appendix-platt-not-calibrated}, is by Jensen's inequality. Intuitively, averaging a model's prediction within a bin allows errors at different parts of the bin to cancel out with each other. This result is similar to Theorem 2 in recent work~\cite{vaicenavicius2019calibration}, and holds for arbitrary $\ell_p$ calibration errors including the ECE.

\newcommand{\binningLowerBoundText}{
  Given any binning scheme $\bins{}$ and model $f : \mathcal{X} \to [0, 1]$, we have:
\[  \ce(f_{\bins{}}) \leq \ce(f). \]
\pl{number the equations}
}

\begin{proposition}[Binning underestimates error]
\label{prop:bin_low_bound}
\binningLowerBoundText{}
\end{proposition}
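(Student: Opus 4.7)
The plan is to show that the residual of $f_\bins$ is a conditional expectation of the residual of $f$, after which the inequality follows from the conditional form of Jensen's inequality. Squaring is monotone on $[0, \infty)$, so it suffices to prove $\ce(f_\bins)^2 \leq \ce(f)^2$, and I will work with that.

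Let me abbreviate $Z = f(X)$ and $\tilde{Z} = f_\bins(X) = \expect[Z \mid \beta(Z)]$. The first step is to observe that $\tilde{Z}$ is a (deterministic) function of $\beta(Z)$, so $\sigma(\tilde{Z}) \subseteq \sigma(\beta(Z)) \subseteq \sigma(Z)$. By the tower property, $\expect[Z \mid \tilde{Z}] = \expect\big[\expect[Z \mid \beta(Z)] \,\big|\, \tilde{Z}\big] = \expect[\tilde{Z} \mid \tilde{Z}] = \tilde{Z}$, and similarly $\expect[Y \mid \tilde{Z}] = \expect\big[\expect[Y \mid Z] \,\big|\, \tilde{Z}\big]$.

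Define the pointwise residuals $U = Z - \expect[Y \mid Z]$ and $V = \tilde{Z} - \expect[Y \mid \tilde{Z}]$, so that $\ce(f)^2 = \expect[U^2]$ and $\ce(f_\bins)^2 = \expect[V^2]$. Combining the two identities from the previous step gives
\begin{equation}
V = \expect[Z \mid \tilde{Z}] - \expect\big[\expect[Y \mid Z] \,\big|\, \tilde{Z}\big] = \expect\big[Z - \expect[Y \mid Z] \,\big|\, \tilde{Z}\big] = \expect[U \mid \tilde{Z}].
\end{equation}
Now I apply conditional Jensen's inequality to the convex function $t \mapsto t^2$: $V^2 = (\expect[U \mid \tilde{Z}])^2 \leq \expect[U^2 \mid \tilde{Z}]$. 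Taking a full expectation on both sides and using the tower property,
\begin{equation}
\ce(f_\bins)^2 = \expect[V^2] \leq \expect\big[\expect[U^2 \mid \tilde{Z}]\big] = \expect[U^2] = \ce(f)^2,
\end{equation}
and taking square roots yields the proposition.

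The only subtlety, and the main thing to get right, is the bookkeeping around the $\sigma$-algebras: it is tempting to write $\expect[Y \mid f_\bins(X)] = \expect[Y \mid \beta(f(X))]$, which holds when the bin averages are distinct but not in full generality. The argument above sidesteps this by conditioning directly on $\tilde{Z}$ and using only the inclusion $\sigma(\tilde{Z}) \subseteq \sigma(Z)$, so no assumption on distinctness of bin averages is needed. Everything else (the identity $V = \expect[U \mid \tilde{Z}]$ and the invocation of Jensen) is then routine.
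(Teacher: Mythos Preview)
Your proof is correct and rests on the same idea as the paper's---Jensen's inequality---but the packaging differs enough to be worth a remark. The paper decomposes both calibration errors as explicit sums over bins and applies Jensen bin-by-bin to the bivariate convex map $(a,b)\mapsto|a-b|^p$, thereby obtaining the statement for all $\ell_p$ at once. You instead work measure-theoretically: you identify the binned residual $V$ with $\expect[U\mid\tilde Z]$ and invoke conditional Jensen on $t\mapsto t^2$. Your route is slightly cleaner on the point you flag: the paper's bin-by-bin expression for $\ce(f_{\bins})$ implicitly uses $\expect[Y\mid f_{\bins}(X)]=\expect[Y\mid Z\in I_j]$, which presumes distinct bin averages, whereas conditioning directly on $\tilde Z$ handles the degenerate case without extra argument. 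Conversely, the paper's formulation immediately yields the $\ell_p$ version; your argument extends to $\ell_p$ just as easily (replace $t^2$ by $|t|^p$), but as written covers only $p=2$.
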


\newtheorem*{binningLowerBound}{Restatement of Proposition~\ref{prop:bin_low_bound}}

\subsection{Experiments}

Our experiments on ImageNet and CIFAR-10 suggest that previous work reports numbers which are lower than the actual calibration error of their models. Recall that binning lower bounds the calibration error. We cannot compute the actual calibration error but if we use a `finer' set of bins then we get a tighter lower bound on the calibration error.

As in~\cite{guo2017calibration}, our model's objective was to output the top predicted class and a confidence score associated with the prediction. For ImageNet, we started with a trained VGG16 model with an accuracy of 64.3\%. We split the validation set into 3 sets of sizes $(20000, 5000, 25000)$. We used the first set of data to recalibrate the model using Platt scaling, the second to select the binning scheme $\mathcal{B}$ so that each bin contains an equal number of points, and the third to measure the binned calibration error \pl{that's a huge number of samples! maybe have a remark that says you normally wouldn't do this if you trust your calibrator}. We calculated $90\%$ confidence intervals for the binned calibration error using 1,000 bootstrap resamples and performed the same experiment with varying numbers of bins.

Figure~\ref{fig:imagenet_lower_bound} shows that as we increase the number of bins on ImageNet, the measured calibration error is higher and this is statistically significant\pl{make this sound less simplistic}. For example, if we use 15 bins as in~\cite{guo2017calibration}, we would think the calibration error is around 0.02 when in reality the calibration error is at least twice as high. Figure~\ref{fig:cifar_10_lower_bound} shows similar findings for CIFAR-10, and in Appendix~\ref{sec:appendix_platt_experiments} we show that our findings hold even if we use the $\ell_1$ calibration error (ECE) and alternative binning strategies.

\begin{figure}
     \centering
     \begin{subfigure}[b]{0.4\textwidth}
         \centering
         \includegraphics[width=\textwidth]{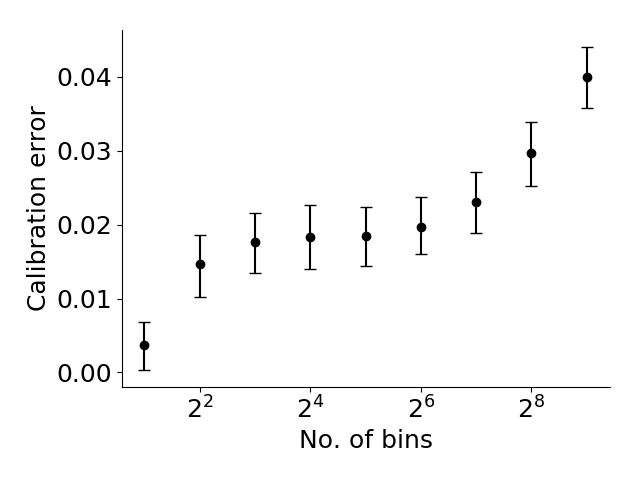}
         \caption{ImageNet}
         \label{fig:imagenet_lower_bound}
     \end{subfigure}
     \hfill
     \begin{subfigure}[b]{0.4\textwidth}
         \centering
         \includegraphics[width=\textwidth]{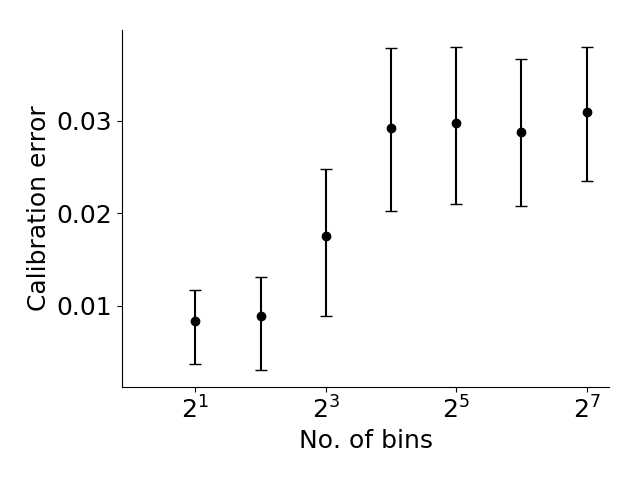}
         \caption{CIFAR-10}
         \label{fig:cifar_10_lower_bound}
     \end{subfigure}
        \caption{
          Binned calibration errors of a recalibrated VGG-net model on CIFAR-10 and ImageNet \pl{switch order; stick with the same canonical ordering of datasets} with $90\%$ confidence intervals. The binned calibration error increases as we increase the number of bins. This suggests that binning cannot be reliably used to measure the true calibration error.
        \pl{can we fix the axes to be the same across the two plots; also, maybe just write out 2, 4, 8, 16, etc. - easier to read}
        }
        \label{fig:lower_bounds}
\end{figure}






\section{\Ourcal{}}
\label{sec:calibrating_models}

Section~\ref{sec:challenges-measuring} shows that the problem with scaling methods is we cannot estimate their calibration error. The upside of scaling methods is that if the function family\pl{assume $O(1)$ parameters} has at least one function that can achieve calibration error $\epsilon$, they require $O(1/\epsilon^2)$ samples to reach calibration error $\epsilon$, while histogram binning requires $O(B/\epsilon^2)$ samples. Can we devise a method that is sample efficient to calibrate and one where it's possible to estimate the calibration error?\pl{at this point, maybe skip the rhetorical question and streamline into: we propose \ourcal{} which is both ...}
To achieve this, we propose \ourcal{} (Figure~\ref{fig:var_red_binning}) where we first fit a scaling function, and then bin the outputs of the scaling function.

\subsection{Algorithm}

We split the recalibration data $T$ of size $n$ into 3 sets: $T_1$, $T_2$, $T_3$. \Ourcal{}, illustrated in Figure~\ref{fig:variance_reduced_illustration}, outputs $\hat{g_{\mathcal{B}}}$ such that $\hat{g_{\mathcal{B}}} \circ f$ has low calibration error:

\textbf{Step 1 (Function fitting):} Select $g = \argmin_{g \in \mathcal{G}} \sum_{(z, y) \in T_1} (y - g(z))^2$.
\pl{$g$ is used in both inside argmin and out; should be $\hat g$ outside?}
\pl{say this is Platt scaling}

\textbf{Step 2 (Binning scheme construction):} We choose the bins so that an equal number of $g(z_i)$ in $T_2$\pl{make this part more precise} land in each bin $I_j$ for each $j \in \{1, \dots, B\}$---this uniform-mass binning scheme~\cite{zadrozny2001calibrated} as opposed to equal-width binning~\cite{guo2017calibration} is essential for being able to estimate the calibration error in Section~\ref{sec:verifying_calibration}.

\textbf{Step 3 (Discretization):} Discretize $g$, by outputting the average $g$ value in each bin---these are the gray circles in Figure~\ref{fig:var_red_binning}. Let $\mu(S) = \frac{1}{|S|} \sum_{s \in S} s$ denote the mean of a set of values $S$.
Let $\hat{\mu}[j] = \mu(\{ g(z_i) \; | \; g(z_i) \in I_j \wedge (z_i, y_i) \in T_3 \})$ be the mean of the $g(z_i)$ values that landed in the $j$-th bin.
Recall that if $z \in I_j$, \pl{use $\mathcal B$ as discussed before; this notation should appear in step 2 to connect things} $\beta(z) = j$ is the interval z lands in.
Then we set $\hat{g_{\mathcal{B}}}(z) = \hat{\mu}[\beta(g(z))]$---that is we simply output the mean value in the bin that $g(z)$ falls in.

\pl{why is $g$ not hatted and $\hat{g_\mathcal{B}}$ hatted? make consistent}

\ak{one reason is that I use $g_{\mathcal{B}}$ below. This is the binned version of $g$, assuming infinite data for binning. $\hat{g_\mathcal{B}}$ is the empirically binned version of $g$. The hat refers to the empirical binning. What do you think?}

\pl{sure, but then should be $\hat g$ for all of the above here}

\subsection{Analysis}

We now show that \ourcal{} requires $O(B + 1/\epsilon^2)$ samples to calibrate, and in Section~\ref{sec:verifying_calibration} we show that we can efficiently measure its calibration error. For the main theorem, we make some standard regularity assumptions on $\mathcal{G}$ which we formalize in Appendix~\ref{sec:calibrating_models_appendix}. Our result is a generalization result---we show that if $\mathcal{G}$ contains some $g^*$ with low calibration error, then our method is \emph{at least} almost as well-calibrated as $g^*$ given sufficiently many samples.

\newcommand{\finalCalibText}{
Assume regularity conditions on $\mathcal{G}$ (finite parameters, injectivity, Lipschitz-continuity, consistency, twice differentiability). Given $\delta \in (0, 1)$, there is a constant $c$ such that \emph{for all} $B, \epsilon > 0$, with $n \geq c\Big(B\log{B} + \frac{\log{B}}{\epsilon^2}\Big)$ samples, \ourcal{} finds $\hat{g}_{\mathcal{B}}$ with $(\ce(\hat{g}_{\mathcal{B}}))^2 \leq 2\min_{g \in \mathcal{G}}(\ce(g))^2 + \epsilon^2$, with probability $\geq 1 - \delta$.
}

\begin{theorem}[Calibration bound]
\label{thm:final-calib}
\finalCalibText{}
\end{theorem}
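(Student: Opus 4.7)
The plan is to decompose $\ce(\hat{g}_{\mathcal{B}})^2$ into the calibration error inherited from the fitted scaling function $g$ and the sampling noise from estimating each bin's mean on $T_3$. The key structural observation is that $\hat{g}_{\mathcal{B}}$ is constant on each preimage $\{g(Z) \in I_j\}$, so $\expect[Y \mid \hat{g}_{\mathcal{B}}(Z)] = r_j := \expect[Y \mid g(Z) \in I_j]$. Writing $p_j = \prob(g(Z) \in I_j)$, $\mu_j = \expect[g(Z) \mid g(Z) \in I_j]$, and $\Delta_j = \hat{\mu}[j] - \mu_j$, the inequality $(a+b)^2 \leq 2a^2 + 2b^2$ gives
\[
\ce(\hat{g}_{\mathcal{B}})^2 = \sum_j p_j (\hat{\mu}[j] - r_j)^2 \leq 2 \sum_j p_j (\mu_j - r_j)^2 + 2 \sum_j p_j \Delta_j^2 = 2\,\ce(g_{\mathcal{B}})^2 + 2 \sum_j p_j \Delta_j^2.
\]
Proposition~\ref{prop:bin_low_bound} yields $\ce(g_{\mathcal{B}})^2 \leq \ce(g)^2$, so it suffices to establish $\ce(g)^2 \leq \min_{g' \in \mathcal{G}} \ce(g')^2 + \epsilon^2/4$ and $\sum_j p_j \Delta_j^2 \leq \epsilon^2/4$, each with probability at least $1 - \delta/2$.

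For the scaling step, I would invoke the regularity assumptions on $\mathcal{G}$: finite parameters with twice-differentiability yield the standard parametric rate $\|\hat\theta - \theta_{\mathcal{G}}^*\|^2 = O(1/|T_1|)$ (with high probability) for the empirical squared-loss minimizer around its population counterpart, and Lipschitz continuity together with consistency translate parameter closeness into closeness of $\ce^2$, so choosing $|T_1| = \Omega(1/\epsilon^2)$ delivers the required excess calibration bound. A subtlety is that squared-loss and calibration-error minimization are not \emph{a priori} identical objectives, but under the stated smoothness the two population minimizers coincide up to lower-order terms absorbed into the $\epsilon^2/4$ slack.

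For the variance term, I would first use $|T_2| = \Omega(B \log(B/\delta))$ samples together with a standard quantile concentration argument to guarantee, simultaneously in $j$, that every empirical-quantile bin has population mass $p_j = \Theta(1/B)$. Conditional on this event and on $T_2$, the samples in $T_3$ are i.i.d., so a Chernoff bound gives $n_j = \Theta(|T_3|/B)$ uniformly. The crucial variance budget is $\sigma_j^2 := \textup{Var}(g(Z) \mid g(Z) \in I_j) \leq w_j^2/4$ for $w_j$ the width of $I_j$, so $\sum_j \sigma_j^2 \leq (\max_j w_j)(\sum_j w_j)/4 \leq 1/4$. Combined with a Bernstein-type bound and a union bound across the $B$ bins, this yields $\sum_j p_j \Delta_j^2 \leq O(\log(B/\delta)/|T_3|)$ with probability at least $1 - \delta/2$, which is $\leq \epsilon^2/4$ once $|T_3| = \Omega(\log(B/\delta)/\epsilon^2)$. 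Summing the three sample budgets and absorbing the $\log(1/\delta)$ into $c$ gives the claimed $n = \Omega(B \log B + \log B/\epsilon^2)$.

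The main obstacle is bridging squared-loss ERM to a calibration-error guarantee in the scaling step, which is where the parametric and smoothness regularity assumptions do real work. A second technical point worth highlighting is the variance identity $\sum_j \sigma_j^2 \leq 1/4$ in place of the naive $\sum_j \sigma_j^2 \leq B/4$: exploiting that bin widths partition $[0,1]$ is what prevents an extra factor of $B$ from reappearing in the bin-estimation term and is ultimately what makes the scaling-binning calibrator strictly more sample-efficient than histogram binning.
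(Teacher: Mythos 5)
Your overall architecture matches the paper's proof: split the error into (i) the excess calibration error of the fitted scaling function $g$, (ii) the gap between $g$ and its population-binned version $g_{\mathcal{B}}$ (Proposition~\ref{prop:bin_low_bound}), and (iii) the estimation error $\sum_j p_j \Delta_j^2 = \|\hat{g}_{\mathcal{B}} - g_{\mathcal{B}}\|_2^2$. Your $(a+b)^2 \le 2a^2 + 2b^2$ step is the paper's triangle-inequality-plus-AM-GM step, and your treatment of the variance term --- well-balancedness of the empirical-quantile bins built from $T_2$, per-bin sub-Gaussian concentration with variance proxy $w_j^2/(4n_j)$, and the crucial cancellation $\sum_j w_j^2 \le \sum_j w_j = 1$ --- is exactly Lemmas~\ref{lem:well-balanced} and~\ref{thm:empirical-binning}. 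One small caveat: the identity $\expect[Y \mid \hat{g}_{\mathcal{B}}(Z)] = r_j$ silently assumes the $B$ estimated values $\hat{\mu}[j]$ are pairwise distinct; if two bins collide, conditioning on $\hat{g}_{\mathcal{B}}(Z)$ merges them, and you need the paper's short Jensen argument showing that merging can only decrease the calibration error.

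The genuine gap is in the scaling step. You obtain $\|\hat\theta - \theta^*\|^2 = O(1/|T_1|)$ and then invoke ``Lipschitz continuity'' to transfer parameter closeness to closeness of $\ce^2$. A first-order Lipschitz transfer gives only $\lvert \ce(g_{\hat\theta})^2 - \ce(g_{\theta^*})^2 \rvert = O(\|\hat\theta - \theta^*\|) = O(1/\sqrt{|T_1|})$, which would force $|T_1| = \Omega(1/\epsilon^4)$ rather than the claimed $O(1/\epsilon^2)$. To get the right rate you must use that $\theta^*$ is an interior minimizer, so $\nabla L(\theta^*) = 0$ and the excess loss is a quadratic form, $L(\hat\theta) - L(\theta^*) \lesssim \|\nabla^2 L(\theta^*)\|_{op}\, \|\hat\theta - \theta^*\|^2 = O(1/|T_1|)$. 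Moreover, the objective for which $\theta^*$ is stationary is the MSE, not $\ce^2$; the bridge you flag as ``the main obstacle'' is resolved not by smoothness but by the decomposition $\mse(g) = \ce(g)^2 + \expect[(\expect[Y \mid g(Z)] - Y)^2]$ together with injectivity of every $g \in \mathcal{G}$, which makes the refinement term identical across $\mathcal{G}$. Hence excess MSE equals excess squared calibration error exactly (not merely ``up to lower-order terms''), and the two population minimizers coincide. As written, your argument neither delivers the stated sample complexity for the scaling step nor closes the MSE-versus-calibration mismatch it correctly identifies.
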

\pl{hope the $L_2$ notation can just be removed}

\newtheorem*{finalCalib}{Restatement of Theorem~\ref{thm:final-calib}}


Note that our method can potentially be better calibrated than $g^*$, because we bin the outputs of the scaling function, which reduces its calibration error (Proposition~\ref{prop:bin_low_bound}). While binning worsens the sharpness and can increase the mean-squared error of the model, in Proposition~\ref{prop:mse-finite-binning} we show that if we use many bins, binning the outputs cannot increase the mean-squared error by much.

We prove Theorem~\ref{thm:final-calib} in Appendix~\ref{sec:calibrating_models_appendix} but give a sketch here. Step 1 of our algorithm is Platt scaling, which simply fits a function $g$ to the data---standard results in asymptotic statistics show that $g$ converges in $O(\frac{1}{\epsilon^2})$ samples.
\pl{make it clear whether uniform-mass bins are needed for this theorem}

Step 3, where we bin the outputs of $g$, is the main step of the algorithm. If we had infinite data, Proposition~\ref{prop:bin_low_bound} showed that the binned version $g_{\bins{}}$ has lower calibration error than $g$, so we would be done. However we do not have infinite data---the core of our proof is to show that the empirically binned $\hat{g_{\bins{}}}$ is within $\epsilon$ of $g_{\bins{}}$ in $O(B + \frac{1}{\epsilon^2})$ samples, instead of the $O(B + \frac{B}{\epsilon^2})$ samples required by histogram binning. The intuition is in Figure~\ref{fig:variance_reduced_illustration}---the $g(z_i)$ values in each bin (gray circles in Figure~\ref{fig:var_red_binning}) are in a narrower range than the $y_i$ values (black crosses in Figure~\ref{fig:hist_binning}) and thus have lower variance so when we take the average we incur less estimation error. The perhaps surprising part is that we are estimating $B$ numbers with $\widetilde{O}(1/\epsilon^2)$ samples. In fact, there may be a small number of bins where the $g(z_i)$ values are not in a narrow range, but our proof still shows that the overall estimation error is small.

Our uniform-mass binning scheme allows us to estimate the calibration error efficiently (see Section~\ref{sec:verifying_calibration}), unlike for scaling methods where we cannot estimate the calibration error (Section~\ref{sec:challenges-measuring}).
Recall that we chose our bins so that each bin has an equal proportion of points in the recalibration set.
Lemma~\ref{lem:well-balanced} shows that this property approximately holds in the population as well.
This allows us to estimate the calibration error efficiently (Theorem~\ref{thm:final-ours}).

\begin{definition}[Well-balanced binning]
Given a binning scheme $\mathcal{B}$ of size $B$, and $\alpha \geq 1$. We say $\mathcal{B}$ is $\alpha$-well-balanced if for all $j$,
  \[ \frac{1}{\alpha B} \leq \prob(Z \in I_j) \leq \frac{\alpha}{B}\]
\end{definition}

\newcommand{\wellBalancedText}{
  For universal constant $c$, if $n \geq cB\log{\frac{B}{\delta}}$, with probability at least $1 - \delta$, the binning scheme $\bins{}$ we chose is 2-well-balanced.
}

\begin{lemma}
\label{lem:well-balanced}
\wellBalancedText{}
\end{lemma}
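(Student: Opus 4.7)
The plan is to reduce the question to spacings of uniform order statistics and then apply a multiplicative Chernoff bound via the Beta-Binomial duality.

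First, I would perform a CDF change of variables. Let $F$ be the (continuous, under the regularity on $\mathcal{G}$) CDF of $g(Z)$. Then $W_i := F(g(Z_i))$ for $i$ in $T_2$ are i.i.d.\ uniform on $[0,1]$, and pushing the algorithm's data-dependent bin boundaries through $F$ yields intervals $\tilde I_j \subseteq [0,1]$ each still containing $n/B$ of the $W_i$. By construction $\prob(Z \in \hat I_j)$ equals the length of $\tilde I_j$, so without loss of generality we may assume that the $g(Z_i)$ are themselves uniform on $[0,1]$, in which case
\[
  p_j := \prob(Z \in \hat I_j) = U_{(jn/B)} - U_{((j-1)n/B)},
\]
the gap between two consecutive uniform order statistics.

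Second, I would identify the exact law of $p_j$. The $n+1$ uniform spacings are jointly $\mathrm{Dirichlet}(1,\dots,1)$, so the sum of any $m$ of them is $\mathrm{Beta}(m,\,n+1-m)$; thus $p_j \sim \mathrm{Beta}(m,\,n+1-m)$ with $m = n/B$. The Beta-Binomial duality
\[
  \prob\!\bigl(\mathrm{Beta}(m,\,n+1-m) \le t\bigr) = \prob\!\bigl(\mathrm{Bin}(n,\,t) \ge m\bigr)
\]
then converts the two failure events into binomial tail events:
\[
  \prob\!\bigl(p_j > 2/B\bigr) = \prob\!\bigl(\mathrm{Bin}(n,\,2/B) \le n/B - 1\bigr),
  \qquad
  \prob\!\bigl(p_j < 1/(2B)\bigr) = \prob\!\bigl(\mathrm{Bin}(n,\,1/(2B)) \ge n/B\bigr).
\]
In both cases the threshold $n/B$ differs from the binomial mean ($2n/B$ or $n/(2B)$, respectively) by a fixed multiplicative factor, so the multiplicative Chernoff bound gives each probability at most $\exp(-c' n/B)$ for an absolute constant $c' > 0$ (e.g.\ $c' = 1/6$ works). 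A union bound over $j \in \{1,\dots,B\}$ and the two directions gives failure probability at most $2B\exp(-c' n/B)$; requiring this to be $\le \delta$ yields $n \ge c B \log(B/\delta)$.

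The main obstacle, and the reason a $\log B$ factor rather than a $B$ factor appears in $n$, is to avoid any naive uniform-convergence argument. A DKW-style bound $\sup_t|\hat F(t) - F(t)| = O(\sqrt{\log(1/\delta)/n})$ is additive and would force $n = \Omega(B^2)$ merely to resolve the $1/B$-scale of a single bin. The point of routing through the exact Beta spacing law is that the failure event here is \emph{multiplicative}---we only care whether $p_j$ is within a constant factor of $1/B$---so the dual binomial sees a tail at constant relative deviation from its mean, and Chernoff decays at rate $n/B$ rather than $n/B^2$. The Chernoff computation and the union bound themselves are routine once this reduction is in place.
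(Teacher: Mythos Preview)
Your proof is correct and takes a genuinely different route from the paper's. The paper proves the lemma by a discretization (bracketing) argument: it pre-partitions $[0,1]$ into $10B$ fixed small intervals each of population probability $1/(10B)$, uses Bernstein plus a union bound to show each small interval's empirical frequency is within $1/(100B)$ of its true probability, and then sandwiches every data-dependent bin $I_j$ between unions of these small intervals to conclude $\tfrac{1}{2B}\le P(I_j)\le\tfrac{2}{B}$. Your approach instead pushes everything through the probability integral transform, reducing to uniform order statistics, identifies each $p_j$ exactly as a $\mathrm{Beta}(n/B,\,n+1-n/B)$ spacing, and converts the two-sided multiplicative deviation into binomial tails via the Beta--Binomial identity before applying Chernoff. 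Both arguments land at $n\ge cB\log(B/\delta)$ for the same reason you articulate: the relevant deviation is multiplicative, so Chernoff decays like $e^{-cn/B}$ rather than the additive $e^{-cn/B^2}$ a DKW/VC argument would give.

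Two minor points to tidy. First, your formula $p_j = U_{(jn/B)} - U_{((j-1)n/B)}$ with $U_{(0)}=0$ leaves the last bin ending at $U_{(n)}$ rather than at $1$; since the bins partition $[0,1]$, $p_B$ is actually the sum of $n/B+1$ spacings and is $\mathrm{Beta}(n/B+1,\,n-n/B)$. This shifts the parameters by one and does not affect the Chernoff conclusion, but it is worth stating. Second, the continuity of the CDF of $g(Z)$ that you invoke for the probability integral transform does not follow from the listed regularity assumptions on $\mathcal{G}$ alone (those constrain $g$, not the law of $Z$); you are implicitly assuming $Z=f(X)$ has no atoms, which is reasonable but should be made explicit. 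The paper's bracketing argument has the mild advantage of not passing through the integral transform, though it too implicitly needs enough continuity to construct the $10B$ equal-probability sub-intervals.
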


\newtheorem*{wellBalanced}{Restatement of Lemma~\ref{lem:well-balanced}}

While the way we choose bins is not novel~\cite{zadrozny2001calibrated}, we believe the guarantees around it are---not all binning schemes in the literature allow us to efficiently estimate the calibration error; for example, the binning scheme in~\cite{guo2017calibration} does not. Our proof of Lemma~\ref{lem:well-balanced} is in Appendix~\ref{sec:calibrating_models_appendix}. We use a discretization argument to prove the result---this gives a tighter bound than applying Chernoff bounds or a standard VC dimension argument which would tell us we need $O(B^2\log{\frac{B}{\delta}})$ samples. 

\subsection{Experiments}

Our experiments on CIFAR-10 and ImageNet show that in the low-data regime, for example when we use $\leq 1000$ data points to recalibrate, \ourcal{} produces models with much lower calibration error than histogram binning. The uncalibrated model outputs a confidence score associated with each class. We recalibrated each class separately as in~\cite{zadrozny2002transforming}, using $B$ bins per class, and evaluated calibration using the marginal calibration error (Definition~\ref{dfn:marginal-ce}).

We describe our experimental protocol for CIFAR-10.
The CIFAR-10 validation set has 10,000 data points. We sampled, with replacement, a recalibration set of 1,000 points. We ran either \ourcal{} (we fit a sigmoid in the function fitting step) or histogram binning and measured the marginal calibration error on the entire set of 10K points.
We repeated this entire procedure 100 times and computed mean and 90\% confidence intervals, and we repeated this varying the number of bins $B$. Figure~\ref{fig:marginal_calibrator_comparison_cifar} shows that \ourcal{} produces models with lower calibration error, for example $35\%$ lower calibration error when we use 100 bins per class.
\pl{what method do we use to estimate calibration error, plugin? should say this and say that we could use a fancier method (Section 5)}

Using more bins allows a model to produce more fine-grained predictions, e.g.~\cite{brocker2012empirical} use $B = 51$ bins, which improves the quality of predictions as measured by the mean-squared error---Figure~\ref{fig:cifar_calibrator_cmp_mse_ce} shows that our method achieves better mean-squared errors for any given calibration constraint.
More concretely, the figure shows a scatter plot of the mean-squared error and squared calibration error for histogram binning and \ourcal{} when we vary the number of bins. For example, if we want our models to have a calibration error $\leq 0.02 = 2\%$ we get a $9\%$ lower mean-squared error. In Appendix~\ref{sec:calibrating_models_appendix_experiments} we show that we get \emph{5x lower top-label calibration error on ImageNet}, and give further experiment details.
\pl{bit confused - this paragraph seems to be mostly about using more bins to lower MSE, but then we're talking about lower calibration error again;
based on the Fig 3b, what I gathered is that simply because calibration error is lower for scaling-binning, for the sample calibration error,
you can use more bins, which corresponds to a lower MSE
}

\textbf{Validating theoretical bounds}: In Appendix~\ref{sec:calibrating_models_appendix_experiments} we run synthetic experiments to validate the bound in Theorem~\ref{thm:final-calib}. In particular, we show that if we fix the number of samples $n$, and vary the number of bins $B$, the squared calibration error for \ourcal{} is nearly constant but for histogram binning increases nearly linearly with $B$. For both methods, the squared calibration error decreases approximately as $1/n$---that is when we double the number of samples the squared calibration error halves.


\begin{figure}
  \centering
  \centering
     \begin{subfigure}[b]{0.54\textwidth}
         \centering
         \includegraphics[width=0.8\textwidth]{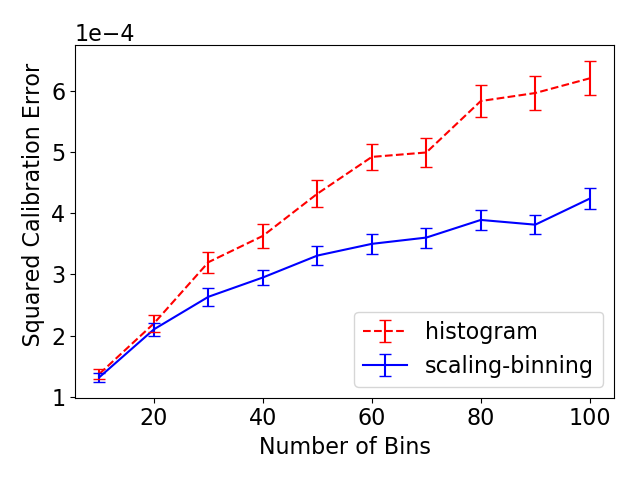}
         \caption{Effect of number of bins on squared calibration error. \pl{make vertical axis start at 0}}
         \label{fig:marginal_calibrator_comparison_cifar}
     \end{subfigure}
     \hfill
     \begin{subfigure}[b]{0.44\textwidth}
         \centering
         \includegraphics[width=\textwidth]{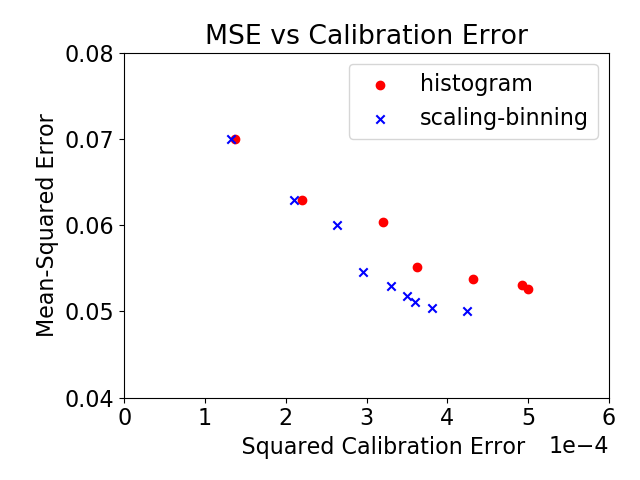}
         \caption{Tradeoff between calibration and MSE. \pl{get rid of title of plot since we already have caption}}
         \label{fig:cifar_calibrator_cmp_mse_ce}
     \end{subfigure}
  \caption{
  (\textbf{Left}) Recalibrating using 1,000 data points on CIFAR-10, \ourcal{} achieves lower squared calibration error than histogram binning, especially when the number of bins $B$ is large.
  (\textbf{Right}) For a fixed calibration error, \ourcal{} allows us to use more bins. This results in models with more predictive power which can be measured by the mean-squared error. Note the vertical axis range is $[0.04, 0.08]$ to zoom into the relevant region.
  \pl{axis labels say L2 squared... in text we say $L^2$ - this is confusing; make consistent}
  }
  \label{fig:nan2}
\end{figure}

\section{Verifying calibration}
\label{sec:verifying_calibration}

\newcommand{\sqrterror}[0]{\ensuremath{\mathcal{E}}}
\newcommand{\error}[0]{\ensuremath{\mathcal{E}^2}}
\newcommand{\errorEst}[0]{\ensuremath{\hat{\mathcal{E}}^2}}
\newcommand{\pluginEst}[0]{\ensuremath{\hat{\mathcal{E}}_{\textup{pl}}^2}}
\newcommand{\debiasedEst}[0]{\ensuremath{\hat{\mathcal{E}}_{\textup{db}}^2}}
\newcommand{\piSmallBound}[0]{\ensuremath{\frac{12}{n}\log{\frac{2B}{\delta}}}}

\tm{I wonder we should have subscript for $\debiasedEst$. $\ensuremath{\hat{E}_{\textup{cxl}}^2}$? Sounds a bit verbose..} \ak{Ah are you suggesting we should, or should not, have these subscripts? I removed it for the debiased estimator, but kept it for plugin.}
Before deploying our model we would like to check that it has calibration error below some desired threshold $\sqrterror{}$\pl{is this actually how we'd use our technique? I thought the measuring calibration error is just an offline thing we'd to do validate the calibration method; otherwise that's a lot of samples we need}. In this section we show that we can accurately estimate the calibration error of binned models, if the binning scheme is 2-well-balanced. Recent work in machine learning uses a plugin estimate for each term in the calibration error~\cite{nguyen2015posterior, hendrycks2019anomaly, kuleshov2015calibrated, hendrycks2019pretraining}. Older work in meteorology~\cite{brocker2012empirical, ferro2012bias} notices that this is a biased estimate, and proposes a \emph{debiased} estimator that subtracts off an approximate correction term to reduce the bias. Our contribution is to show that the debiased estimator is more accurate: while the plugin estimator requires samples proportional to $B$ to estimate the calibration error, the debiased estimator requires samples proportional to $\sqrt{B}$. Note that we show an \emph{improved sample complexity}---prior work only showed that the naive estimator is biased. In Appendix~\ref{sec:verifying_calibration_appendix_experiments} we also propose a way to debias the $\ell_1$ calibration error (ECE), and show that we can estimate the ECE more accurately on CIFAR-10 and ImageNet.

Suppose we wish to measure the squared calibration error $\error{}$ of a binned model $f : \mathcal{X} \to S$ where $S \subseteq [0, 1]$ and $|S| = B$. Suppose we get an evaluation set $T_n = \{(x_1, y_1), \dots, (x_n, y_n)\}$.\pl{notation clashes with the $T_1,T_2,T_3$ from before, which I think we should change} Past work typically estimates the calibration error by directly\pl{again, not meaningful} estimating each term from samples:

\begin{restatable}[Plugin estimator]{definition}{pluginDfn}
\label{dfn:plugin-estimator}
  Let $L_s$ denote the $y_j$ values where the model outputs $s$: $L_s = \{ y_j \; | \; (x_j, y_j) \in T_n\wedge f(x_j) = s \}$. Let $\hat{p}_s$ be the estimated probability of $f$ outputting $s$:
$\hat{p}_s = \frac{|L_s|}{n}$.

Let $\hat y_s$ be the empirical average of $Y$ when the model outputs $s$: $\hat y_s = \sum_{y \in L_s} \frac{y}{|L_s|}$.

  The plugin estimate for the squared calibration error is the weighted squared difference between $\hat y_s$ and $s$:
\[ \pluginEst{} = \sum_{s \in S} \hat{p}_s (s - \hat y_s)^2 \]
  \pl{number all equations}
\end{restatable}
\pl{I feel like we should be sharing notation...when we talk about binning, it's the same stuff}

Alternatively, \cite{brocker2012empirical, ferro2012bias} propose to subtract an approximation of the bias from the estimate:

\begin{restatable}[Debiased estimator]{definition}{cancelingDfn}
  The debiased estimator for the squared calibration error is:
\[ \debiasedEst{} = \sum_{s \in S} \hat{p}_s \Big[ (s - \hat{y}_s)^2 - \frac{\hat{y}_s(1 - \hat{y}_s)}{\hat{p}_sn-1} \Big] \]
  \pl{number equations}
  \pl{give a name (put it behind a macro) $\hat E_\text{db}$}
\end{restatable}

We are interested in analyzing the number of samples required to estimate the calibration error within a constant multiplicative factor, that is to give an estimate $\errorEst{}$ such that $\lvert \errorEst{} - \error{} \rvert \leq \frac{1}{2}\error{}$ (where $\frac{1}{2}$ can be replaced by any constant $r$ with $0 < r < 1$). Our main result is that the plugin estimator requires $\widetilde{O}(\frac{B}{\error{}})$ samples (Theorem~\ref{thm:final-plugin}) while the debiased estimator requires $\widetilde{O}(\frac{\sqrt{B}}{\error{}})$ samples (Theorem~\ref{thm:final-ours}).

\newcommand{\finalPluginText}{
Suppose we have a binned model with squared calibration error $\error{}$, where the binning scheme is 2-well-balanced, that is for all $s \in S$, $\prob(f(X) = s) \geq \frac{1}{2B}$.
  \footnote{We do not need the upper bound of the 2-well-balanced property. \pl{can we just define the property with the one-sided bound; I don't think the upper bound is every useful}}
  If $n \geq c\frac{B}{\error{}}\log{\frac{B}{\delta}}$ for some universal constant $c$, then for the plugin estimator, we have $\frac{1}{2} \error{} \leq \pluginEst{} \leq \frac{3}{2} \error{}$ with probability at least $1 - \delta$.
}

\begin{theorem}[Plugin estimator bound]
\label{thm:final-plugin}
\finalPluginText{}
\end{theorem}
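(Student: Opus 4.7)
Let $p_s = \prob(f(X) = s)$ and $y_s = \expect[Y \mid f(X) = s]$, so that $\error{} = \sum_{s\in S} p_s(s - y_s)^2$. My plan is to decompose
\[
\pluginEst{} - \error{} = \underbrace{\sum_s (\hat p_s - p_s)(s - \hat y_s)^2}_{T_1} + \underbrace{\sum_s p_s\bigl[(s - \hat y_s)^2 - (s - y_s)^2\bigr]}_{T_2},
\]
and to show $|T_1|, |T_2| \leq \tfrac{1}{4}\error{}$ under the sample-size hypothesis, which yields $|\pluginEst{} - \error{}| \leq \tfrac{1}{2}\error{}$ and hence the two-sided bound. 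The approach is to control $\hat p_s - p_s$ and $\hat y_s - y_s$ on each bin by standard concentration, and then apply Cauchy--Schwarz across bins so that the bounds collapse into products involving $\sqrterror{}$.

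First I would establish the per-bin concentration. The 2-well-balanced assumption gives $p_s \geq 1/(2B)$, so under the hypothesis $np_s$ is at least of order $\log(B/\delta)/\error{} \geq \log(B/\delta)$. Bernstein's inequality on the indicators $\mathbf{1}[f(X_i) = s]$ gives $|\hat p_s - p_s| \leq O\!\left(\sqrt{p_s \log(B/\delta)/n} + \log(B/\delta)/n\right)$ with probability $\geq 1 - \delta/(3B)$, which in particular forces $|L_s| \geq np_s/2$ when the constant $c$ is large enough. Conditioning on this event and applying Hoeffding to the Bernoulli labels inside bin $s$ gives $|\hat y_s - y_s| \leq O(\sqrt{\log(B/\delta)/(np_s)})$ with probability $\geq 1 - \delta/(3B)$. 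A union bound over $s \in S$ makes both bounds hold simultaneously.

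For $T_2$, factoring $(s - \hat y_s)^2 - (s - y_s)^2 = (y_s - \hat y_s)(2s - y_s - \hat y_s)$ and using $|2s - y_s - \hat y_s| \leq 2|s - y_s| + |y_s - \hat y_s|$, Cauchy--Schwarz yields
\[
|T_2| \leq 2\,\sqrterror{}\sqrt{\textstyle\sum_s p_s(\hat y_s - y_s)^2} \;+\; \sum_s p_s(\hat y_s - y_s)^2.
\]
Plugging in the per-bin Hoeffding bound and summing gives $\sum_s p_s(\hat y_s - y_s)^2 \leq O(B\log(B/\delta)/n) \leq O(\error{}/c)$, so $|T_2| \leq O(\error{}/\sqrt{c})$, which is $\leq \tfrac{1}{4}\error{}$ for $c$ a large enough constant. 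For $T_1$, use $(s - \hat y_s)^2 \leq 2(s - y_s)^2 + 2(\hat y_s - y_s)^2$; the main term $\sum_s |\hat p_s - p_s|(s - y_s)^2$ is bounded by $O(\sqrt{\log(B/\delta)/n}) \cdot \sum_s \sqrt{p_s}(s-y_s)^2 \leq O(\sqrt{B\log(B/\delta)/n}) \cdot \sqrterror{}$, using a second Cauchy--Schwarz $\sum_s \sqrt{p_s}(s-y_s)^2 \leq \sqrt{B}\,\sqrterror{}$ (since $(s-y_s)^2 \leq 1$). Under the hypothesis this is $\leq \tfrac{1}{8}\error{}$; the cross term $\sum_s |\hat p_s - p_s|(\hat y_s - y_s)^2$ is lower order by the same concentration bounds.

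The main obstacle will be forcing the bounds to come out proportional to the \emph{product} $\sqrterror{}\cdot \sqrt{B\log(B/\delta)/n}$ rather than to $\sqrt{B\log(B/\delta)/n}$ alone: this is precisely what yields the inverse-$\error{}$ (rather than inverse-$\error{}^2$) sample complexity in the theorem. A naive absolute bound $(s - \hat y_s)^2 \leq 1$ decouples the two factors and loses the extra $\sqrterror{}$, giving only $n \geq \Omega(B\log(B/\delta)/\error{}^2)$. The double use of Cauchy--Schwarz above --- once to pair $\sqrt{\sum_s p_s(\hat y_s - y_s)^2}$ against $\sqrterror{}$, and once to turn $\sum_s \sqrt{p_s}(s-y_s)^2$ into $\sqrt{B}\cdot \sqrterror{}$ --- is what extracts the second factor of $\sqrterror{}$ needed to reach the advertised rate.
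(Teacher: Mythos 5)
Your proposal is correct and reaches the stated bound, but it takes a somewhat different route from the paper's proof, so a comparison is worthwhile. The paper decomposes $\hat{\mathcal{E}}_{\textup{pl}}^2 = \sum_i \hat{p}_i\bigl[e_i - (\hat{y}_i - y_i^*)\bigr]^2$ into three terms $(P1),(P2),(P3)$ (Lemma~\ref{lem:plugin_decomp}); your two-term split $T_1+T_2$ is an equivalent regrouping, and your bounds on the diagonal pieces ($(P1)$ via multiplicative concentration of $\hat p_s$, $(P3)$ via per-bin Hoeffding summed over $B$ bins, each contributing $O(B\log(B/\delta)/n)\le O(\mathcal{E}^2/c)$) match the paper's. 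The real divergence is in the cross term $\sum_s p_s e_s(\hat y_s - y_s)$: the paper conditions on the bin assignments $Z$, uses that the $\hat y_s - y_s$ are independent and sub-Gaussian with parameter $1/(4\hat p_s n)$ across bins, and gets $|(P2)| \lesssim \sqrt{\mathcal{E}^2\log(1/\delta)/n}$ by exploiting cancellation; you instead apply Cauchy--Schwarz and pay $\mathcal{E}\cdot\sqrt{B\log(B/\delta)/n}$, a factor $\sqrt{B}$ worse. For this theorem that loss is harmless --- under $n \ge cB\log(B/\delta)/\mathcal{E}^2$ both bounds are $O(\mathcal{E}^2/\sqrt{c})$, and the $(P3)$-type term of order $B/n$ dominates either way --- and your approach has the advantage of avoiding the conditioning argument entirely. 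The tradeoff is that the cancellation-across-bins technique you bypassed is exactly what powers the $\sqrt{B}$ improvement for the debiased estimator (Lemma~\ref{lem:c3_bound}), so your method would not extend to Theorem~\ref{thm:final-ours}. Your closing remark about needing the product structure $\mathcal{E}\cdot\sqrt{B\log(B/\delta)/n}$ rather than $\sqrt{B\log(B/\delta)/n}$ alone is correct in spirit, though note the paper achieves the inverse-$\mathcal{E}^2$ sample complexity primarily because the dominant bias term $(P3)$ scales as $B/n$ with no $\mathcal{E}$ dependence at all; the product structure only needs to be extracted from the cross term, which both proofs do (by different means).
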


\newtheorem*{finalPlugin}{Restatement of Theorem~\ref{thm:final-plugin}}

\newcommand{\finalCancelingText}{
Suppose we have a binned model with squared calibration error $\error{}$ and for all $s \in S$, $\prob(f(X) = s) \geq \frac{1}{2B}$. If $n \geq c\frac{\sqrt{B}}{\error{}}\log{\frac{B}{\delta}}$ for some universal constant $c$ then for the debiased estimator, we have $\frac{1}{2} \error{} \leq \debiasedEst{} \leq \frac{3}{2}\error{}$ with probability at least $1 - \delta$.
}

\begin{theorem}[Debiased estimator bound]
\label{thm:final-ours}
\finalCancelingText{}
\end{theorem}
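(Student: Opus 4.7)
The plan is to decompose the error of the debiased estimator into a conditionally-unbiased concentration term plus a small residual, and to show that the per-bin variance contributions sum to something that scales with $\sqrt{B}$ rather than $B$ thanks to the bias subtraction. First I would introduce the population quantities $p_s = \prob(f(X) = s)$ and $y_s^\star = \expect[Y \mid f(X) = s]$, so that $\error{} = \sum_s p_s a_s^2$ with $a_s = s - y_s^\star$. Writing $n_s = \hat{p}_s n = |L_s|$ and expanding the per-bin debiased term as
\begin{align*}
(s - \hat{y}_s)^2 - \tfrac{\hat{y}_s(1-\hat{y}_s)}{n_s - 1} = a_s^2 + 2 a_s (y_s^\star - \hat{y}_s) + (y_s^\star - \hat{y}_s)^2 - \tfrac{\hat{y}_s(1-\hat{y}_s)}{n_s - 1},
\end{align*}
I would use the identity $\expect\!\bigl[\hat{y}_s(1-\hat{y}_s)/(n_s-1) \mid n_s\bigr] = y_s^\star(1-y_s^\star)/n_s = \expect\!\bigl[(y_s^\star - \hat{y}_s)^2 \mid n_s\bigr]$, so that conditional on the multinomial counts the per-bin estimate is exactly unbiased for $a_s^2$. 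This is the property the plain plugin estimator lacks, and it is the reason the $n_s-1$ denominator (rather than $n_s$) appears in the definition.

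Then I would bound the variance of $\debiasedEst{}$ conditional on $\{n_s\}$, exploiting the fact that samples in different bins are independent given the counts. The linear-in-$a_s$ cross term $2a_s(y_s^\star - \hat{y}_s)$ has per-bin variance $O(a_s^2/n_s)$, while the residual $(y_s^\star - \hat{y}_s)^2 - \hat{y}_s(1-\hat{y}_s)/(n_s-1)$ has per-bin variance only $O(1/n_s^2)$, since the two pieces are strongly correlated and cancel to leading order — this is the critical variance-reduction step. Weighting by $\hat{p}_s^2 = n_s^2/n^2$ and summing,
\begin{align*}
\mathrm{Var}\bigl(\debiasedEst{} \,\big|\, \{n_s\}\bigr) \;=\; O\!\Bigl(\tfrac{1}{n^2}\sum_s n_s\, a_s^2 + \tfrac{B}{n^2}\Bigr) \;=\; O\!\Bigl(\tfrac{\error{}}{n} + \tfrac{B}{n^2}\Bigr).
\end{align*}
A Bernstein bound (each per-bin term is $O(1)$) then gives $\lvert\debiasedEst{} - \expect[\debiasedEst{} \mid \{n_s\}]\rvert \lesssim \sqrt{\error{}\log(B/\delta)/n} + \sqrt{B}\log(B/\delta)/n$ with probability $\geq 1 - \delta$. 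The residual from $\hat{p}_s$ versus $p_s$ is the mean-zero sum $\sum_s(\hat{p}_s - p_s)a_s^2$, whose variance is at most $\error{}/n$ (since $a_s^2 \leq 1$) and contributes the same order.

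Solving $\sqrt{\error{}/n} + \sqrt{B}/n \leq \error{}/4$ yields $n \gtrsim \sqrt{B}/\error{}$ up to log factors, matching the claim. The obstacle I expect is handling bins with small $n_s$: both the $1/(n_s-1)$ denominator and the per-bin variance bounds require $n_s \geq 2$, and a few empty or nearly-empty bins must not blow up the estimator or invalidate the independence argument. Here I would lean on the assumption $p_s \geq 1/(2B)$ and a multinomial Chernoff bound to establish a high-probability event on which $n_s = \Theta(np_s)$ for every $s$ with $np_s$ not too small, and show that the remaining bins contribute at most $O(\error{})/2$ to both $\error{}$ and $\debiasedEst{}$, which can be absorbed into the slack. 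Balancing the probability of this event against the Bernstein bound (and showing the $p_s^{-1}\gtrsim B$ slack does not dominate the $\sqrt{B}/\error{}$ term) is the delicate calculation; conceptually the $\sqrt{B}$ improvement is entirely due to the cancellation between $(y_s^\star - \hat{y}_s)^2$ and $\hat{y}_s(1-\hat{y}_s)/(n_s-1)$.
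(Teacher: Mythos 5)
Your proposal is correct and follows essentially the same route as the paper: the same three-term decomposition of $\debiasedEst{}$, the same key observation that conditioning on the bin memberships makes the subtracted term an unbiased estimate of the conditional variance of $\hat{y}_s$ (so the residual term is conditionally mean-zero), and the same summing of independent per-bin fluctuations of size $O(1/n)$ across $B$ bins to get the $\sqrt{B}/n$ deviation. The only cosmetic differences are that you use Bernstein/variance calculations where the paper truncates each per-bin term with Hoeffding and then applies a sub-Gaussian tail bound, and you control the $\sum_s(\hat{p}_s-p_s)a_s^2$ term by a direct variance bound rather than a per-bin multiplicative Chernoff bound.
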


\newtheorem*{finalCanceling}{Restatement of Theorem~\ref{thm:final-ours}}


The proofs of both theorems is in Appendix~\ref{sec:verifying_calibration_appendix}. The idea is that for the plugin estimator, each term in the sum has bias $1/n$. These biases accumulate, giving total bias $B/n$. The debiased estimator has much lower bias and the estimation variance cancels across bins---this intuition is captured in Lemma~\ref{lem:c3_bound} which requires careful conditioning to make the argument go through.


\newcommand{\calset}[0]{\ensuremath{S_C}}
\newcommand{\verifset}[0]{\ensuremath{S_E}}

\subsection{Experiments}
\label{sec:verifying_calibration_experiments}

\begin{figure}
  \centering
  \centering
     \begin{subfigure}[b]{0.45\textwidth}
         \centering
         \includegraphics[width=\textwidth]{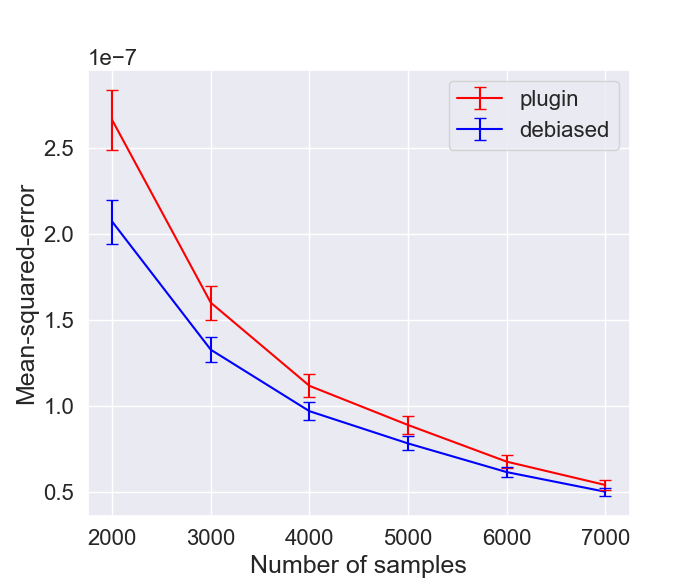}
         \caption{$B = 10$
         }
         \label{fig:mse_estimators}
     \end{subfigure}
     \hfill
     \begin{subfigure}[b]{0.45\textwidth}
         \centering
         \includegraphics[width=\textwidth]{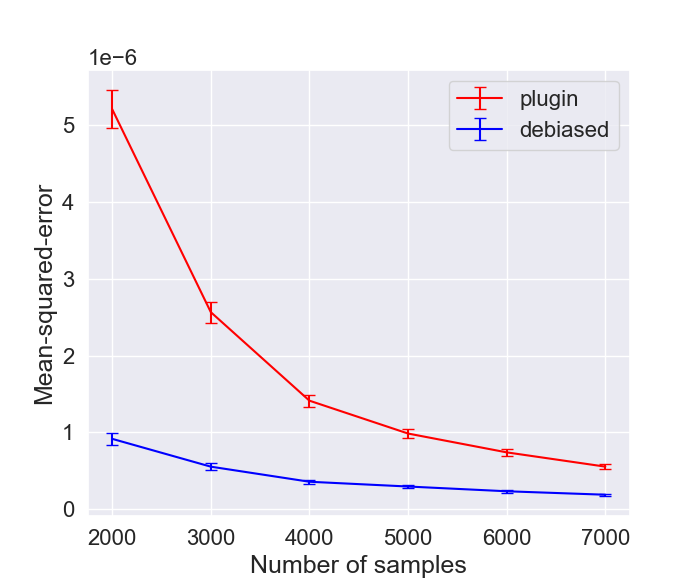}
         \caption{$B = 100$
         }
         \label{fig:ce_vs_bins_verifying}
     \end{subfigure}
  \caption{
    Mean-squared errors of plugin and debiased estimators on a recalibrated VGG16 model on CIFAR-10 with $90\%$ confidence intervals (lower values better). The debiased estimator is closer to the ground truth, which corresponds to $0$ on the vertical axis, especially when $B$ is large or $n$ is small.
    Note that this is the MSE of the squared calibration error, not the MSE of the model in Figure~\ref{fig:nan2}.
    \pl{important: saying MSE is confusing here because it's the MSE of the calibration error (so meta), not to be confused with Fig 3 (b), which is the MSE of the model;
    should say that in both the text and vertical axis}
}
  \label{fig:mse_estimators_bins}
\end{figure}

We run a multiclass marginal calibration experiment on CIFAR-10 which suggests that the debiased estimator produces better estimates of the calibration error than the plugin estimator. We split the validation set of size 10,000 into two sets $\calset{}$ and $\verifset{}$ of sizes 3,000 and 7,000 respectively. We use $\calset{}$ to re-calibrate and discretize a trained VGG-16 model. We calibrate each of the $K = 10$ classes seprately as described in Section~\ref{sec:formulation} and used $B = 100$ or $B = 10$ bins per class. For varying values of $n$, we sample $n$ points with replacement from $\verifset{}$, and estimate the calibration error using the debiased estimator and the plugin estimator. We then compute the squared deviation of these estimates from the squared calibration error measured on the entire set $\verifset{}$. We repeat this resampling 1,000 times to get the mean squared deviation of the estimates from the ground truth and confidence intervals. Figure~\ref{fig:mse_estimators} shows that the debiased estimates are much closer to the ground truth than the plugin estimates---the difference is especially significant when the number of samples $n$ is small or the number of bins $B$ is large. Note that having a perfect estimate corresponds to $0$ on the vertical axis.

In Appendix~\ref{sec:verifying_calibration_appendix_experiments}, we include histograms of the absolute difference between the estimates and ground truth for the plugin and debiased estimator, over the 1,000 resamples.

\section{Related work}

Calibration, including the squared calibration error, has been studied in many fields besides machine learning including meteorology~\cite{murphy1973vector, murphy1977reliability, degroot1983forecasters,gneiting2005weather, brocker2009decomposition}, fairness~\cite{johnson2018multicalibration, liu2019implicit}, healthcare~\cite{jiang2012calibrating, crowson2017calibration, harrell1996prognostic, yadlowsky2019calibration}, reinforcement learning~\cite{malik2019calibrated}, natural language processing~\cite{nguyen2015posterior, card2018calibration}, speech recognition~\cite{dong2011calibration}, econometrics~\cite{gneiting2007probabilistic}, and psychology~\cite{lichtenstein1982calibration}.
Besides the calibration error, prior work also uses the Hosmer-Lemeshov test~\cite{hosmer1980goodness} and reliability diagrams~\cite{degroot1983forecasters, brocker2007reliability} to evaluate calibration.
Concurrent work to ours~\cite{widmann2019calibration} also notice that using the plugin calibration error estimator to test for calibration leads to rejecting well-calibrated models too often.
Besides calibration, other ways of producing and quantifying uncertainties include Bayesian methods~\cite{gelman1995bayesian} and conformal prediction~\cite{shafer2008tutorial, lei2016distribution}.

Recalibration is related to (conditional) density estimation~\cite{wasserman2019, parzen1962} as the goal is to estimate $\expect[Y \mid f(X)]$. Algorithms and analysis in density estimation typically assume the true density is $L-$Lipschitz, while in calibration applications, the calibration error of the final model should be measurable from data, without making untestable assumptions on $L$.

Bias is a common issue with statistical estimators, for example, the seminal work by Stein~\cite{stein81sure} fixes the bias of the mean-squared error. However, debiasing an estimator does not typically lead to \emph{an improved sample complexity}, as it does in our case.

\section{Conclusion}

This paper makes three contributions: 1. We showed that the calibration error of continuous methods is underestimated; 2. We introduced the first method, to our knowledge, that has better sample complexity than histogram binning and has a \emph{measurable calibration error}, giving us the best of scaling and binning methods; and 3. We showed that an alternative estimator for calibration error has better sample complexity than the plugin estimator.
There are many exciting avenues for future work:
\begin{enumerate}
\item \textbf{Dataset shifts}: Can we maintain calibration under dataset shifts? When the dataset shifts (for example, train on MNIST, but evaluate on SVHN) it is difficult to get very high accuracies on the target dataset, but can we at least know when our model is accurate and when it is not, or in other words can we ensure our model is calibrated on the new dataset? When a small amount of labeled \emph{target} data is available, we can simply re-calibrate using the labeled target data since recalibration simply involves tuning a small number of parameters and therefore requires very few samples. However, can we maintain calibration when we do not have labeled examples from the target dataset? 
\item \textbf{Measuring calibration}: Can we come up with alternative metrics that still capture a notion of calibration, but are measurable for scaling methods?
\item \textbf{Multiclass calibration}: Most papers on calibration focus on top-label calibration---can we come up with better methods for marginal calibration, so that each class prediction is calibrated? Can we achieve stronger notions of calibration, such as joint calibration, efficiently? 
\item \textbf{Better binning}: Can we devise better ways of binning? While binning makes the calibration error measurable, it leads to an increase in the mean-squared error. In Proposition~\ref{prop:mse-finite-binning} we bound the increase in MSE for the well-balanced binning scheme. However, in Section~\ref{sec:alt_binning_schemes} we give intuition for why other binning schemes may have a smaller increase in MSE.
\item \textbf{Finite sample guarantees}: Can we prove finite sample guarantees for Theorem~\ref{thm:final-calib} which show the dependency on the dimension and probability of failure?
\item \textbf{Estimating calibration}: The plugin estimator for the $\ell_1$ calibration error (called the ECE in~\cite{guo2017calibration, nixon2019calibration}) is also biased---can we come up with a more sample efficient estimator for the $\ell_1$ calibration error? In Appendix~\ref{sec:verifying_calibration_appendix_experiments} we also propose a heuristic way to debias the $\ell_1$ calibration error and show strong experimental results, but can we prove guarantees for this? For the $\lsquared$ calibration error, can we devise an even better estimator? If not, can we prove minimax lower bounds on this?
\end{enumerate}

\paragraph{Reproducibility.}
Our Python calibration library is available at \url{https://pypi.org/project/uncertainty-calibration}.
All code, data, and experiments can be found on CodaLab at \url{https://worksheets.codalab.org/worksheets/0xb6d027ee127e422989ab9115726c5411}.
Updated code can be found at \url{https://github.com/AnanyaKumar/verified_calibration}.

\paragraph{Acknowledgements.}

The authors would like to thank the Open Philantropy Project, Stanford Graduate Fellowship, and Toyota Research Institute for funding. Toyota Research Institute (``TRI") provided funds to assist the authors with their research but this article solely reflects the opinions and conclusions of its authors and not TRI or any other Toyota entity.

We are grateful to Pang Wei Koh, Chuan Guo, Anand Avati, Shengjia Zhao, Weihua Hu, Yu Bai, John Duchi, Dan Hendrycks, Jonathan Uesato, Michael Xie, Albert Gu, Aditi Raghunathan, Fereshte Khani, Stefano Ermon, Eric Nalisnick, and Pushmeet Kohli for insightful discussions. We thank the anonymous reviewers for their thorough reviews and suggestions that have improved our paper. We would also like to thank Pang Wei Koh, Yair Carmon, Albert Gu, Rachel Holladay, and Michael Xie for their inputs on our draft, and Chuan Guo for providing code snippets from their temperature scaling paper.






\bibliographystyle{unsrtnat}
\bibliography{local,refdb/all}

\begin{thebibliography}{52}
\providecommand{\natexlab}[1]{#1}
\providecommand{\url}[1]{\texttt{#1}}
\expandafter\ifx\csname urlstyle\endcsname\relax
  \providecommand{\doi}[1]{doi: #1}\else
  \providecommand{\doi}{doi: \begingroup \urlstyle{rm}\Url}\fi

\bibitem[Jiang et~al.(2012)Jiang, Osl, Kim, and
  Ohno{-}Machado]{jiang2012calibrating}
X.~Jiang, M.~Osl, J.~Kim, and L.~Ohno{-}Machado.
\newblock Calibrating predictive model estimates to support personalized
  medicine.
\newblock \emph{Journal of the American Medical Informatics Association},
  19\penalty0 (2):\penalty0 263--274, 2012.

\bibitem[Murphy(1973)]{murphy1973vector}
A.~H. Murphy.
\newblock A new vector partition of the probability score.
\newblock \emph{Journal of Applied Meteorology}, 12\penalty0 (4):\penalty0
  595--600, 1973.

\bibitem[Murphy and Winkler(1977)]{murphy1977reliability}
A.~H. Murphy and R.~L. Winkler.
\newblock Reliability of subjective probability forecasts of precipitation and
  temperature.
\newblock \emph{Journal of the Royal Statistical Society. Series C (Applied
  Statistics)}, 26:\penalty0 41--47, 1977.

\bibitem[DeGroot and Fienberg(1983)]{degroot1983forecasters}
M.~H. DeGroot and S.~E. Fienberg.
\newblock The comparison and evaluation of forecasters.
\newblock \emph{Journal of the Royal Statistical Society. Series D (The
  Statistician)}, 32:\penalty0 12--22, 1983.

\bibitem[Gneiting and Raftery(2005)]{gneiting2005weather}
T.~Gneiting and A.~E. Raftery.
\newblock Weather forecasting with ensemble methods.
\newblock \emph{Science}, 310, 2005.

\bibitem[Brocker(2009)]{brocker2009decomposition}
J.~Brocker.
\newblock Reliability, sufficiency, and the decomposition of proper scores.
\newblock \emph{Quarterly Journal of the Royal Meteorological Society},
  135\penalty0 (643):\penalty0 1512--1519, 2009.

\bibitem[Nguyen and O'Connor(2015)]{nguyen2015posterior}
K.~Nguyen and B.~O'Connor.
\newblock Posterior calibration and exploratory analysis for natural language
  processing models.
\newblock In \emph{Empirical Methods in Natural Language Processing (EMNLP)},
  pages 1587--1598, 2015.

\bibitem[Card and Smith(2018)]{card2018calibration}
D.~Card and N.~A. Smith.
\newblock The importance of calibration for estimating proportions from
  annotations.
\newblock In \emph{Association for Computational Linguistics (ACL)}, 2018.

\bibitem[Guo et~al.(2017)Guo, Pleiss, Sun, and Weinberger]{guo2017calibration}
C.~Guo, G.~Pleiss, Y.~Sun, and K.~Q. Weinberger.
\newblock On calibration of modern neural networks.
\newblock In \emph{International Conference on Machine Learning (ICML)}, pages
  1321--1330, 2017.

\bibitem[Zadrozny and Elkan(2001)]{zadrozny2001calibrated}
B.~Zadrozny and C.~Elkan.
\newblock Obtaining calibrated probability estimates from decision trees and
  naive bayesian classifiers.
\newblock In \emph{International Conference on Machine Learning (ICML)}, pages
  609--616, 2001.

\bibitem[Kuleshov et~al.(2018)Kuleshov, Fenner, and
  Ermon]{kuleshov2018accurate}
V.~Kuleshov, N.~Fenner, and S.~Ermon.
\newblock Accurate uncertainties for deep learning using calibrated regression.
\newblock In \emph{International Conference on Machine Learning (ICML)}, 2018.

\bibitem[Platt(1999)]{platt1999probabilistic}
J.~Platt.
\newblock Probabilistic outputs for support vector machines and comparisons to
  regularized likelihood methods.
\newblock \emph{Advances in Large Margin Classifiers}, 10\penalty0
  (3):\penalty0 61--74, 1999.

\bibitem[Zadrozny and Elkan(2002)]{zadrozny2002transforming}
B.~Zadrozny and C.~Elkan.
\newblock Transforming classifier scores into accurate multiclass probability
  estimates.
\newblock In \emph{International Conference on Knowledge Discovery and Data
  Mining (KDD)}, pages 694--699, 2002.

\bibitem[Naeini et~al.(2014)Naeini, Cooper, and Hauskrecht]{naeini2014binary}
M.~P. Naeini, G.~F. Cooper, and M.~Hauskrecht.
\newblock Binary classifier calibration: Non-parametric approach.
\newblock \emph{arXiv}, 2014.

\bibitem[Hendrycks et~al.(2019{\natexlab{a}})Hendrycks, Mazeika, and
  Dietterich]{hendrycks2019anomaly}
D.~Hendrycks, M.~Mazeika, and T.~Dietterich.
\newblock Deep anomaly detection with outlier exposure.
\newblock In \emph{International Conference on Learning Representations
  (ICLR)}, 2019{\natexlab{a}}.

\bibitem[Kuleshov and Liang(2015)]{kuleshov2015calibrated}
V.~Kuleshov and P.~Liang.
\newblock Calibrated structured prediction.
\newblock In \emph{Advances in Neural Information Processing Systems
  (NeurIPS)}, 2015.

\bibitem[Hendrycks et~al.(2019{\natexlab{b}})Hendrycks, Lee, and
  Mazeika]{hendrycks2019pretraining}
D.~Hendrycks, K.~Lee, and M.~Mazeika.
\newblock Using pre-training can improve model robustness and uncertainty.
\newblock In \emph{International Conference on Machine Learning (ICML)},
  2019{\natexlab{b}}.

\bibitem[Brocker(2012)]{brocker2012empirical}
J.~Brocker.
\newblock Estimating reliability and resolution of probability forecasts
  through decomposition of the empirical score.
\newblock \emph{Climate Dynamics}, 39:\penalty0 655--667, 2012.

\bibitem[Ferro and Fricker(2012)]{ferro2012bias}
C.~A.~T. Ferro and T.~E. Fricker.
\newblock A bias-corrected decomposition of the brier score.
\newblock \emph{Quarterly Journal of the Royal Meteorological Society},
  138\penalty0 (668):\penalty0 1954--1960, 2012.

\bibitem[Krizhevsky(2009)]{krizhevsky2009learningmultiple}
A.~Krizhevsky.
\newblock Learning multiple layers of features from tiny images.
\newblock Technical report, University of Toronto, 2009.

\bibitem[Deng et~al.(2009)Deng, Dong, Socher, Li, Li, and
  Fei-Fei]{deng2009imagenet}
J.~Deng, W.~Dong, R.~Socher, L.~Li, K.~Li, and L.~Fei-Fei.
\newblock {I}mage{N}et: A large-scale hierarchical image database.
\newblock In \emph{Computer Vision and Pattern Recognition (CVPR)}, pages
  248--255, 2009.

\bibitem[Brier(1950)]{brier1950verification}
G.~W. Brier.
\newblock Verification of forecasts expressed in terms of probability.
\newblock \emph{Monthly weather review}, 78\penalty0 (1):\penalty0 1--3, 1950.

\bibitem[Naeini et~al.(2015)Naeini, Cooper, and
  Hauskrecht]{naeini2015obtaining}
M.~P. Naeini, G.~F. Cooper, and M.~Hauskrecht.
\newblock Obtaining well calibrated probabilities using bayesian binning.
\newblock In \emph{Association for the Advancement of Artificial Intelligence
  (AAAI)}, 2015.

\bibitem[Nixon et~al.(2019)Nixon, Dusenberry, Zhang, Jerfel, and
  Tran]{nixon2019calibration}
J.~V. Nixon, M.~W. Dusenberry, L.~Zhang, G.~Jerfel, and D.~Tran.
\newblock Measuring calibration in deep learning.
\newblock \emph{arXiv}, 2019.

\bibitem[Gneiting et~al.(2007)Gneiting, Balabdaoui, and
  Raftery]{gneiting2007probabilistic}
T.~Gneiting, F.~Balabdaoui, and A.~E. Raftery.
\newblock Probabilistic forecasts, calibration and sharpness.
\newblock \emph{Journal of the Royal Statistical Society: Series B (Statistical
  Methodology)}, 69\penalty0 (2):\penalty0 243--268, 2007.

\bibitem[Kull et~al.(2019)Kull, Nieto, Kängsepp, Filho, Song, and
  Flach]{kull2019temperature}
M.~Kull, M.~P. Nieto, M.~Kängsepp, T.~S. Filho, H.~Song, and P.~Flach.
\newblock Beyond temperature scaling: Obtaining well-calibrated multi-class
  probabilities with dirichlet calibration.
\newblock In \emph{Advances in Neural Information Processing Systems
  (NeurIPS)}, 2019.

\bibitem[Paninski(2003)]{paninski2003entropy}
L.~Paninski.
\newblock Estimation of entropy and mutual information.
\newblock \emph{Neural Computation}, 15:\penalty0 1191--1253, 2003.

\bibitem[Vaicenavicius et~al.(2019)Vaicenavicius, Widmann, Andersson, Lindsten,
  Roll, and Schön]{vaicenavicius2019calibration}
J.~Vaicenavicius, D.~Widmann, C.~Andersson, F.~Lindsten, J.~Roll, and T.~B.
  Schön.
\newblock Evaluating model calibration in classification.
\newblock In \emph{Artificial Intelligence and Statistics (AISTATS)}, 2019.

\bibitem[Hebert-Johnson et~al.(2018)Hebert-Johnson, Kim, Reingold, and
  Rothblum]{johnson2018multicalibration}
U.~Hebert-Johnson, M.~P. Kim, O.~Reingold, and G.~N. Rothblum.
\newblock Multicalibration: Calibration for the (computationally-identifiable)
  masses.
\newblock In \emph{International Conference on Machine Learning (ICML)}, 2018.

\bibitem[Liu et~al.(2019)Liu, Simchowitz, and Hardt]{liu2019implicit}
L.~T. Liu, M.~Simchowitz, and M.~Hardt.
\newblock The implicit fairness criterion of unconstrained learning.
\newblock In \emph{International Conference on Machine Learning (ICML)}, 2019.

\bibitem[Crowson et~al.(2017)Crowson, Atkinson, and
  Therneau]{crowson2017calibration}
C.~S. Crowson, E.~J. Atkinson, and T.~M. Therneau.
\newblock Assessing calibration of prognostic risk scores.
\newblock \emph{Statistical Methods in Medical Research}, 25:\penalty0
  1692--1706, 2017.

\bibitem[Harrell et~al.(1996)Harrell, Lee, and Mark]{harrell1996prognostic}
F.~E. Harrell, K.~L. Lee, and D.~B. Mark.
\newblock Multivariable prognostic models: issues in developing models,
  evaluating assumptions and adequacy, and measuring and reducing errors.
\newblock \emph{Statistics in medicine}, 15\penalty0 (4):\penalty0 361--387,
  1996.

\bibitem[Yadlowsky et~al.(2019)Yadlowsky, Basu, and
  Tian]{yadlowsky2019calibration}
S.~Yadlowsky, S.~Basu, and L.~Tian.
\newblock A calibration metric for risk scores with survival data.
\newblock \emph{Machine Learning for Healthcare}, 2019.

\bibitem[Malik et~al.(2019)Malik, Kuleshov, Song, Nemer, Seymour, and
  Ermon]{malik2019calibrated}
A.~Malik, V.~Kuleshov, J.~Song, D.~Nemer, H.~Seymour, and S.~Ermon.
\newblock Calibrated model-based deep reinforcement learning.
\newblock In \emph{International Conference on Machine Learning (ICML)}, 2019.

\bibitem[Yu et~al.(2011)Yu, Li, and Deng]{dong2011calibration}
D.~Yu, J.~Li, and L.~Deng.
\newblock Calibration of confidence measures in speech recognition.
\newblock \emph{Trans. Audio, Speech and Lang. Proc.}, 19\penalty0
  (8):\penalty0 2461--2473, 2011.

\bibitem[Lichtenstein et~al.(1982)Lichtenstein, Fischhoff, and
  Phillips]{lichtenstein1982calibration}
S.~Lichtenstein, B.~Fischhoff, and L.~D. Phillips.
\newblock \emph{Judgement under Uncertainty: Heuristics and Biases}.
\newblock Cambridge University Press, 1982.

\bibitem[Hosmer and Lemeshow(1980)]{hosmer1980goodness}
D.~W. Hosmer and S.~Lemeshow.
\newblock Goodness of fit tests for the multiple logistic regression model.
\newblock \emph{Communications in Statistics - Theory and Methods}, 9:\penalty0
  1043--1069, 1980.

\bibitem[Bröcker and Smith(2007)]{brocker2007reliability}
J.~Bröcker and L.~A. Smith.
\newblock Increasing the reliability of reliability diagrams.
\newblock \emph{Weather and Forecasting}, 22\penalty0 (3):\penalty0 651--661,
  2007.

\bibitem[Widmann et~al.(2019)Widmann, Lindsten, and
  Zachariah]{widmann2019calibration}
D.~Widmann, F.~Lindsten, and D.~Zachariah.
\newblock Calibration tests in multi-class classification: A unifying
  framework.
\newblock In \emph{Advances in Neural Information Processing Systems
  (NeurIPS)}, 2019.

\bibitem[Gelman et~al.(1995 1995)Gelman, Carlin, Stern, and
  Rubin]{gelman1995bayesian}
A.~Gelman, J.~B. Carlin, H.~S. Stern, and D.~B. Rubin.
\newblock \emph{Bayesian data analysis}.
\newblock Chapman and Hall/CRC Chapman and Hall/CRC, 1995 1995.

\bibitem[Shafer and Vovk(2008)]{shafer2008tutorial}
G.~Shafer and V.~Vovk.
\newblock A tutorial on conformal prediction.
\newblock \emph{Journal of Machine Learning Research (JMLR)}, 9:\penalty0
  371--421, 2008.

\bibitem[Lei et~al.(2016)Lei, G'Sell, Rinaldo, Tibshirani, and
  Wasserman]{lei2016distribution}
J.~Lei, M.~G'Sell, A.~Rinaldo, R.~J. Tibshirani, and L.~Wasserman.
\newblock Distribution-free predictive inference for regression.
\newblock \emph{Journal of the American Statistical Association}, 113:\penalty0
  1094--1111, 2016.

\bibitem[Stein(1981)]{stein81sure}
C.~M. Stein.
\newblock Estimation of the mean of a multivariate normal distribution.
\newblock \emph{Annals of Statistics}, 9\penalty0 (6):\penalty0 1135--1151,
  1981.

\bibitem[Wasserman(2019)]{wasserman2019}
Larry Wasserman.
\newblock Density estimation.
\newblock \url{http://www.stat.cmu.edu/~larry/=sml/densityestimation.pdf},
  2019.

\bibitem[Parzen(1962)]{parzen1962}
E.~Parzen.
\newblock On estimation of a probability density function and mode.
\newblock \emph{Annals of Mathematical Statistics}, 33:\penalty0 1065--1076,
  1962.

\bibitem[Efron(1979)]{efron1979bootstrap}
B.~Efron.
\newblock Bootstrap methods: Another look at the jackknife.
\newblock \emph{Annals of Statistics}, 7, 1979.

\bibitem[Chollet(2015)]{chollet2015}
François Chollet.
\newblock keras.
\newblock \url{https://github.com/fchollet/keras}, 2015.

\bibitem[Abadi et~al.(2015)Abadi, Agarwal, Barham, Brevdo, Chen, Citro,
  Corrado, Davis, Dean, Devin, Ghemawat, Goodfellow, Harp, Irving, Isard, Jia,
  Jozefowicz, Kaiser, Kudlur, Levenberg, Man\'{e}, Monga, Moore, Murray, Olah,
  Schuster, Shlens, Steiner, Sutskever, Talwar, Tucker, Vanhoucke, Vasudevan,
  Vi\'{e}gas, Vinyals, Warden, Wattenberg, Wicke, Yu, and
  Zheng]{tensorflow2015-whitepaper}
Mart\'{\i}n Abadi, Ashish Agarwal, Paul Barham, Eugene Brevdo, Zhifeng Chen,
  Craig Citro, Greg~S. Corrado, Andy Davis, Jeffrey Dean, Matthieu Devin,
  Sanjay Ghemawat, Ian Goodfellow, Andrew Harp, Geoffrey Irving, Michael Isard,
  Yangqing Jia, Rafal Jozefowicz, Lukasz Kaiser, Manjunath Kudlur, Josh
  Levenberg, Dandelion Man\'{e}, Rajat Monga, Sherry Moore, Derek Murray, Chris
  Olah, Mike Schuster, Jonathon Shlens, Benoit Steiner, Ilya Sutskever, Kunal
  Talwar, Paul Tucker, Vincent Vanhoucke, Vijay Vasudevan, Fernanda Vi\'{e}gas,
  Oriol Vinyals, Pete Warden, Martin Wattenberg, Martin Wicke, Yuan Yu, and
  Xiaoqiang Zheng.
\newblock {TensorFlow}: Large-scale machine learning on heterogeneous systems,
  2015.
\newblock URL \url{https://www.tensorflow.org/}.
\newblock Software available from tensorflow.org.

\bibitem[Geifman(2015)]{geifman2017}
Yonatan Geifman.
\newblock cifar-vgg.
\newblock \url{https://github.com/geifmany/cifar-vgg}, 2015.

\bibitem[van~der Vaart(1998)]{vaart98asymptotic}
A.~W. van~der Vaart.
\newblock \emph{Asymptotic statistics}.
\newblock Cambridge University Press, 1998.

\bibitem[Hubbard and Hubbard(1998)]{hubbard1998vector}
J.~H. Hubbard and B.~B. Hubbard.
\newblock \emph{Vector Calculus, Linear Algebra, And Differential Forms}.
\newblock Prentice Hall, 1998.

\bibitem[Kull et~al.(2017)Kull, Filho, and Flach]{kull2017sigmoids}
M.~Kull, T.~M.~S. Filho, and P.~Flach.
\newblock Beyond sigmoids: How to obtain well-calibrated probabilities from
  binary classifiers with beta calibration.
\newblock \emph{Electronic Journal of Statistics}, 11:\penalty0 5052--5080,
  2017.

\end{thebibliography}

\appendix

\newpage 
\section{Model and code details}

The VGG16 model we used for ImageNet experiments is from the Keras~\cite{chollet2015} module in the TensorFlow~\cite{tensorflow2015-whitepaper} library and we used pre-trained weights supplied by the library. The VGG16 model for CIFAR-10 was obtained from an open-source implementation on GitHub~\cite{geifman2017}, and we used the pre-trained weights there. We independently verified the accuracies of these models.

\newpage
\section{Proofs for Section~\ref{sec:challenges-measuring}}
\label{sec:appendix-platt-not-calibrated}



The results in Section~\ref{sec:challenges-measuring} hold more generally for $\lpce{}$ and not just $\ce{}$. We recall the definition of $\lpce{}$:
\begin{definition}
For $p \geq 1$, the $\ell_p$ calibration error of $f : \mathcal{X} \to [0, 1]$ is given by:
\begin{align}
\lpce(f) = \Big(\expect\big[ \left|f(X) - \expect[Y \mid f(X)] \right|^p \big] \Big)^{1/p}
\end{align}
\end{definition}

We now restate and prove the results in terms of $\lpce{}$ which implies the result for $\ce{}$ in the paper as a special case, but also for other commonly used metrics such as the $\ell_1$ calibration error (ECE).

\begin{continuousNotCalibrated}
For any binning scheme $\bins{}$, $p \geq 1$, and continuous bijective function $f : [0, 1] \to [0, 1]$, there exists a distribution \pl{mathbb}$P$ over $\mathcal{X}, \mathcal{Y}$ s.t. $\lpce(f_{\bins{}}) = 0$ but $\lpce(f) \geq 0.49$.
Note that for all $f$, $0 \leq \lpce(f) \leq 1$.
\end{continuousNotCalibrated}

\begin{proof}
As stated in the main text, the intuition is that in each interval $I_j$ in $\bins{}$, the model could underestimate the true probability $\expect[Y \mid f(X)]$ half the time, and overestimate the probability half the time. So if we average over the entire bin the model appears to be calibrated, even though it is very uncalibrated. The proof simply formalizes this intuition.

Since $f$ is bijective and continuous we can select data distribution $P$ s.t. $f(X) \sim \mbox{Uniform}[0.5 - \epsilon, 0.5 + \epsilon]$ for any $\epsilon > 0$. To see this, first note that from real analysis since $f : [0, 1] \to [0, 1]$ and $f$ is bijective and continuous, $f^{-1}$ is also bijective and continuous.
Then we can let $X \sim f^{-1}(\mbox{Uniform}[0.5 - \epsilon, 0.5 + \epsilon])$ which has the desired property and has a density.

Now, consider each interval $I_j$ in binning scheme $\bins{}$, and let $A_j = I_j \cap \mbox{Uniform}[0.5 - \epsilon, 0.5 + \epsilon]$.
If $A_j = \emptyset$ then $P(f(X) \in A_j) = 0$ so we can ignore this interval (since $f(X)$ will never land in this bin).
Let $p_j = \expect[f(X) \mid f(X) \in A_j]$.
Note that $\expect[f(X) \mid f(X) \in A_j] = \expect[f(X) \mid f(X) \in I_j]$.
Since $f(X) \in [0.5 - \epsilon, 0.5 + \epsilon]$, $p_j \in [0.5 - \epsilon, 0.5 + \epsilon]$ as well.
We will choose $P(Y)$ so that $Y$ is $1$ whenever $f(X)$ lands in the first $p_j$ fraction of interval $A_j$, and $0$ whenever $f(X)$ lands in the latter $1 - p_j$ fraction of $A_j$.
Then $\expect[Y \mid f(X) \in A_j] = p_j$, so the binned calibration error is 0.
But notice that for all $s \in [0.5 - \epsilon, 0.5 + \epsilon]$, $\expect[Y \mid f(X) = s]$ is either $0$ or $1$.
So we have:
\[ \lvert \expect[Y \mid f(X) = s] - s \rvert \geq 0.5 - \epsilon \]
That is, at every point the model is actually very miscalibrated at each $s$. By taking $\epsilon$ very small, we then get that $\lpce(p) \geq 0.5 - \epsilon'$ for any $\epsilon' > 0$, which completes the proof.
\end{proof}

\begin{binningLowerBound}
Given any binning scheme $\bins{}$ and model $f : \mathcal{X} \to [0, 1]$, we have:
\[  \lpce(f_{\bins{}}) \leq \lpce(f). \]
\end{binningLowerBound}

\begin{proof}
It suffices to prove the claim for the $\ell_p^p$ error:
\[ (\lpce(f_{\bins{}}))^p \leq (\lpce(f))^p \]
This is because if $p > 0$ then $a \leq b \Leftrightarrow a^p \leq b^p$.

For $p \geq 1$, let $l(a, b) = (|a - b|)^p$.
We note that $l$ is convex in both arguments.
The proof is now a simple result of Jensen's inequality and convexity of $l$.
Suppose that $\mathcal{B}$ is given by intervals $I_1, ..., I_B$.
Let $Z = f(X)$---note that $Z$ is a random variable.

We can write $(\lpce(f_{\bins{}}))^p$ as:
\[ (\lpce(f_{\bins{}}))^p = \sum_{j=1}^B P(Z \in I_j) \; l\Big( \mathbb{E}[Z \mid Z \in I_j], \mathbb{E}[Y \mid Z \in I_j] \Big) \]
We can write $(\lpce(f))^p$ as:
\[ (\lpce(f))^p = \sum_{j=1}^B P(Z \in I_j) \; \mathbb{E}\Big[ l\big( Z, \mathbb{E}[Y \mid Z] \big) \mid Z \in I_j \Big] \]
Fix some bin $I_j \in \bins{}$. By Jensen's inequality,
\[ l\Big( \mathbb{E}[Z \mid Z \in I_j], \mathbb{E}[Y \mid Z \in I_j] \Big) \leq \mathbb{E}\Big[ l\big( Z, \mathbb{E}[Y \mid Z] \big) \mid Z \in I_j \Big] \]
Since this inequality holds for each term in the sum, it holds for the whole sum:
\[ (\lpce(f_{\bins{}}))^p \leq (\lpce(f))^p \]
Note that the proof also implies that finer binning schemes give a better lower bound.
That is, given $\bins{}'$ suppose for all $I_j' \in \bins{}'$, $I_j' \subseteq I_k$ for some $I_k \in \bins{}$.
Then $\lpce(f_{\bins{}}) \leq \lpce(f_{\bins{}'}) \leq \lpce(f)$.
This is because $f_{\bins{}'}$ can be seen as a binned version of $f_{\bins{}}$.


\end{proof}

\newpage
\section{Ablations for Section~\ref{sec:challenges-measuring}}
\label{sec:appendix_platt_experiments}

Here we present additional experiments for Section~\ref{sec:challenges-measuring}.
Recall that the experiments in section~\ref{sec:challenges-measuring} showed that binning underestimates the calibration error of a model---we focused on the $\ell_2\mbox{-CE}$ and selected bins so that each bin has an equal number of data points. Figure~\ref{fig:imagenet_lower_bound_l1} shows that binning is also unreliable at measuring the $\ell_1\mbox{-CE}$ (ECE) on ImageNet---using more bins uncovers a higher calibration error than we would otherwise detect with fewer bins. Figure~\ref{fig:imagenet_lower_bound_l1_prob} shows that the same conclusion holds on ImageNet if we look at the $\ell_1\mbox{-CE}$ \emph{and} use an alternative approach to selecting bins used in~\cite{guo2017calibration} that we call \emph{equal-width binning}. Here, the $B$ bins are selected to be $I_1 = [0, \frac{1}{B}], I_2 = (\frac{1}{B}, \frac{2}{B}], \dots, I_B = (\frac{B-1}{B}, 1]$. The experimental protocol is the same as in section~\ref{sec:challenges-measuring}.

\begin{figure}
     \centering
     \begin{subfigure}[b]{0.45\textwidth}
         \centering
         \includegraphics[width=\textwidth]{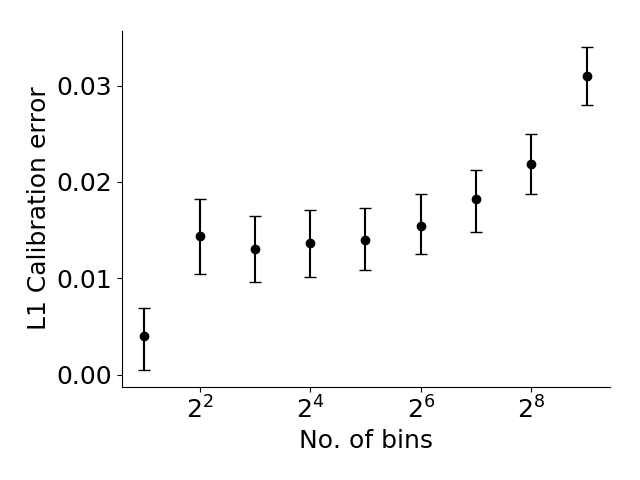}
         \caption{ImageNet, $\ell_1\mbox{-CE}$}
         \label{fig:imagenet_lower_bound_l1}
     \end{subfigure}
     \hfill
     \begin{subfigure}[b]{0.45\textwidth}
         \centering
         \includegraphics[width=\textwidth]{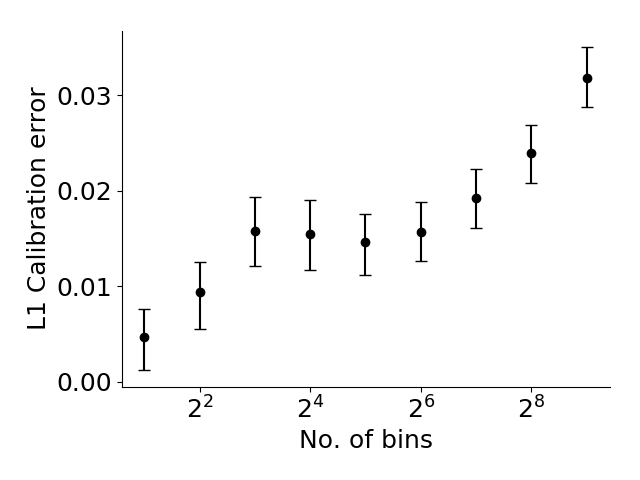}
         \caption{ImageNet, $\ell_1\mbox{-CE}$, equal-width binning}
         \label{fig:imagenet_lower_bound_l1_prob}
     \end{subfigure}
        \caption{
        Binned $\ell_1$ calibration errors of a recalibrated VGG-net model on ImageNet with $90\%$ confidence intervals. The binned calibration error increases as we increase the number of bins. This suggests that binning cannot be reliably used to measure the $\ell_1\mbox{-CE}$.
        }
        \label{fig:lower_bounds_l1_imagenet}
\end{figure}

We repeated both of these experiments on CIFAR-10 as well, and plot the results in Figure~\ref{fig:lower_bounds_l1_cifar}. Here the results are inconclusive because the error bars are large. This is because the CIFAR-10 dataset is smaller than ImageNet, and the accuracy of the CIFAR-10 model is 93.1\%, so the calibration error that we are trying to measure is much smaller.

\begin{figure}
     \centering
     \begin{subfigure}[b]{0.45\textwidth}
         \centering
         \includegraphics[width=\textwidth]{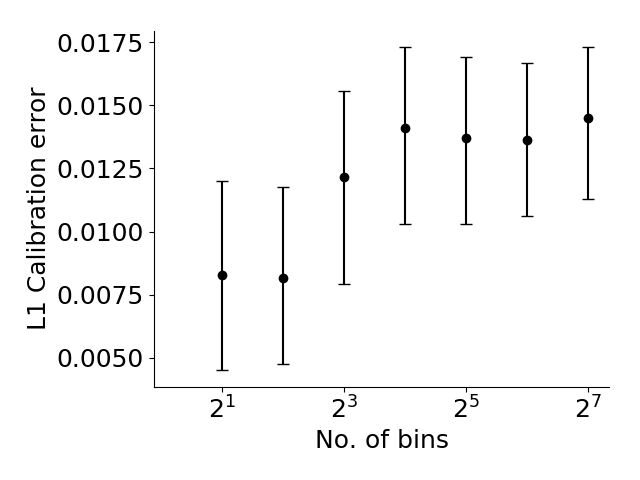}
         \caption{CIFAR-10, $\ell_1\mbox{-CE}$}
         \label{fig:cifar_lower_bound_l1}
     \end{subfigure}
     \hfill
     \begin{subfigure}[b]{0.45\textwidth}
         \centering
         \includegraphics[width=\textwidth]{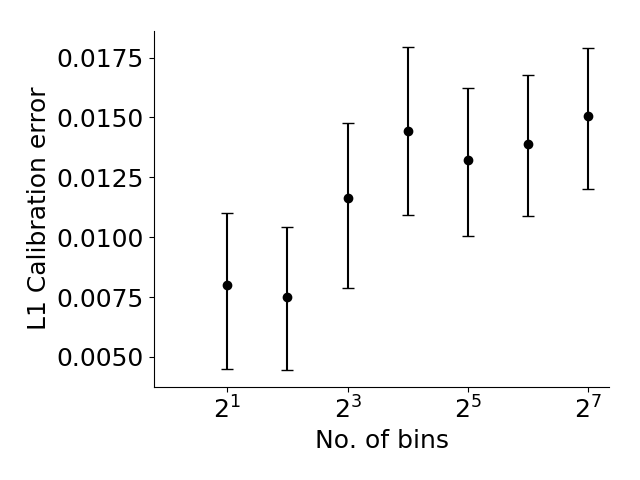}
         \caption{CIFAR-10, $\ell_1\mbox{-CE}$, equal-width binning}
         \label{fig:cifar_lower_bound_l1_prob}
     \end{subfigure}
        \caption{
        Binned $\ell_1$ calibration errors of a recalibrated VGG-net model on CIFAR-10 with $90\%$ confidence intervals. The results are not as conclusive here because the error bars are large, however it seems to suggest that the binned calibration error increases as we increase the number of bins.
        }
        \label{fig:lower_bounds_l1_cifar}
\end{figure}

We provide details on the dataset split for CIFAR-10. For CIFAR-10, we used a VGG16 model and split the test set into 3 sets of size $(1000, 1000, 8000)$, where used the first set of data to recalibrate the model using Platt scaling, the second to select the binning scheme, and the third to measure the binned calibration error. As stated in the main body of the paper, for ImageNet we used a split of $(20000, 5000, 25000)$.

\newpage
\section{Proofs for section~\ref{sec:calibrating_models}}
\label{sec:calibrating_models_appendix}

\newcommand{\G}[0]{\ensuremath{\mathcal{G}}}

Our analysis of the sample complexity of \ourcal{} requires some assumptions on the function family $\G{}$:

\begin{enumerate}
  \item (Finite\pl{-dimensional} parameters). Let $\G{} = \{ g_{\theta} : [0, 1] \to [0, 1] \; | \; \theta \in A \}$ where $A \subseteq \mathbb{R}^{d}$ and $A$ is open.
\item (Injective). For all $g_{\theta} \in \G{}$ we assume $g_{\theta}$ is injective.
\item (Consistency). Intuitively, consistency means that given infinite data, the estimated parameters should converge to the unique optimal parameters in $A$.
More formally, suppose $\theta^* = \argmin_{\theta \in A} \mse(g_{\theta})$.
Then the parameters $\hat{\theta}_n$ estimated by minimizing the empirical MSE with $n$ samples in step 1 of the algorithm, converges in distribution to $\theta^*$, that is, $\hat{\theta}_n \to_D \theta^*$ as $n \to \infty$. Note that consistency inherently assumes identifiability, that there is a unique minimizer $\theta^*$ in the open set $A$.
\item (Regularity). We assume that regularity conditions in Theorem 5.23 of~\cite{vaart98asymptotic} hold, which require the loss to be twice differentiable with symmetric, non-singular Hessian, and that $g_{\theta}(x)$ is Lipschitz in $\theta$ for all $x$. We will also require the second derivative to be continuous.
\end{enumerate}

We assume that $\G{}$ satisfies these assumptions in the rest of this section. Note that aside from injectivity, the remaining conditions are only required for the (fairly standard) analysis of step 1 of the algorithm, which says that a parametric scaling method with a small number of parameters will quickly converge to its optimal error.



\subsection{Calibration bound (Proof of Theorem~\ref{thm:final-calib})}

The goal is to prove the following theorem from Section~\ref{sec:calibrating_models}, which we restate:

\begin{finalCalib}
\finalCalibText{}
\end{finalCalib}

We will analyze each step of our algorithm and then combine the pieces to get Theorem~\ref{thm:final-calib}.
As we mention in the main text, step 3 is the main step, so Lemma~\ref{thm:empirical-binning} is one of the core parts of our proof.
Step 2 is where we construct a binning scheme so that each bin has an equal number of points---we show that this property holds approximately in the population (Lemma~\ref{lem:well-balanced}).
This is important as well, particularly to ensure we can estimate the calibration error.
Step 1 is basically Platt scaling, and the asymptotic analysis is fairly standard.

\textbf{Step 3}: Our proofs will require showing convergence in $\ell_2$ and $\ell_1$ norm in function space, which we define below:

\begin{definition}[Distances between functions]
Given $f, g : [0, 1] \to [0, 1]$, for the $\ell_2$ norm we define $||f - g||_2^2 = \expect[(f(Z) - g(Z))^2]$ and $||f- g||_2 = \sqrt{||f - g||_2^2}$. For the $\ell_1$ norm we define $||f - g||_1 = \expect[\lvert f(Z) - g(Z)\rvert]$
\end{definition}

Recall that we showed that in the limit of infinite data the binned version of $g$, $g_{\bins{}}$, has lower calibration error than $g$ (Proposition~\ref{prop:bin_low_bound}). However our method uses $n$ data points to empirically bin $g$, giving us $\hat{g_{\bins{}}}$. We now show the key lemma that allows us to bound the calibration error and later the mean-squared error. That is, we show that the empirically binned function $\hat{g_{\bins{}}}$ quickly converges to $g_{\bins{}}$ in both $\ell_2$ and $\ell_1$ norms.

\begin{lemma}[Empirical binning]
\label{thm:empirical-binning}
There exist constants $c_B, c_1, c_2$ such that the following is true. Given $g : [0, 1] \to [0, 1]$, binning set $T_3 = \{(z_i, y_i)\}_{i=1}^n$ and a 2-well-balanced binning scheme $\bins{}$ of size $B$. Given $0 < \delta < 0.5$, suppose that $n \geq c_B B\log{\frac{B}{\delta}}$. Then with probability at least $1 - \delta$,  $||\hat{g_{\bins{}}} - g_{\bins{}}||_2 \leq \frac{c_2}{\sqrt{n}}\sqrt{\log{\frac{B}{\delta}}}$ and $||\hat{g_{\bins{}}} - g_{\bins{}}||_1 \leq \frac{c_1}{\sqrt{nB}}\sqrt{\log{\frac{B}{\delta}}}$
\end{lemma}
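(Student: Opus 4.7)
The plan is to combine a multiplicative Chernoff bound on bin occupancy with a per-bin Hoeffding bound that crucially uses the bin widths $w_j := |I_j|$ rather than the trivial range $[0,1]$, and then exploit the fact that $\sum_j w_j = 1$ since the $I_j$ partition $[0,1]$. Write $p_j = \prob(g(Z) \in I_j)$, $\mu_j = \expect[g(Z) \mid g(Z) \in I_j]$ so that $g_{\bins{}}(z) = \mu_{\beta(g(z))}$, and let $\hat\mu_j$ and $n_j$ denote the empirical mean of $g(z_i)$ and the sample count in bin $j$ from $T_3$, so that $\hat{g_{\bins{}}}(z) = \hat\mu_{\beta(g(z))}$. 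Then $\|\hat{g_{\bins{}}} - g_{\bins{}}\|_2^2 = \sum_j p_j(\hat\mu_j - \mu_j)^2$ and $\|\hat{g_{\bins{}}} - g_{\bins{}}\|_1 = \sum_j p_j |\hat\mu_j - \mu_j|$, so the proof reduces to bounding the per-bin errors.

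First I would apply a multiplicative Chernoff bound to each bin count $n_j \sim \mathrm{Binomial}(n, p_j)$: together with the 2-well-balanced lower bound $p_j \geq 1/(2B)$ and the assumption $n \geq c_B B \log(B/\delta)$, this gives $n_j \geq np_j/2 \geq n/(4B)$ uniformly across bins with probability $\geq 1 - \delta/2$. Conditioning on the bins and on this event, I would apply Hoeffding's inequality to the $n_j$ i.i.d.\ variables $g(z_i)$ in bin $j$; since they are confined to $I_j$, which has width $w_j$, Hoeffding gives $|\hat\mu_j - \mu_j| \leq w_j \sqrt{\log(4B/\delta)/(2n_j)}$ with probability $\geq 1 - \delta/(2B)$ per bin. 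A union bound over $j \in [B]$ together with $n_j \geq n/(4B)$ then yields $|\hat\mu_j - \mu_j| \leq w_j \sqrt{2B\log(4B/\delta)/n}$ simultaneously for all $j$ with probability $\geq 1 - \delta$.

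Finally I combine these pointwise bounds with $p_j \leq 2/B$ and the partition constraints $\sum_j w_j = 1$, $\sum_j w_j^2 \leq (\max_j w_j)(\sum_j w_j) \leq 1$. For the $\ell_2$ bound, using $\sum_j p_j w_j^2 \leq (2/B)\sum_j w_j^2 \leq 2/B$,
\[
\|\hat{g_{\bins{}}} - g_{\bins{}}\|_2^2 \;\leq\; \tfrac{2B\log(4B/\delta)}{n}\sum_j p_j w_j^2 \;\leq\; \tfrac{4\log(4B/\delta)}{n},
\]
giving the $c_2/\sqrt{n}$ rate. For the $\ell_1$ bound, using $\sum_j p_j w_j \leq (2/B)\sum_j w_j \leq 2/B$,
\[
\|\hat{g_{\bins{}}} - g_{\bins{}}\|_1 \;\leq\; \sqrt{\tfrac{2B\log(4B/\delta)}{n}}\sum_j p_j w_j \;\leq\; \tfrac{2}{B}\sqrt{\tfrac{2B\log(4B/\delta)}{n}} \;=\; \tfrac{2\sqrt{2\log(4B/\delta)}}{\sqrt{nB}},
\]
giving the sharper $c_1/\sqrt{nB}$ rate.

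The main obstacle, and the source of the $1/\sqrt{B}$ improvement of $\ell_1$ over $\ell_2$, is recognizing that Hoeffding should use the per-bin range $w_j$ rather than the trivial bound of $1$. A naive application using $|g(Z)| \leq 1$ would give $|\hat\mu_j - \mu_j| = \widetilde O(\sqrt{B/n})$ and leave \emph{both} norms stuck at $\sqrt{B/n}$. Exploiting the tighter per-bin range together with the partition identity $\sum_j w_j = 1$ (and Cauchy--Schwarz--style summations weighted by $p_j \leq 2/B$) unlocks both bounds simultaneously; this is the essential variance-reduction mechanism behind why binning the values of the narrow function $g$ is much more sample-efficient than binning raw labels $y_i$.
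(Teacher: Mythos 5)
Your proposal is correct and follows essentially the same route as the paper's proof: a multiplicative Chernoff bound to get $n_j \gtrsim np_j \gtrsim n/B$ in every bin, a per-bin concentration bound (Hoeffding in your case, an equivalent sub-Gaussian tail bound in the paper's) that uses the bin width $w_j$ rather than the trivial range $[0,1]$, and then the partition identity $\sum_j w_j = 1$ together with $p_j \le 2/B$ to obtain the $1/\sqrt{n}$ and $1/\sqrt{nB}$ rates. The only cosmetic difference is that the paper keeps $p_j$ in the per-bin bound so that it cancels exactly in the $\ell_2$ sum (needing the upper half of well-balancedness only for $\ell_1$), whereas you substitute $n_j \ge n/(4B)$ early and invoke $p_j \le 2/B$ in both sums; the resulting bounds are the same up to constants.
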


\begin{proof}

Recall that the intuition is in Figure~\ref{fig:variance_reduced_illustration} of the main text---the $g(z_i)$ values in each bin (gray circles in Figure~\ref{fig:var_red_binning}) are in a narrower range than the $y_i$ values (black crosses in Figure~\ref{fig:hist_binning}) so when we take the average we incur less of an estimation error. Now, there may be a small number of bins where the $g(z_i)$ values are not in a narrow range, but we will use the assumption that $\bins{}$ is 2-well-balanced to show that these effects average out and the overall estimation error is small.

Define $R_j$ to be the set of $g(z_i)$ that fall into the $j$-th bin, given by $R_j = \{g(z_i) \mid g(z_i) \in I_j \wedge (z_i, y_i) \in T_3\}$ (recall that $T_3$ is the data we use in step 3).
Let $p_j$ be the probability of landing in bin $j$, given by $p_j = \prob(g(Z) \in I_j)$.
Since $\bins{}$ is 2-well-balanced, $p_j \geq \frac{1}{2B}$.
Since $n \geq c_B B\log{\frac{B}{\delta}}$, by the multiplicative Chernoff bound, for some large enough $c_B$, with probability at least $1 - \frac{\delta}{2}$, $|R_j| \geq \frac{p_j}{2}$.

Consider each bin $j$. Let $\mu_j$ be the expected output of $g$ in bin $j$, given by $\mu_j = \expect[g(Z) \; | \; g(Z) \in I_j]$. $\mu(R_j)$, the mean of the values in $R_j$, is the empirical average of $|R_j|$ such values, each bounded between $b_{j-1}$ and $b_j$ where $I_j = [b_{j-1}, b_j]$. So $\hat{\mu}(R_j)$ is sub-Gaussian with parameter:

\[ \sigma^2 = \frac{(b_j - b_{j-1})^2}{4|R_j|} \leq \frac{(b_j - b_{j-1})^2}{2p_jn} \]

Then by the sub-Gaussian tail bound, for any $1 \leq j \leq B$, with probability at least $1 - \frac{\delta}{2B}$, we have:
\begin{align} (\mu_j - \hat{\mu}(R_j))^2 \leq \frac{(b_j - b_{j-1})^2}{p_jn} \log{\frac{4B}{\delta}}\label{eqn:1} \end{align}

So by union bound with probability at least $1 - \frac{\delta}{2}$ the above holds for all $1 \leq j \leq B$ simultaneously.

We then bound the $\ell_2$-error.
\begin{align*}
||\hat{g_{\mathcal{B}}} - g_{\mathcal{B}}||_2 &= \sqrt{\sum_{j =1}^B p_j (\mu_j - \hat{\mu}(R_j))^2} \\
&\leq \sqrt{\sum_{j =1}^B p_j \frac{(b_j - b_{j-1})^2}{p_jn} \log{\frac{4B}{\delta}}} \tag{by equation~\eqref{eqn:1}}\\
&= \sqrt{\frac{1}{n} \log{\frac{4B}{\delta}} \sum_{j =1}^B (b_j - b_{j-1})^2 } \\
&\leq \sqrt{\frac{1}{n} \log{\frac{4B}{\delta}} \sum_{j =1}^B (b_j - b_{j-1}) } \tag{because $0\le b_j - b_{j-1}\le 1$}\\
&\leq \sqrt{\frac{1}{n} \log{\frac{4B}{\delta}} } \\
&\leq c_2 \frac{1}{\sqrt{n}} \sqrt{\log{\frac{B}{\delta}}}
\end{align*}

Similarly, we can also bound the $\ell_1$-error. Here we also use the fact that $p_j \leq \frac{2}{B}$ since $\bins{}$ is 2-well-balanced.
\begin{align*}
||\hat{g_{\mathcal{B}}} - g_{\mathcal{B}}||_1 &= \sum_{j =1}^B p_j |\mu_j - \hat{\mu}(R_j)| \\
&\leq \sum_{j =1}^B p_j \sqrt{\frac{(b_j - b_{j-1})^2}{p_jn} \log{\frac{4B}{\delta}}} \\
&=\sum_{j =1}^B \sqrt{\frac{p_j(b_j - b_{j-1})^2}{n} \log{\frac{4B}{\delta}}} \\
&\leq \sum_{j =1}^B \sqrt{\frac{2(b_j - b_{j-1})^2}{Bn} \log{\frac{4B}{\delta}}} \\
&\leq \sqrt{\frac{2}{Bn} \log{\frac{4B}{\delta}}} \sum_{j =1}^B (b_j - b_{j-1}) \\
&\leq c_1 \frac{1}{\sqrt{Bn}} \sqrt{\log{\frac{B}{\delta}}}
\end{align*}
By union bound, these hold with probability at least $1 - \delta$, which completes the proof.
\end{proof}

\textbf{Step 2}: Recall that we chose our bins so that each bin had an equal proportion of points in the recalibration set. In our proofs we required that this property (approximately) holds in the population as well. The following lemma shows this.

\begin{wellBalanced}
\wellBalancedText{}
\end{wellBalanced}

\begin{proof}
Suppose we are given a bin construction set of size $n$, $T_n = \{(z_1, y_1), \dots, (z_n, y_n)\}$.
For any interval $I$, let $\hat{P}(I)$ be the empirical estimate of $P(I) = \prob(g(Z) \in I)$ given by:
\[ \hat{P}(I) = \frac{|\{(z_i, y_i) \in T_n \mid g(z_i) \in I\}|}{n} \]

We constructed the bins so that each interval $I_j$ contains $\frac{n}{B}$ points, or in other words, $\hat{P}(I_j) = \frac{1}{B}$. We want to show that $\frac{1}{2B} \leq \prob(g(Z) \in I_j) \leq \frac{2}{B}$. Since the intervals are chosen from data, we want a uniform concentration result that holds for all such intervals $I_j$.

We will use a discretization argument. The idea is that we will cover $[0, 1]$ with $10B$ disjoint small intervals such that for each of these intervals $I_j'$, $P(g(Z) \in I_j') = \frac{1}{10B}$. We will then use Bernstein and union bound to get that with probability at least $1 - \delta$, for all $I_j'$, $|P(I_j') - \hat{P_j}(I_j')| \leq \frac{1}{100B}$ . Given an arbitrary interval $I$, we can write it as an approximate union of these small intervals, which will allow us to concentrate $|P(I) - \hat{P}(I)|$.

\textbf{Concentrating the small intervals:} Fix some interval $I_j'$. Let $w_i = \mathbb{I}(g(z_i) \in I_j')$ for $i = 1,\dots,n$. Then $w_i \sim \mbox{Bernoulli}(\frac{1}{10B})$. $\hat{P}(I_j')$ is simply the empirical average of $n$ such values and as such with probability at least $1 - \frac{\delta}{10B}$:
\[ |P(I_j') - \hat{P_j}(I_j')| \leq \sqrt{\frac{2}{10Bn} \log{\frac{10B}{\delta}}} + \frac{2}{3n} \log{\frac{10B}{\delta}} \]
If $n = cB \log{\frac{B}{\delta}}$ for a large enough constant $c$, we get:
\[ |P(I_j') - \hat{P_j}(I_j')| \leq \frac{1}{100B} \]
And this was with probability at least $1 - \frac{\delta}{10B}$. So by union bound we get that with probability at least $1 - \delta$ this holds for all $I_j'$.

\textbf{Concentrating arbitrary intervals:} Now consider arbitrary $I \subseteq [0, 1]$. We can approximately write $I$ as a union of the small $I_j'$ intervals. More concretely, we can form a lower bound for $\hat{P}(I)$ by considering all $I_j'$ contained in $I$:
\[ S_L = \{I_j' \mid I_j' \subseteq I \} \]
Similarly we can form an upper bound for $\hat{P}(I)$ by considering all $I_j'$ that have non-empty intersection with $I$:
\[ S_U = \{I_j' \mid I_j' \cap I \neq \emptyset \} \]
We can then show:
\[  \frac{9}{10} P(I) - \frac{1}{5B} \leq \hat{P}(I) \leq \frac{11}{10} P(I) + \frac{1}{5B} \]
Since in our case for all $j$, $\hat{P}(I_j) = \frac{1}{B}$, this gives us:
\[ \frac{1}{2B} \leq P(I_j) \leq \frac{2}{B} \]
\end{proof}

\textbf{Step 1}: Recall that step 1 essentially applies a scaling method---we fit a small number of parameters to the recalibration data.
We show that if $\G{}$ contains $g^* \in \G{}$ with low calibration error, then the empirical risk minimizer $g \in \G{}$ of the mean-squared loss will also quickly converge to low calibration error.
Intuitively, methods like Platt scaling fit a single parameter to the data so standard results in asymptotic statistics tell us they will converge quickly to their optimal error, at least in mean-squared error.
We can combine this with a decomposition of the mean-squared error into calibration and refinement, and the injectivity of $g \in \mathcal{G}$, to show they also converge quickly in calibration error.

\begin{lemma}[Convergence of scaling]
\label{lem:platt_scaling_bound}
Given $\delta$, there exists a constant $c$, such that for all $n$, $(\ce(g))^2 \leq \min_{g' \in \G{}}(\ce(g'))^2 + \frac{c}{n}$, with probability at least $1 - \delta$.
\end{lemma}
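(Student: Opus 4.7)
The plan is to reduce the calibration bound to an excess-risk bound on the empirical MSE minimizer and then invoke classical M-estimator asymptotics. The starting point is the standard decomposition
\begin{align*}
\mse(g') \;=\; \expect\big[(g'(Z) - \expect[Y\mid g'(Z)])^2\big] \;+\; \expect\big[\mathrm{Var}(Y \mid g'(Z))\big],
\end{align*}
whose first term is exactly $(\ce(g'))^2$. The key observation is that since every $g' \in \mathcal{G}$ is injective, $\sigma(g'(Z)) = \sigma(Z)$, and hence $\expect[\mathrm{Var}(Y\mid g'(Z))] = \expect[\mathrm{Var}(Y\mid Z)]$ is a constant $R$ independent of $g'$. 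Letting $g^* = \argmin_{g' \in \mathcal{G}} \mse(g')$, we thus get $\mse(h) - \mse(g^*) = (\ce(h))^2 - (\ce(g^*))^2$ for every $h \in \mathcal{G}$, and $g^*$ is simultaneously the minimizer of $(\ce(\cdot))^2$ over $\mathcal{G}$. It therefore suffices to show that the excess MSE of the ERM $g$ is $O(1/n)$ with probability at least $1 - \delta$.

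For the excess-MSE bound, I would apply Theorem 5.23 of \cite{vaart98asymptotic}: under the stated regularity conditions, the ERM parameter $\hat{\theta}_n$ satisfies $\sqrt{n}(\hat{\theta}_n - \theta^*) \to_D \mathcal{N}(0, V)$ for some finite $V$, where $\theta^*$ parametrizes $g^*$. Since $\theta^*$ is an interior minimizer of the population loss and $\theta \mapsto \mse(g_\theta)$ is twice continuously differentiable with bounded Hessian near $\theta^*$, a second-order Taylor expansion (the first-order term vanishes at $\theta^*$) gives a neighborhood $U$ of $\theta^*$ and a constant $C$ with $\mse(g_\theta) - \mse(g_{\theta^*}) \leq C\|\theta - \theta^*\|^2$ for all $\theta \in U$. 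Combined with consistency of $\hat{\theta}_n$, this yields $\mse(g) - \mse(g^*) \leq C\|\hat{\theta}_n - \theta^*\|^2$ with probability tending to one, and by tightness of $\sqrt{n}(\hat{\theta}_n - \theta^*)$ the right-hand side is $O_p(1/n)$.

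The main subtlety will be upgrading the convergence-in-distribution statement to a high-probability bound that holds \emph{for all} $n$. Tightness of $\sqrt{n}(\hat{\theta}_n - \theta^*)$ supplies, for each $\delta$, constants $M_\delta$ and $N_\delta$ such that $\prob[n\|\hat{\theta}_n - \theta^*\|^2 > M_\delta] \leq \delta$ whenever $n \geq N_\delta$, which handles large $n$. For the finitely many $n < N_\delta$ we use the trivial deterministic bound $(\ce(g))^2 - (\ce(g^*))^2 \leq (\ce(g))^2 \leq 1$ (since $g(Z), Y \in [0,1]$) and absorb these cases into the constant by choosing $c \geq \max(CM_\delta,\, N_\delta)$, which gives $c/n \geq 1$ whenever $n < N_\delta$ and the required $O(1/n)$ bound otherwise.
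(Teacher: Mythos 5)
Your proposal is correct and follows essentially the same route as the paper's proof: the MSE = calibration + refinement decomposition with the refinement term constant by injectivity, Theorem 5.23 of van der Vaart for $\sqrt{n}$-consistency of the parameters, a second-order Taylor expansion of the population loss, and absorption of finitely many small $n$ into the constant. Your handling of the small-$n$ regime via the trivial bound $(\ce(g))^2 \leq 1$ is a slightly cleaner shortcut than the paper's coordinate-wise absorption, but the argument is otherwise identical.
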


\begin{proof}

\textbf{From calibration error to mean-squared error}: We use the classic decomposition of the mean-squared error into calibration error (also known as reliability) and refinement\footnote{Note that the refinement term can be further decomposed into resolution (also known as sharpness) and irreducible uncertainty.}. For any $g \in \G{}$ we have:
\[ \mse(g) = \underbrace{(\ce(g))^2}_{\mbox{calibration}} + \underbrace{\expect[(\expect[Y \mid g(Z)] - Y)^2]}_{\mbox{refinement}} \]
Note that the refinement term is constant for all injective $g \in \G{}$, since for injective $g$:
\[ \expect[(\expect[Y \mid g(Z)] - Y)^2] = \expect[(\expect[Y \mid Z] - Y)^2] \]
This means that the difference in calibration error between any $g$ and $g'$ is precisely the difference in the mean-squared error. So it suffices to upper bound the generalization gap $\mse(g) - \mse(g^*)$ for the mean-squared error. Our analysis is fairly standard: we will show asymptotic convergence in the parameter space, and then use a Taylor expansion to show convergence in the MSE loss.

\textbf{Parameter convergence}: Recall that $\hat{\theta}$ denotes the parameters estimated by optimizing the empirical mean-squared error objective on $n$ samples in step 1 of our algorithm, and $\theta^*$ denotes the optimal parameters that minimize the mean-squared error objective on the population. From Theorem 5.23 of~\cite{vaart98asymptotic}, on the asymptotic parameter convergence of M-estimators, we have as $n \to \infty$:
\[ \sqrt{n}(\hat{\theta} - \theta^*) \to_D N(0, \Sigma) \]
Then for each $1 \leq i \leq d$, we have:
\[ \sqrt{n}(\hat{\theta}_i - \theta^*_i) \to_D N(0, \sigma_i^2) \]
We will show that there exists $c_i$ such that for each $i$ and for all $n$, with probability at least $1 - \frac{\delta}{d}$:
\[ \lvert \hat{\theta}_i - \theta^*_i \rvert \leq \frac{c_i}{n} \]
To see this, we begin with the definition of convergence in distribution, which says that the CDFs converge pointwise at every point where the CDF is continuous, which for a Gaussian is every point. That is, letting $z_i$ be a sample from $N(0, \sigma_i^2)$, we have for all $c$:
\[ \lim_{n \to \infty} \prob( \sqrt{n}(\hat{\theta}_i - \theta^*_i) \geq c) = \prob(z_i \geq c) \]
By considering the CDF at each point and its negative, we can show the same result for the absolute value:
\[ \lim_{n \to \infty} \prob( \sqrt{n} \lvert \hat{\theta}_i - \theta^*_i \rvert \geq c) = \prob(\lvert z_i \rvert \geq c) \]
The tails of a normal distribution are bounded, so we can choose $c_i'$ such that:
\[ \prob(\lvert z_i \rvert \geq c_i') \leq \frac{\delta}{2d} \]
By definition of limit, this means that we can choose $N_i$ such that for all $n \geq N_i$, we have:
\[ \prob( \sqrt{n} \lvert \hat{\theta}_i - \theta^*_i \rvert \geq c_i') \leq \frac{\delta}{d} \]
In other words, for all $n \geq N_i$, with probability at least $1 - \frac{\delta}{d}$:
\[ \lvert \hat{\theta}_i - \theta^*_i \rvert \leq \frac{c_i'}{\sqrt{n}} \]
Since this only does not hold for finitely many values $1, \cdots, N_i - 1$, we can `absorb' these cases into the constant. That is, for each $n \in \{1, \cdots, N_i - 1 \}$, there exists $r_n$ such that if we use $n$ samples, then except with probability $\frac{\delta}{d}$, $\lvert \hat{\theta}_i - \theta^*_i \rvert \leq r_n$. So then we can choose $c_i$ such that for all $n$:
\[ \lvert \hat{\theta}_i - \theta^*_i \rvert \leq \frac{c_i' + \max_{1 \leq m < N_i}{r_m \sqrt{m}}}{\sqrt{n}} \leq \frac{c_i}{\sqrt{n}} \]
We apply union bound over the indices $i$, and can then bound the $\ell_2$-norm of the difference between the estimated and optimal parameters, so that we can choose $k$ such that for all $n$, with probability at least $1 - \delta$:
\[ ||\hat{\theta} - \theta^*||_2^2 \leq \frac{k}{n} \]

\textbf{Loss convergence}: We denote the loss by $L$, defined as:
\[ L(\theta) = \mse(g_{\theta}) = \expect[ (Y - g_{\theta}(X))^2 ] \]
We approximate the loss $L$ by the first few terms of its Taylor expansion, which we denote by $\widetilde{L}$:
\[ \widetilde{L}(\theta) = L(\theta^*) + \nabla L(\theta^*)^T (\hat{\theta} - \theta^*) + (\hat{\theta} - \theta^*)^T \nabla^2 L(\theta^*) (\hat{\theta} - \theta^*) \]
We assumed that $L$ was twice differentiable with continuous second derivative, and that $\theta^*$ minimized the loss in an open set, so $\nabla L(\theta^*) = 0$, and we also have (see e.g. Theorem 3.3.18 in~\cite{hubbard1998vector}):
\[ \lim_{||\hat{\theta} - \theta^*||_2 \to 0} \frac{L(\hat{\theta}) - \widetilde{L}(\hat{\theta})}{||\hat{\theta} - \theta^*||_2^2} = 0 \]
By the definition of a limit if we fix $\epsilon > 0$, there exists $R > 0$ such that if $||\hat{\theta} - \theta^*||_2 \leq R$ then $L(\hat{\theta}) - \widetilde{L}(\hat{\theta}) \leq \epsilon ||\hat{\theta} - \theta^*||_2^2$. For some large enough $N_0$, if $n \geq N_0$, then with probability at least $1 - \delta$, $||\hat{\theta} - \theta^*||_2 \leq R$. As before, since this only does not hold for finitely many $N$, we can fold these cases into the constant so that there exists $\epsilon'$ such that for all $n$, $L(\hat{\theta}) - \widetilde{L}(\hat{\theta}) \leq \epsilon' ||\hat{\theta} - \theta^*||_2^2$ with probability at least $1 - \delta$. Plugging in $\widetilde{L}(\hat{\theta})$, we have:
\[ L(\hat{\theta}) - L(\theta^*) \leq (\hat{\theta} - \theta^*)^T \nabla^2 L(\theta^*) (\hat{\theta} - \theta^*) + \epsilon' ||\hat{\theta} - \theta^*||_2^2 \]
We can bound this by the operator norm of the Hessian, and then use the parameter convergence result:
\[ L(\hat{\theta}) - L(\theta^*) \leq (|| \nabla^2 L(\theta^*) ||_{op} + \epsilon') ||\hat{\theta} - \theta^*||_2^2 \leq \frac{c}{n} \]
which holds with probability at least $1 - \delta$, as desired.

\end{proof}

Finally, we have the tools to prove the main theorem:

\begin{proof}[Proof of Theorem~\ref{thm:final-calib}]
The proof pieces together Lemmas~\ref{lem:platt_scaling_bound},~\ref{thm:empirical-binning},~\ref{lem:well-balanced} and Proposition~\ref{prop:bin_low_bound}.

For any fixed $c_1 > 0$, there exists $c_1'$ such that if $n \geq c_1'\big(\frac{1}{\epsilon^2}\big)$, from Lemma~\ref{lem:platt_scaling_bound}, step 1 of our algorithm gives us $g$ with $(\ce(g))^2 \leq \min_{g' \in \G{}}(\ce(g'))^2 + c_1 \epsilon^2$, with probability at least $1 - \frac{\delta}{3}$.

Next, for universal constant $c_2$, if $n \geq c_2(B \log{\frac{B}{\delta}})$, from Lemma~\ref{lem:well-balanced}, step 2 chooses a 2-well-balanced binning scheme $\bins{}$ with probability at least $1 - \frac{\delta}{3}$.

From Proposition~\ref{prop:bin_low_bound}, $(\ce(g_{\bins{}}))^2 \leq (\ce(g))^2 \leq \min_{g' \in \G{}}(\ce(g'))^2 + c_1 \epsilon^2$. Then from Lemma~\ref{thm:empirical-binning}, for any $c_3 > 0$, there exists $c_3'$ such that if $n \geq c_3'\big(\frac{1}{\epsilon^2} \log{\frac{B}{\delta}}\big)$, step 3 gives us $\hat{g_{\bins{}}}$ with $||\hat{g_{\bins{}}} - g_{\bins{}}||_2 \leq c_3 \epsilon$ with probability at least $1 - \frac{\delta}{3}$. We want to say that since $\hat{g_{\bins{}}}$ is close to $g_{\bins{}}$ and $g_{\bins{}}$ has low calibration error, this must mean that $\hat{g_{\bins{}}}$ has low calibration error.

To do this we represent the ($\ell_2$) calibration error of any $g$ as the distance between $g$ and a perfectly recalibrated version of $g$. That is, we define the perfectly recalibrated version of $g$ as:
\[ \omega(g)(z) = \expect[ Y \mid g(Z) = z ] \]
Then for any $g$, we can write $\ce(g) = ||g - \omega(g)||_2$. By triangle inequality on the $\ell_2$ norm on functions, we have:
\[ ||\hat{g_{\bins{}}} - \omega(g_{\bins{}})||_2 \leq ||\hat{g_{\bins{}}} - g_{\bins{}}||_2 + ||g_{\bins{}} - \omega(g_{\bins{}})||_2 \leq c_3 \epsilon + \sqrt{\min_{g' \in \G{}}(\ce(g'))^2 + c_1 \epsilon^2} \]

Now the LHS is not quite the calibration error of $\hat{g_{\bins{}}}$, which is $||\hat{g_{\bins{}}} - \omega(\hat{g_{\bins{}}})||_2$~\footnote{This is a very technical point, so at a first pass the reader may skip the following discussion.}.
However, since $g$ is injective, $g_{\bins{}}$ takes on a different value for each interval $I_j \in \bins{}$.
If $\hat{g_{\bins{}}}$ also takes on a different value for each interval $I_j \in \bins{}$, then we can see that $\omega(g_{\bins{}}) = \omega(\hat{g_{\bins{}}})$.
If not, $\omega(\hat{g_{\bins{}}})$ can only merge some of the intervals of $\omega(g_{\bins{}})$, and by Jensen's we can show:
\[ ||\hat{g_{\bins{}}} - \omega(\hat{g_{\bins{}}})||_2 \leq ||\hat{g_{\bins{}}} - \omega(g_{\bins{}})||_2 \leq c_3 \epsilon + \sqrt{\min_{g' \in \G{}}(\ce(g'))^2 + c_1 \epsilon^2} \]
An alternative way to see this is to add infinitesimal noise to $\hat{g_{\bins{}}}$ for each interval $I_j$, in which case we get $\omega(g_{\bins{}}) = \omega(\hat{g_{\bins{}}})$.
Finally we convert back from $\ce{}$ to the squared calibration error:
\[ (\ce(\hat{g_{\bins{}}}))^2 = ||\hat{g_{\bins{}}} - \omega(\hat{g_{\bins{}}})||_2^2 \leq \min_{g' \in \G{}}(\ce(g'))^2 + (c_3^2 + c_1) \epsilon^2 + 2\sqrt{(c_3^2 \epsilon^2) \big(\min_{g' \in \G{}}(\ce(g'))^2 + c_1 \epsilon^2\big)} \]
By the AM-GM inequality, we have:
\[ 2\sqrt{(c_3^2 \epsilon^2) \big(\min_{g' \in \G{}}(\ce(g'))^2 + c_1 \epsilon^2\big)} \leq (c_3^2 + c_1) \epsilon^2 + \min_{g' \in \G{}}(\ce(g'))^2 \]
Combining these, we get:
\[ (\ce(\hat{g_{\bins{}}}))^2 \leq 2 \min_{g' \in \G{}}(\ce(g'))^2 + 2(c_3^2 + c_1) \epsilon^2 \]
By e.g. choosing $c_1 = 0.1$ and $c_3 = 0.1$, we have $2(c_3^2 + c_1) \leq 1$, which gives us the desired result. By union bound over each step, we have this with probability at least $1 - \delta$.

\end{proof}

\subsection{Bounding the mean-squared error}

We also show that if we use lots of bins, discretization has little impact on model quality as measured by the mean-squared error.
Note that recalibration itself typically \emph{reduces/improves} the mean-squared error.
However, in our method after fitting a recalibration function like Platt scaling does, we discretize the function outputs.
This reduces the calibration error and allows us to measure the calibration error, but it does increase the mean-squared error by a small amount.
Here we upper bound the increase in mean-squared error.
In other words, our method allows for the calibration error of the final model to be measured, and has little impact on the mean-squared error.

\begin{restatable}[MSE Bound]{proposition}{mseFiniteBinning}
\label{prop:mse-finite-binning}
If $\mathcal{B}$ is a 2-well-balanced binning scheme of size $B$ and $B = \widetilde{\Omega}(n)$, where $\widetilde{\Omega}$ hides $\log$ factors, then $\mse(\hat{g}_{\mathcal{B}}) \leq \mse(g) + O(\frac{1}{B})$.
\end{restatable}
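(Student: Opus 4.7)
}
The plan is to split the gap
\[
\mse(\hat{g}_{\mathcal{B}}) - \mse(g) \;=\; \bigl[\mse(\hat{g}_{\mathcal{B}}) - \mse(g_{\mathcal{B}})\bigr] \;+\; \bigl[\mse(g_{\mathcal{B}}) - \mse(g)\bigr]
\]
into a population binning overhead and an empirical estimation gap, and to control each by $O(1/B)$.

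For the population overhead, I would condition on the bin $I_j$ that $g(Z)$ lands in. Since $g_{\mathcal{B}}(Z)$ equals the constant $\mu_j := \mathbb{E}[g(Z)\mid g(Z)\in I_j]$ on bin $j$, expanding the algebraic identity $(\mu_j - Y)^2 - (g(Z) - Y)^2 = (\mu_j - g(Z))(\mu_j + g(Z) - 2Y)$ and taking conditional expectations (using that $\mathbb{E}[\mu_j - g(Z)\mid g(Z)\in I_j] = 0$) yields the per-bin identity
\[
\mathbb{E}\bigl[(\mu_j - Y)^2 - (g(Z)-Y)^2 \,\big|\, g(Z)\in I_j\bigr]
\;=\; 2\,\mathrm{Cov}\bigl(g(Z),Y \,\big|\, g(Z)\in I_j\bigr) \;-\; \mathrm{Var}\bigl(g(Z) \,\big|\, g(Z)\in I_j\bigr).
\]
The variance term is nonpositive and can be dropped; for the covariance term I would use Cauchy--Schwarz together with the bin-width bound $\mathrm{Var}(g(Z)\mid g(Z)\in I_j)\le (b_j - b_{j-1})^2/4$ (since $g(Z)\in[b_{j-1},b_j]$ on this bin) and the Bernoulli bound $\mathrm{Var}(Y\mid g(Z)\in I_j)\le 1/4$ to get $|\mathrm{Cov}|\le (b_j - b_{j-1})/4$. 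Summing over $j$ with weights $P_j := \prob(g(Z)\in I_j)\le 2/B$ (2-well-balanced) gives
\[
\mse(g_{\mathcal{B}}) - \mse(g) \;\le\; \tfrac{1}{2}\sum_{j} P_j (b_j - b_{j-1}) \;\le\; \tfrac{1}{B}\sum_{j} (b_j - b_{j-1}) \;=\; \tfrac{1}{B}.
\]

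For the empirical piece, the same $a^2 - b^2 = (a-b)(a+b)$ trick gives
\[
\mse(\hat{g}_{\mathcal{B}}) - \mse(g_{\mathcal{B}}) \;=\; \mathbb{E}\bigl[(\hat{g}_{\mathcal{B}}(Z) - g_{\mathcal{B}}(Z))\,(\hat{g}_{\mathcal{B}}(Z) + g_{\mathcal{B}}(Z) - 2Y)\bigr],
\]
and since $\hat{g}_{\mathcal{B}}, g_{\mathcal{B}}, Y \in [0,1]$ the second factor is bounded by $2$ in absolute value, yielding $|\mse(\hat{g}_{\mathcal{B}}) - \mse(g_{\mathcal{B}})| \le 2\,\|\hat{g}_{\mathcal{B}} - g_{\mathcal{B}}\|_1$. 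Lemma~\ref{thm:empirical-binning} then provides $\|\hat{g}_{\mathcal{B}} - g_{\mathcal{B}}\|_1 = O\bigl(\sqrt{\log(B/\delta)/(nB)}\bigr)$ with high probability, and the hypothesis (read as $n = \widetilde{\Omega}(B)$) reduces this to $O(1/B)$. Combining the two pieces yields the stated bound.

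The main obstacle is the population overhead step: one must argue that the intra-bin correlation between $g(Z)$ and $Y$ cannot be much larger than the bin width, and the Cauchy--Schwarz route above handles this cleanly by exploiting only that $Y$ is bounded (no smoothness assumption on $\mathbb{E}[Y\mid g(Z)]$ is needed). The $O(1/B)$ rate is essentially tight, since the average bin width is $1/B$ under any well-balanced scheme; the empirical piece is comparatively routine once the $\ell_1$-bound of Lemma~\ref{thm:empirical-binning} is in hand.
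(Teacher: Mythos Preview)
Your proposal is correct and follows the same two-term decomposition as the paper, with the empirical piece handled identically (the $a^2-b^2$ factorization, the bound $|\mse(\hat g_{\mathcal B})-\mse(g_{\mathcal B})|\le 2\|\hat g_{\mathcal B}-g_{\mathcal B}\|_1$, and the $\ell_1$ bound from Lemma~\ref{thm:empirical-binning}); you also correctly read the hypothesis as $n=\widetilde\Omega(B)$.

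The one genuine difference is the population overhead $\mse(g_{\mathcal B})-\mse(g)$. The paper does not pass through your covariance identity and Cauchy--Schwarz; it simply reuses the same $\ell_1$ lemma a second time, bounding $\|g_{\mathcal B}-g\|_1$ directly by $\sum_j P_j(b_j-b_{j-1})\le \alpha/B$ since on bin $I_j$ the pointwise difference $|g_{\mathcal B}(z)-g(z)|$ is at most the bin width $b_j-b_{j-1}$. This gives $\mse(g_{\mathcal B})\le \mse(g)+2\alpha/B$. Your route is a bit more elaborate but yields a sharper constant (you get $1/B$ rather than $4/B$ for $\alpha=2$), because the covariance decomposition lets you discard the nonpositive $-\mathrm{Var}(g(Z)\mid I_j)$ term and bound only the cross term. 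The paper's version is shorter and more uniform (one lemma applied twice); yours extracts slightly more from the structure.
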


To show this we begin with a lemma showing that if $f$ and $g$ are close in $\ell_1$ norm, then their mean-squared errors are close:

\begin{lemma}
\label{lem:mse-l1}
For $f, g : [0, 1] \to [0, 1]$, $\mse(f) \leq \mse(g) + 2||f - g||_1$.
\end{lemma}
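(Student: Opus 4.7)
The plan is to write the difference $\mse(f) - \mse(g)$ as a single expectation and factor it using the algebraic identity $a^2 - b^2 = (a-b)(a+b)$. Specifically, I would expand
\[ (f(X) - Y)^2 - (g(X) - Y)^2 = (f(X) - g(X))\bigl(f(X) + g(X) - 2Y\bigr), \]
take expectations, and then bound the right-hand factor in absolute value.

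Next I would use the fact that $f, g$ take values in $[0,1]$ and $Y \in \{0,1\} \subseteq [0,1]$, so $f(X) + g(X) - 2Y \in [-2, 2]$, giving the pointwise bound $|f(X) + g(X) - 2Y| \le 2$. Combined with the triangle inequality inside the expectation,
\[ \mse(f) - \mse(g) \le \expect\bigl[\,|f(X) - g(X)| \cdot |f(X) + g(X) - 2Y|\,\bigr] \le 2\,\expect[\,|f(X) - g(X)|\,] = 2\,\|f - g\|_1, \]
which is exactly the claim.

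There is essentially no obstacle here: the result is a short one-line computation once the difference of squares is factored. The only thing to be slightly careful about is that we are bounding $\mse(f) - \mse(g)$ (which could be negative), so the inequality is stated in one direction only; symmetrically one also obtains $\mse(g) \le \mse(f) + 2\|f-g\|_1$, but only the stated direction is needed to later control the MSE increase induced by the binning step in Proposition~\ref{prop:mse-finite-binning}.
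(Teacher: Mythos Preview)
Your proposal is correct and is essentially identical to the paper's proof: both factor $(f-Y)^2-(g-Y)^2$ as $(f-g)(f+g-2Y)$, bound $|f+g-2Y|\le 2$ using that all quantities lie in $[0,1]$, and take expectations. The only cosmetic difference is that the paper writes the argument with the variable $Z$ rather than $X$.
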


\begin{proof}
\begin{align*}
\expect[(f(Z) - Y)^2 - (g(Z) - Y)^2] &= \expect[(f(Z) - g(Z))(f(Z) + g(Z) - 2Y)] \\
&\leq \expect[|f(Z) - g(Z)||f(Z) + g(Z) - 2Y|] \\
&\leq \expect[2|f(Z) - g(Z)|] \\
& =2||f-g||_1
\end{align*}
Where the third line followed because $-2 \leq f(Z) + g(Z) - 2Y \leq 2$.
\end{proof}

Next, we show that in the limit of infinite data, if we bin with a well-balanced binning scheme then the MSE cannot increase by much.

\begin{lemma}
\label{thm:bin-sharpness}
Let $\mathcal{B}$ be an $\alpha$-well-balanced binning scheme of size $B$. Then $\mse(g_{\mathcal{B}}) \leq \mse(g) + \frac{2\alpha}{B}$.
\end{lemma}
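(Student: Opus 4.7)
The plan is to reduce the MSE comparison to an $\ell_1$ distance between $g$ and its binned version $g_{\mathcal{B}}$, and then bound that $\ell_1$ distance using the fact that within each bin $g$ and $g_{\mathcal{B}}$ are trapped inside the same short interval.

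First I would invoke Lemma~\ref{lem:mse-l1}, which immediately gives $\mse(g_{\mathcal{B}}) \leq \mse(g) + 2\|g_{\mathcal{B}} - g\|_1$. So it suffices to show $\|g_{\mathcal{B}} - g\|_1 \leq \alpha/B$, and then the conclusion $\mse(g_{\mathcal{B}}) \leq \mse(g) + 2\alpha/B$ follows.

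Next I would decompose the $\ell_1$ norm by conditioning on which bin $g(Z)$ lands in. Write $I_j = [b_{j-1}, b_j]$ for the intervals of $\mathcal{B}$, and let $p_j = \prob(g(Z) \in I_j)$. By definition, $g_{\mathcal{B}}(Z) = \expect[g(Z) \mid g(Z) \in I_j]$ whenever $g(Z) \in I_j$, so both $g(Z)$ and $g_{\mathcal{B}}(Z)$ lie in $I_j$ and therefore $|g(Z) - g_{\mathcal{B}}(Z)| \leq b_j - b_{j-1}$ on this event. Hence
\begin{equation*}
\|g_{\mathcal{B}} - g\|_1 \;=\; \sum_{j=1}^B p_j \, \expect\!\big[\,|g(Z) - g_{\mathcal{B}}(Z)| \,\big|\, g(Z) \in I_j\big] \;\leq\; \sum_{j=1}^B p_j\,(b_j - b_{j-1}).
\end{equation*}

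Then the $\alpha$-well-balanced assumption gives $p_j \leq \alpha/B$ for every $j$ (using the same convention as in the proof of Lemma~\ref{thm:empirical-binning}, that the well-balanced property is applied to the distribution of $g(Z)$ whose image is being binned). Plugging this in and using that $\{I_j\}$ partitions $[0,1]$ so $\sum_j (b_j - b_{j-1}) = 1$, we obtain $\|g_{\mathcal{B}} - g\|_1 \leq \alpha/B$, which combined with Lemma~\ref{lem:mse-l1} finishes the proof.

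There is no real obstacle here; the argument is a clean two-step reduction. The only subtle point worth flagging is a notational one: the well-balanced definition in the paper is stated in terms of $\prob(Z \in I_j)$, but the binning in the algorithm is applied to $g(Z)$, so I would use the convention (already adopted inside the proof of Lemma~\ref{thm:empirical-binning}) that the well-balanced bound applies to $\prob(g(Z) \in I_j)$. All the work is already done by Lemma~\ref{lem:mse-l1}; the rest is just the observation that binning can displace a value by at most the width of its bin, and that the widths sum to one.
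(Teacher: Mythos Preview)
Your proposal is correct and follows essentially the same argument as the paper: reduce to bounding $\|g_{\mathcal{B}} - g\|_1$ via Lemma~\ref{lem:mse-l1}, condition on the bin, bound the per-bin deviation by the bin width, then use the $\alpha$-well-balanced upper bound $p_j \le \alpha/B$ together with $\sum_j (b_j - b_{j-1}) = 1$. Your flag about the $Z$ vs.\ $g(Z)$ notational mismatch is apt and matches how the paper uses the definition.
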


\begin{proof}
We bound $||g_{\mathcal{B}} - g||_1$ and then use Lemma~\ref{lem:mse-l1}. We use the law of total expectation, conditioning on $\beta(g(Z))$, the bin that $g(Z)$ falls into.
\begin{align*}
||g_{\mathcal{B}} - g||_1 &= \mathbb{E}[|g_{\mathcal{B}}(Z) - g(Z)|] \\
&\leq \mathop{\mathbb{E}}_{\beta(g(Z))} \Big[ \mathop{\mathbb{E}}_{Z | \beta(g(Z))} [ |g_{\mathcal{B}}(Z) - g(Z)| ]\Big]\\
&\leq \mathop{\mathbb{E}}_{\beta(g(Z))} \Big[ b_{\beta(g(Z))} - b_{\beta(g(Z))-1}\Big]
\end{align*}
We now use the fact that $\mathcal{B}$ is $\alpha$-well-balanced.
\begin{align*}
\mathop{\mathbb{E}}_{\beta(g(Z))} \Big[ (b_{\beta(g(Z))} - b_{\beta(g(Z))-1})\Big] &= \sum_{i=1}^B \prob\big(g(Z) \in [b_{\beta(g(Z))-1}, b_{\beta(g(Z))}]\big) (b_{\beta(g(Z))} - b_{\beta(g(Z))-1}) \\
&\leq \sum_{i=1}^B \frac{\alpha}{B} (b_{\beta(g(Z))} - b_{\beta(g(Z))-1}) \\
&\leq \frac{\alpha}{B}
\end{align*}
Finally, from Lemma~\ref{lem:mse-l1}, we get that $\mse(g_{\mathcal{B}}) \leq \mse(g) + \frac{2\alpha}{B}$.
\end{proof}

The above lemma bounds the increase in MSE due to binning in the infinite sample case -- next we deal with the finite sample case and prove proposition~\ref{prop:mse-finite-binning}:

\begin{proof}[Proof of Proposition~\ref{prop:mse-finite-binning}:]
Ignoring all $\log$ factors, from Theorem~\ref{thm:empirical-binning} if $n = \widetilde{\Omega}(B)$, we have $||\hat{g}_{\mathcal{B}} - g_{\mathcal{B}}||_1 = O(\frac{1}{\sqrt{nB}})$. Then from  Lemma~\ref{lem:mse-l1}, $\mse(\hat{g}_{\mathcal{B}}) \leq \mse(g_{\mathcal{B}}) + O(\frac{1}{\sqrt{Bn}}) \leq \mse(g_{\mathcal{B}}) + O(\frac{1}{B})$. From Theorem~\ref{thm:bin-sharpness}, since $\mathcal{B}$ is 2-well-balanced, we have  $\mse(g_{\mathcal{B}}) \leq \mse(g) + O(\frac{1}{B})$. This gives us $\mse(\hat{g}_{\mathcal{B}}) \leq \mse(g) + O(\frac{1}{B})$.
\end{proof}

\subsection{Alternative binning schemes}
\label{sec:alt_binning_schemes}

We note that there are alternative binning schemes in the literature.
For example, the $B$ bins can be chosen as $I_1 = [0, \frac{1}{B}], I_2 = (\frac{1}{B}, \frac{2}{B}], \dots, I_B = (\frac{B-1}{B}, 1]$.
The main problem with this binning scheme is that we may not be able to measure the calibration error efficiently, which is critical.
However, if we choose the bins like this, and are lucky that the binning scheme happens to be 2-well-balanced, we can improve the bounds on the MSE that we proved above.
This motivates alternative hybrid binning schemes, where we try to keep the width of the bins as close to $1/B$ as possible, while ensuring that each bin contains lots of points as well.
We think analyzing what binning schemes lead to the best bounds, and seeing if this can improve the calibration method, is a good direction for future research.

\section{Experimental details and ablations for section~\ref{sec:calibrating_models}}

\label{sec:calibrating_models_appendix_experiments}

We give more experimental details for our CIFAR-10 experiment, show experimental results for top-label calibration in ImageNet and CIFAR-10, and give details and results for our synthetic experiments. Note that the code is available in the supplementary folder for completeness.

\textbf{Experimental details}: We detail our experimental protocol for CIFAR-10 first. The CIFAR-10 validation set has 10,000 data points. We sampled, with replacement, a recalibration set of 1,000 points. In our theoretical approach and analysis, we split up these sets into multiple parts. For example, we used the first part for training a function, second part for bin construction, third part for binning. In practice, using the same set for all three steps worked out better, for both histogram binning and \ourcal{}. We believe that there may be theoretical justification for merging these sets, although we leave that for future work. For the marginal calibration experiment we ran either \ourcal{} (we fit a sigmoid in the function fitting step) or histogram binning. We calibrated each of the $K$ classes seprately as described in Section~\ref{sec:formulation}, and measured the marginal calibration error on the entire set of 10K points. We repeated this entire procedure 100 times, and computed mean and 90\% confidence intervals.

\begin{figure}
  \centering
  \centering
  	 \begin{subfigure}[b]{0.48\textwidth}
         \centering
         \includegraphics[width=\textwidth]{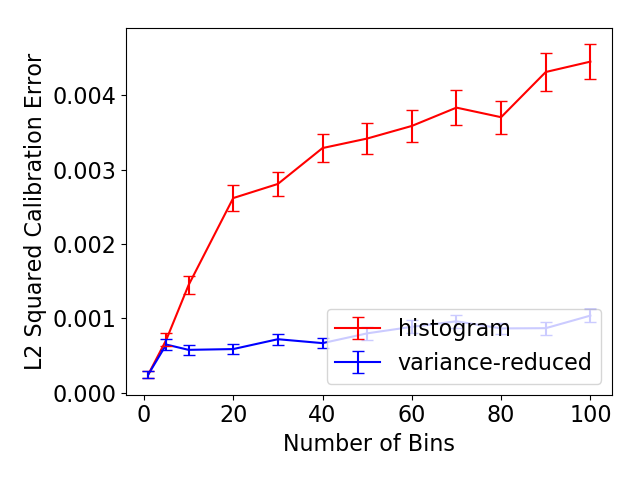}
         \caption{Effect of number of bins $B$ on top calibration error on ImageNet.
         }
         \label{fig:imagenet_top_cal_var_red}
     \end{subfigure}
     \hfill
     \begin{subfigure}[b]{0.48\textwidth}
         \centering
         \includegraphics[width=\textwidth]{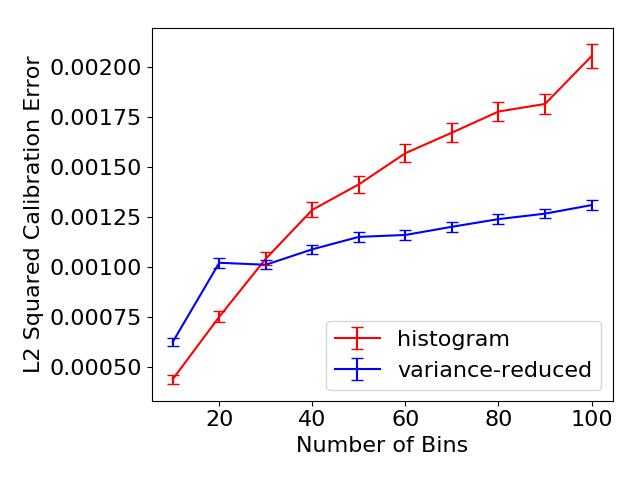}
         \caption{Effect of number of bins $B$ on top calibration error on CIFAR-10.
         }
         \label{fig:cifar_top_cal_var_red}
     \end{subfigure}
  \caption{
    Recalibrating using 1,000 data points on ImageNet and CIFAR-10, \ourcal{} typically achieves lower squared calibration error than histogram binning, especially when the number of bins $B$ is large. The difference is very significant on ImageNet, where our method does better when $B \geq 10$, and gets a nearly 5 times lower calibration error when $B = 100$. For CIFAR-10 our method does better when $B > 30$, which supports the theory, which predicts that our method does better when $B$ is large. However, when $B$ is small, practitioners should try both histogram binning and \ourcal{}.
    \pl{still need to update the legend to scaling-binning}
}
  \label{fig:mse_estimators_bins}
\end{figure}

In this experiment, we are checking a very precise hypothesis---assuming that the empirical distribution on the 10,000 validation points is the true data distribution, how do these methods perform? This is similar to the experimental protocol used in e.g.~\cite{brocker2012empirical}.
An alternative experimental protocol would have been to first split the CIFAR-10 data into two sets of size $(1000, 9000)$.
We could have then used the first set to recalibrate the model using either \ourcal{} or histogram binning, and then used the remaining 9,000 examples to estimate the calibration error on the ground truth distribution, using Bootstrap to compute confidence intervals.
However, when we ran this experiment, we noticed that the results were very sensitive to which set of 1,000 points we used to recalibrate.
Multiple runs of this experiment led to very different results.
The point is that there are two sources of randomness---the randomness in the data the recalibration method operates on, and the randomness in the data used to evaluate and compare the recalibrators.
In our protocol we account for both of these sources of randomness.

\textbf{Top-label calibration experiments}: We also ran experiments on top-label calibration, for both ImageNet and CIFAR-10. The protocol is exactly as described above, except instead of calibrating each of the $K$ classes, we calibrated the top probability prediction of the model. More concretely, for each input $x_i$, the uncalibrated model outputs a probability $p_i$ corresponding to its top prediction $k_i$, where the true label is $y_i$. We create a new dataset $\{(p_1, \mathbb{I}(k_1 = y_1)), \dots, (p_n, \mathbb{I}(k_n = y_n))\}$ and run \ourcal{} (fitting a sigmoid in the function fitting step, as in Platt scaling) or histogram binning on this dataset, using $B$ bins. This calibrates the probability corresponding to the top prediction of the model. We evaluate the recalibrated models on the top-label calibration error metric described in Section~\ref{sec:formulation}. For both CIFAR-10 and ImageNet we sampled, with replacement, a recalibration set of 1,000 points for the recalibration data, and we measured the calibration error on the entire set (10,000 points for CIFAR-10, and 50,000 points for ImageNet) as above. We show $90\%$ confidence intervals for all plots.

Figure~\ref{fig:imagenet_top_cal_var_red} shows that on ImageNet \ourcal{} gets significantly lower calibration errors than histogram binning when $B \geq 10$, and nearly a 5 times lower calibration error when $B = 100$. Both methods get similar calibration errors when $B = 1$ or $B = 5$. Figure~\ref{fig:cifar_top_cal_var_red} shows that on CIFAR-10 when $B$ is high, \ourcal{} gets lower calibration errors than histogram binning, but when $B$ is low histogram binning gets lower calibration errors. We believe that the difference might be because the CIFAR-10 model is highly accurate at top-label prediction to begin with, getting an accuracy of over $93\%$, so there is not much scope for re-calibration. In any case, this ablation tells us that practitioners should try multiple methods when recalibrating their models and evaluate their calibration error.

\textbf{(A) Synthetic experiments to validate bounds}: We first describe the scaling family we use, which is Platt scaling after applying a log-transform~\cite{platt1999probabilistic}, otherwise known as beta calibration~\cite{kull2017sigmoids}. Let $\sigma$ be the standard sigmoid function given by:
\[ \sigma(x) = \frac{1}{1 + \exp(-x)} \]
Then, our recalibration family $\mathcal{G}$ consists of $g$ parameterized by $a, c$, given by:
\[ g(z; a, c) = \sigma\Big( a\log{\frac{z}{1-z}} + c \Big) \]
In this set of synthetic experiments, we assume well-specification, that is $P(Y = 1 \mid Z=z) = g(z; a, c)$ for some $a, c$. We set $P(Z) = \mbox{Uniform}[0, 1]$. Since we know $P(Y = 1 \mid Z)$, we can approximate the true squared calibration error in this case, even for scaling methods. To do this, we sample $m=10000$ points $z_1, \dots, z_m$ independently from $P(Z)$. An \emph{unbiased} estimate of the squared calibration error then is:
\[ (\ce(g))^2 \approx \frac{1}{m} \sum_{i=1}^m \big[P(Y \mid Z = z_i) - g(z_i)\big]^2 \]
For each $n$ (number of recalibration samples) and $B$ (number of bins), we run either histogram binning or \ourcal{} with scaling family $\mathcal{G}$ and evaluate its calibration error as described above. We repeat this 1000 times, and compute 90\% confidence intervals. We fix $a = 2$ and $c = 1$.

\begin{figure}
  \centering
  \centering
     \begin{subfigure}[b]{0.48\textwidth}
         \centering
         \includegraphics[width=\textwidth]{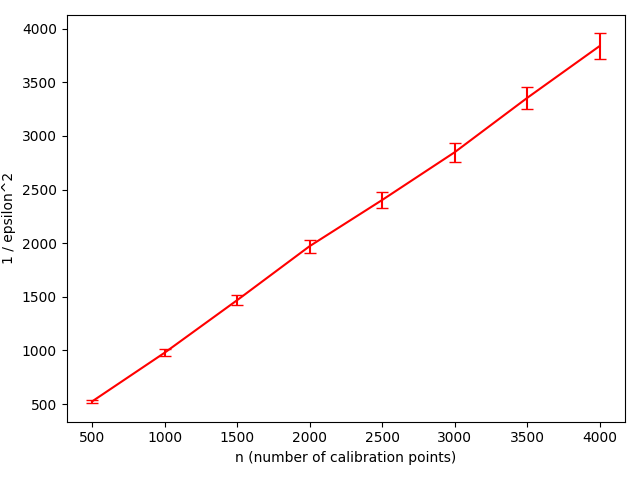}
         \caption{Histogram binning.
         }
         \label{fig:well-spec-vary-n-hist}
     \end{subfigure}
     \hfill
     \begin{subfigure}[b]{0.48\textwidth}
         \centering
         \includegraphics[width=\textwidth]{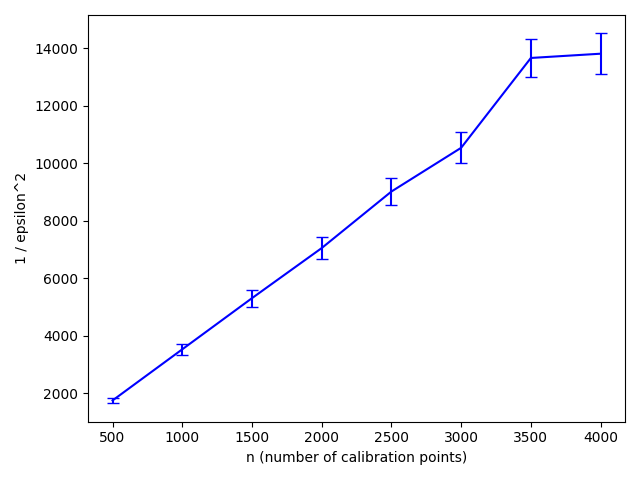}
         \caption{\Ourcal{}.
         }
         \label{fig:well-spec-vary-n-var-red}
     \end{subfigure}
  \caption{
    Plots of $1/\epsilon^2$ against $n$ (recall that $\epsilon^2$ is the squared calibration error). We see that for both methods $1/\epsilon^2$ increases approximately linearly with $n$, which match the theoretical bounds.
    \tnote{does it make sense to make this two plots have the same y limit? Or we could even put the two curves one the same figure?}
}
  \label{fig:well-spec-vary-n}
\end{figure}

In the first sub-experiment we fix $B = 10$ and vary $n$, plotting $1/\epsilon^2$ in Figure~\ref{fig:well-spec-vary-n} (recall that $\epsilon^2$ is the squared calibration error). We plot the calibration errors for each method in a different plot because of the difference in scales, \ourcal{} achieves a much lower calibration error than histogram binning. As the theory predicts, we see that $1/\epsilon^2$ is approximately linear in $n$ for both calibrators. For example, when $B=10$ if we increase from $n=1000$ to $n=2000$ the squared calibration error of histogram binning decreases by $2.00 \pm 0.06$ times, and the squared calibration error of our method decreases by $1.98 \pm 0.09$ times.

\begin{figure}
  \centering
  \centering
     \begin{subfigure}[b]{0.48\textwidth}
         \centering
         \includegraphics[width=\textwidth]{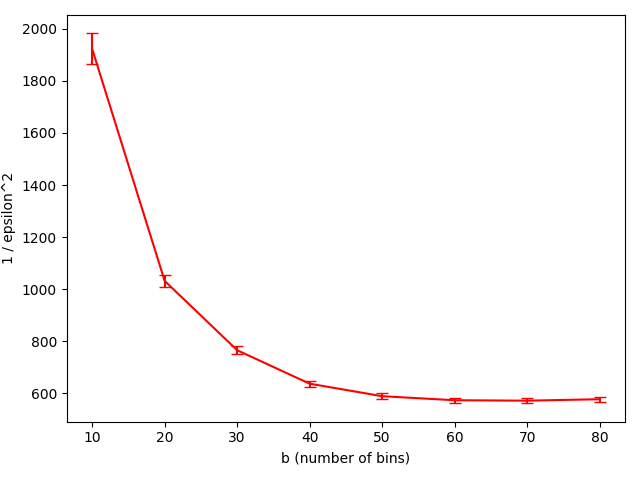}
         \caption{Histogram binning.
         }
         \label{fig:well-spec-vary-b-hist}
     \end{subfigure}
     \hfill
     \begin{subfigure}[b]{0.48\textwidth}
         \centering
         \includegraphics[width=\textwidth]{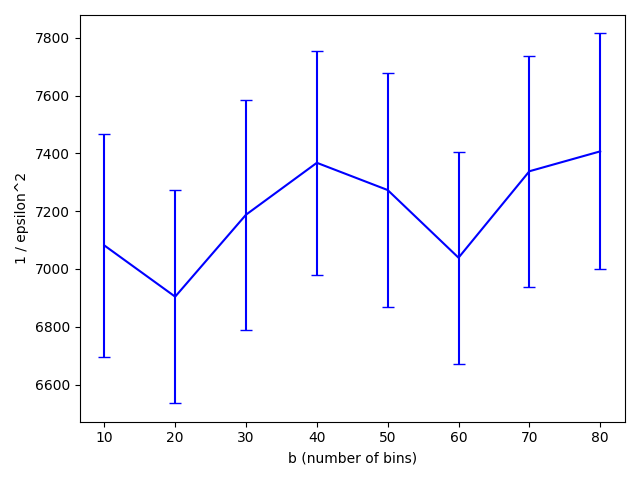}
         \caption{\Ourcal{}.
         }
         \label{fig:well-spec-vary-b-var-red}
     \end{subfigure}
  \caption{
    Plots of $1/\epsilon^2$ against $b$ (recall that $\epsilon^2$ is the squared calibration error). Note that the $Y$ axis for \ourcal{} is clipped to 6600 and 7800 to show the relevant region. We see that for histogram binning $1/\epsilon^2$ scales close to $1/B$, in other words the calibration error increases with the number of bins (important note: the plot decreases because we plot the inverse $1/\epsilon^2$). For \ourcal{} $1/\epsilon^2$ is relatively constant, within the margin of estimation error, as predicted by the theory.
    \tnote{similar comments to the those for the figure above}
}
  \label{fig:well-spec-vary-b}
\end{figure}

In the second sub-experiment we fix $n = 2000$ and vary $B$, plotting $1/\epsilon^2$ in Figure~\ref{fig:well-spec-vary-b}. For \ourcal{} $1/\epsilon^2$ is nearly constant (within the margin of error), but for histogram binning $1/\epsilon^2$ scales close to $1/B$. When $n = 2000$ and we increase from $5$ to $20$ bins, our method's squared calibration error decreases by $2\% \pm 7\%$ but for histogram binning it increases by $3.71 \pm 0.15$ \emph{times}. For reference, we plot $P(Y \mid Z = z)$ in Figure~\ref{fig:well-spec-curve}.

\begin{figure}
  \centering
  \centering
     \begin{subfigure}[b]{0.48\textwidth}
         \centering
         \includegraphics[width=\textwidth]{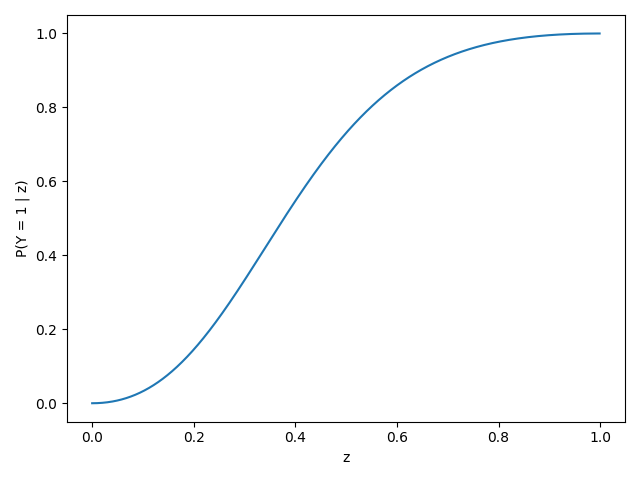}
         \caption{$P(Y \mid Z=z)$ for Experiment (A)
         }
         \label{fig:well-spec-curve}
     \end{subfigure}
     \hfill
     \begin{subfigure}[b]{0.48\textwidth}
         \centering
         \includegraphics[width=\textwidth]{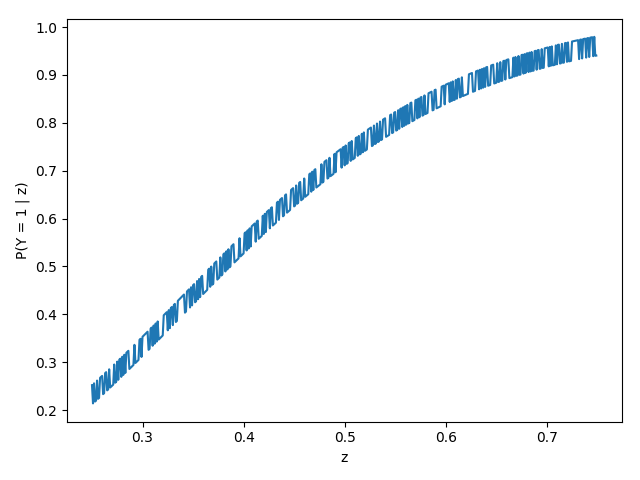}
         \caption{$P(Y \mid Z=z)$ for Experiment (B)
         }
         \label{fig:noisy-spec-curve}
     \end{subfigure}
  \caption{
    Plots of $P(Y \mid Z=z)$ against $z$ for both synthetic experiments.
}
  \label{fig:p_y_z_plots}
\end{figure}

\textbf{(B) Synthetic experiments to compare \ourcal{} and the scaling method}: We run an illustrative toy experiment to show that there are some cases where \ourcal{} does better than the underlying scaling method---there are other cases where the underlying scaling method does better. \ourcal{} can do better because if we have infinite data, Proposition~\ref{prop:bin_low_bound} showed that the binned version $g_{\bins{}}$ has lower calibration error than $g$. On the other hand, in step 3 of \ourcal{} algorithm we empirically bin the outputs of the scaling method which incurs an estimation error, and could mean \ourcal{} has higher calibration error than the underlying scaling method. Our key advantage is that unlike scaling methods our method has measurable calibration error so if we are not calibrated we can get more data or use a different scaling family.

Building on the previous synthetic experiments, in this experiment, we set the ground truth $P(Y = 1 \mid Z=z) = g(z; a, c) + h(z)$ where for each $z$, $h(z) \in \{-0.02, 0.02\}$ with equal probability. In this case we set $P(Z) = \mbox{Uniform}[0.25, 0.75]$ so that $P(Y = 1 \mid Z=z) \in [0, 1]$. We fix $B=10$ and vary $n$, plotting the squared calibration error $\epsilon^2$ in Figure~\ref{fig:well-spec-vary-b}. With $B=10$ bins, $n = 3000$ the squared calibration error is $5.2 \pm 1.1$ times lower for \ourcal{} than the underlying scaling method using a sigmoid recalibrator. For reference, we plot $P(Y \mid Z = z)$ in Figure~\ref{fig:noisy-spec-curve}.

\begin{figure}
  \centering
  \includegraphics[width=0.6\textwidth]{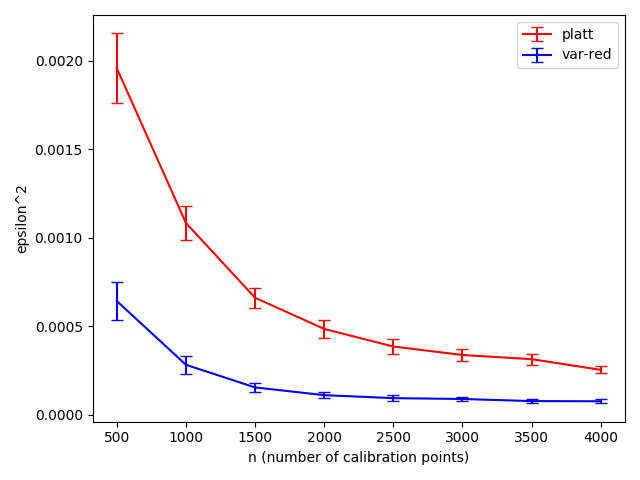}
  \caption{Plot of $\epsilon^2$ (squared calibration error) against number of samples $n$ used to recalibrate. We can see in this case \ourcal{} consistently gets lower calibration error.
  }
  \label{fig:well-spec-vary-b}
\end{figure}

\newpage

\section{Proofs for section~\ref{sec:verifying_calibration}}
\label{sec:verifying_calibration_appendix}

In this section we prove the finite sample bounds for the plugin and debiased estimators. We follow a very similar structure for both the plugin estimator and the debiased estimators.

We first give a proof for the plugin estimator. At a high level, we decompose the plugin estimator into three terms (Lemma~\ref{lem:plugin_decomp}), and then bound each of these terms. Most of these terms simply involve algebraic manipulation and standard concentration results, except Lemma~\ref{lem:p2_bound} which requires some tricky conditioning.

The debiased estimator decomposes into three terms as well, two of these terms are the same as those in the plugin estimator. Bounding the third term (Lemma~\ref{lem:c3_bound}) is the key to the improved sample complexity of the plugin estimator. The debiased estimator is not completely unbiased. However, with high probability if we condition on the $x_i$s in the evaluation set, each of these error terms is unbiased. We can then use Hoeffding's to concentrate each term near 0. The errors in each bin are then independent which leads to some cancelations of the error terms when we sum them up.

\textbf{We use the following notation simplification} to simplify the theorem statements and proofs:
\[ p_i = P(f(X) = s_i) \]
\[ y_i^* = \mathbb{E}[Y \; | \; f(X) = s_i] \]
\[ e_i = (s_i - y_i^*) \]

Then, if we let ${E^*}^2$ denote the actual squared calibration error, we have:
\[ {E^*}^2 = \sum_{i=1}^b p_i e_i^2 \]


We begin by noting that $\hat{p_i}$ is close to $p_i$ for all $i$. This is a standard application of either Bernstein's inequality or the multiplicative Chernoff bound.

\begin{lemma}
\label{lem:pi_bound}
Suppose $p_i > \piSmallBound{}$ for all $i$. Then we can define $c(n) < 0.5$ such that except with probability $\delta$ for all $i$ we have:
\[ |\hat{p_i} - p_i| < c(n)p_i := \sqrt{\frac{3}{n \min p_i} \log{\frac{2B}{\delta}}} p_i \]
\end{lemma}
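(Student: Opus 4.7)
The plan is to apply a multiplicative Chernoff bound to each $\hat{p}_i$ individually and then union bound over the $B$ model outputs. Fix $i$; since each $x_j$ in the evaluation set is drawn i.i.d.\ from the data distribution, $n\hat{p}_i = |L_{s_i}|$ is a sum of $n$ i.i.d.\ Bernoulli$(p_i)$ indicators, so the multiplicative Chernoff bound gives, for any $\epsilon_i \in (0,1)$,
\[
\Pr\bigl(|\hat{p}_i - p_i| \ge \epsilon_i p_i\bigr) \le 2\exp\!\bigl(-np_i\epsilon_i^2/3\bigr).
\]

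Next I would solve for $\epsilon_i$ so that the right-hand side equals $\delta/B$: taking $\epsilon_i = \sqrt{(3/(np_i))\log(2B/\delta)}$ works. At this point it is convenient to \emph{loosen} the bound by replacing $p_i$ in the denominator with $\min_i p_i$, producing a single $\epsilon$ that works simultaneously for every bin and matches the quantity $c(n)$ claimed in the statement. A union bound over the $B$ bins then yields, with probability at least $1-\delta$,
\[
|\hat{p}_i - p_i| \le c(n)\, p_i \qquad \text{for all } i,
\]
where $c(n) := \sqrt{(3/(n\min_i p_i))\log(2B/\delta)}$.

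Finally I would verify the side condition $c(n) < 1/2$, which is needed both to satisfy the hypothesis $\epsilon_i \in (0,1)$ of the multiplicative Chernoff bound in its usual form and to justify the claim in the statement. Squaring shows this is equivalent to $n \min_i p_i > 12\log(2B/\delta)$, i.e.\ $\min_i p_i > (12/n)\log(2B/\delta)$, which is precisely the hypothesis $p_i > \frac{12}{n}\log\frac{2B}{\delta}$ for all $i$.

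There is no real obstacle here: this is a direct, textbook application of the multiplicative Chernoff bound combined with a union bound. The only mild subtlety is replacing $p_i$ by $\min_i p_i$ inside the square root so that a single uniform constant $c(n)$ can be used across all bins; this is a loose step but it is what lets the downstream analysis (Lemmas on the plugin and debiased estimators) treat all bins symmetrically.
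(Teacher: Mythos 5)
Your proof is correct and follows exactly the route the paper intends: the paper gives no detailed argument for this lemma, stating only that it is ``a standard application of either Bernstein's inequality or the multiplicative Chernoff bound,'' and your multiplicative-Chernoff-plus-union-bound derivation (with the loosening to $\min_i p_i$ and the check that the hypothesis $p_i > \frac{12}{n}\log\frac{2B}{\delta}$ yields $c(n) < 1/2$) fills in precisely those details. No gaps.
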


\subsection{Analysis of plugin estimator (proof of Theorem~\ref{thm:final-plugin})}

The following lemma is crucial -- we decompose the plugin estimator into three terms that we can bound separately.

\begin{lemma}[Plugin decomposition]
\label{lem:plugin_decomp}
The plugin estimator satisfies the following decomposition:
\[ \pluginEst{} = \underbrace{\sum_{i=1}^b \hat{p_i}e_i^2}_{(P1)}  - \underbrace{2\sum_{i=1}^b \hat{p_i}e_i(\hat{y_i} - y_i^*)}_{(P2)} + \underbrace{\sum_{i=1}^b \hat{p_i}(\hat{y_i} - y_i^*)^2}_{(P3)} \]
\end{lemma}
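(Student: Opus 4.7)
The plan is to prove this by a straightforward algebraic manipulation: we add and subtract $y_i^*$ inside each squared term of $\pluginEst{}$ and then expand the square. Concretely, using the definition $e_i = s_i - y_i^*$, we can write
\[ s_i - \hat{y_i} = (s_i - y_i^*) - (\hat{y_i} - y_i^*) = e_i - (\hat{y_i} - y_i^*). \]
Squaring gives
\[ (s_i - \hat{y_i})^2 = e_i^2 - 2 e_i (\hat{y_i} - y_i^*) + (\hat{y_i} - y_i^*)^2. \]

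Next I would multiply both sides by $\hat{p_i}$, sum over $i$ from $1$ to $b$, and recognize the three resulting sums as exactly the terms $(P1)$, $(P2)$, and $(P3)$ in the statement. Since by Definition~\ref{dfn:plugin-estimator} we have $\pluginEst{} = \sum_{i=1}^b \hat{p_i}(s_i - \hat{y_i})^2$ (after reindexing $s \in S$ as $s_1, \dots, s_b$), this immediately yields the claimed identity.

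There is no real obstacle here: the statement is a pure identity that holds for every realization of the data, with no probabilistic content and no assumption on the $\hat{p_i}$ or $\hat{y_i}$. The work of the section lies in bounding the three terms $(P1)$, $(P2)$, $(P3)$ separately in the subsequent lemmas (e.g.\ controlling $(P2)$ via the conditioning argument alluded to in Lemma~\ref{lem:p2_bound}), and the purpose of this decomposition lemma is simply to isolate a ``main term'' $(P1)$ that concentrates around the true squared calibration error $\sum_i p_i e_i^2$ and two error terms $(P2), (P3)$ driven by the deviation $\hat{y_i} - y_i^*$ that can be bounded by standard concentration.
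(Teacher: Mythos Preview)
Your proposal is correct and matches the paper's proof essentially line for line: the paper also writes $s_i - \hat{y_i} = e_i - (\hat{y_i} - y_i^*)$, expands the square, and sums against $\hat{p_i}$, noting simply that ``the proof is by algebra.''
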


\begin{proof}
The proof is by algebra:
\begin{align*}
\pluginEst{} &= \sum_{i=1}^b \hat{p_i}(s_i - \hat{y_i})^2 \\
&= \sum_{i=1}^b \hat{p_i}[e_i - (\hat{y_i} - y_i^*)]^2 \\
&= \underbrace{\sum_{i=1}^b \hat{p_i}e_i^2}_{(P1)}  - \underbrace{2\sum_{i=1}^b \hat{p_i}e_i(\hat{y_i} - y_i^*)}_{(P2)} + \underbrace{\sum_{i=1}^b \hat{p_i}(\hat{y_i} - y_i^*)^2}_{(P3)}
\end{align*}
\end{proof}

We now bound each of these three terms with the following three lemmas. We condition on $|\hat{p_i} - p_i| < c(n)p_i < 0.5p_i$ for all $i$, which holds with high probability from Lemma~\ref{lem:pi_bound}.

\tm{In general, writing proofs is like writing code. lemmas and theorems have type checked inputs and outputs. Inputs are conditiond and definitions, outputs ar conclusions. }
\begin{lemma}
\label{lem:p1_bound}
Let $(P1)$ be as defined in Lemma~\ref{lem:plugin_decomp}.
Suppose $|\hat{p_i} - p_i| < c(n) p_i$ for all $i$. Then
\[ |(P1) - {E^*}^2| \leq c(n) {E^*}^2 \]
\end{lemma}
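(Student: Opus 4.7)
\textbf{Proof plan for Lemma~\ref{lem:p1_bound}.} This step is a direct algebraic manipulation: the plan is to expand the difference $(P1) - {E^*}^2$ as a single sum and then apply the hypothesis $|\hat{p}_i - p_i| < c(n) p_i$ termwise.

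First, using the definitions $(P1) = \sum_{i=1}^{B} \hat{p}_i e_i^2$ and ${E^*}^2 = \sum_{i=1}^{B} p_i e_i^2$, I would write
\[
(P1) - {E^*}^2 \;=\; \sum_{i=1}^{B} (\hat{p}_i - p_i)\, e_i^2 .
\]
Then by the triangle inequality followed by the assumed bound on $|\hat{p}_i - p_i|$,
\[
|(P1) - {E^*}^2| \;\le\; \sum_{i=1}^{B} |\hat{p}_i - p_i|\, e_i^2 \;\le\; c(n) \sum_{i=1}^{B} p_i\, e_i^2 \;=\; c(n)\, {E^*}^2 ,
\]
which is the desired inequality. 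The only subtlety is that $e_i^2 \ge 0$, so the triangle inequality really is just pulling absolute values inside the sum; no sign-cancellation issues arise.

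There is essentially no main obstacle here: this is the easy one of the three plugin-decomposition bounds because $e_i^2$ is deterministic given the binning, so all the randomness lives in $\hat{p}_i$, which has already been controlled uniformly by Lemma~\ref{lem:pi_bound}. The heavier lifting will come in Lemma~\ref{lem:p2_bound} and the debiased-estimator analogue (Lemma~\ref{lem:c3_bound}), where one must handle products involving $\hat{y}_i - y_i^*$ and exploit cross-bin independence via careful conditioning; here, by contrast, a single line of algebra plus one application of the hypothesis suffices.
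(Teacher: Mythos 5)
Your proposal is correct and matches the paper's proof essentially line for line: both expand $(P1)-{E^*}^2$ as $\sum_i (\hat{p}_i - p_i)e_i^2$, apply the triangle inequality termwise (using $e_i^2 \geq 0$), and then invoke the hypothesis $|\hat{p}_i - p_i| < c(n)p_i$ to recover $c(n){E^*}^2$. No gap here.
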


\begin{proof}
The proof is by algebra. 
\begin{align*}
|(P1) - {E^*}^2| &= | \sum_{i=1}^b \hat{p_i}e_i^2 - \sum_{i=1}^b p_i e_i^2 | \\
&= | \sum_{i=1}^b (\hat{p_i} - p_i) e_i^2 | \\
&\leq \sum_{i=1}^b |(\hat{p_i} - p_i)| e_i^2 \\
&\leq \sum_{i=1}^b c(n) p_i e_i^2 \\
&\leq c(n) \sum_{i=1}^b p_i e_i^2 \\
&\leq c(n) {E^*}^2
\end{align*}
\end{proof}

\begin{lemma}
\label{lem:p2_bound}
Let $(P2)$ be as defined in Lemma~\ref{lem:plugin_decomp}.
Suppose $|\hat{p_i} - p_i| < c(n)p_i < 0.5p_i$ for all $i$. Then with probability $\geq 1 - \delta$:
\[ |(P2)| \leq \sqrt{\frac{2(1+c(n)){E^*}^2}{n} \log{\frac{2}{\delta}}} \]
\end{lemma}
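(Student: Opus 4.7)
The plan is to bound $(P2)$ via Hoeffding's inequality after an appropriate conditioning argument. The central difficulty is that $\hat{p_i}$ and $\hat{y_i}$ are statistically dependent (both are functions of which samples land in bin $i$), which rules out a direct application of independence-based concentration. To get around this, I would condition on the bin assignments, i.e.\ on the values $f(x_1),\dots,f(x_n)$ for the evaluation set $T_n$. Under this conditioning, the counts $n_i := |L_{s_i}|$ and hence the weights $\hat{p_i} = n_i/n$ become deterministic, while the labels $Y_j$ remain independent Bernoulli variables with means $y_{b(j)}^*$, where $b(j)$ denotes the bin of sample $j$.

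The key rewriting is
\[
(P2) \;=\; 2\sum_{i=1}^b \hat{p_i}\, e_i (\hat{y_i} - y_i^*) \;=\; \frac{2}{n}\sum_{j=1}^n e_{b(j)} \bigl(Y_j - y_{b(j)}^*\bigr),
\]
which, conditional on the bin assignments, is a sum of $n$ independent mean-zero random variables. Each summand lies in an interval of width $\tfrac{2}{n}|e_{b(j)}|$, so the total sum of squared ranges equals
\[
\sum_{j=1}^n \frac{4}{n^2} e_{b(j)}^2 \;=\; \frac{4}{n^2} \sum_{i=1}^b n_i e_i^2 \;=\; \frac{4}{n}\sum_{i=1}^b \hat{p_i}\, e_i^2 \;=\; \frac{4}{n}\,(P1).
\]

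Now I would condition additionally on the high-probability event from Lemma~\ref{lem:pi_bound} (which is measurable with respect to the bin assignments, since it only constrains the $\hat{p_i}$'s), under which Lemma~\ref{lem:p1_bound} gives $(P1) \leq (1+c(n)){E^*}^2$, and moreover $n_i \geq n p_i/2 > 0$ so every $\hat{y_i}$ is well-defined. Applying Hoeffding's inequality to the $n$ independent summands then yields, with conditional probability $\geq 1-\delta$,
\[
|(P2)| \;\leq\; \sqrt{\tfrac{1}{2}\cdot \tfrac{4(1+c(n)){E^*}^2}{n}\,\log\tfrac{2}{\delta}} \;=\; \sqrt{\tfrac{2(1+c(n)){E^*}^2}{n}\log\tfrac{2}{\delta}}.
\]
Since the bound on the right is deterministic once we are on the conditioning event, and this event has probability $\geq 1-\delta$ (after a trivial union bound with the event from Lemma~\ref{lem:pi_bound}, or by absorbing constants), the claim follows.

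The main obstacle, as noted above, is the $\hat{p_i}$–$\hat{y_i}$ coupling; the conditioning on $f(x_1),\dots,f(x_n)$ decouples the randomness cleanly and turns $(P2)$ into a sum over samples rather than over bins, which is exactly what lets Hoeffding give a bound scaling with $\sqrt{(P1)/n}$ rather than with $\sqrt{B/n}$. The rest is bookkeeping: tracking which high-probability events we have conditioned on and invoking Lemma~\ref{lem:p1_bound} to replace $(P1)$ by $(1+c(n)){E^*}^2$.
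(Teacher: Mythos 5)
Your proposal is correct and follows essentially the same route as the paper: both condition on $Z=(f(x_1),\dots,f(x_n))$ to make the $\hat{p_i}$ deterministic and the labels independent, then apply a Hoeffding/sub-Gaussian bound whose variance proxy is $(P1)/n \leq (1+c(n)){E^*}^2/n$. The only (cosmetic) difference is that you rewrite $(P2)$ as a single sum over the $n$ samples and apply Hoeffding once, whereas the paper treats each $\hat{p_i}e_i(\hat y_i - y_i^*)$ as a sub-Gaussian variable per bin and sums the parameters; the resulting bound is identical.
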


\begin{proof}
Recall that we evaluated our estimators on an independent and identically distributed evaluation set $T_n = \{(x_1, y_1), \dots, (x_n, y_n)\}$. Also, note that since $|\hat{p_i} - p_i| < p_i$, $\hat{p_i} > 0$. Let $Z = (f(x_1), \cdots, f(x_n))$ be a random variable. 

$\hat{y_i}$ simply takes the empirical average of the label values, and is therefore an unbiased estimator of $y_i^*$ even if we condition on $Z$:
\[ \expect[\hat{y_i} - y_i^* \mid Z] = 0 \]

Next we look at the distribution of $\hat{y_i} - y_i^* \mid Z$. For all $(x_j, y_j) \in T_n$, $y_j \in \{0, 1\}$. Additionally, $\{y_j \mid (x_j, y_j) \in T_n\} \mid Z$ is also independently (but not identically) distributed. So by Hoeffding's lemma, $\hat{y_i} - y_i^* \mid Z$ is sub-Gaussian with parameter $\frac{1}{4 \hat{p_i} n}$.

Here, we note that $\hat{p_i}$ is a constant given $Z$.
Then, we get that $\hat{p_i}e_i(\hat{y_i} - y_i^*) \mid Z$ has expected value $0$ and is sub-Gaussian with parameter:
\[ \sigma_i^2 = \hat{p_i}^2 e_i^2 \frac{1}{4 \hat{p_i} n} = \frac{\hat{p_i} e_i^2}{4n} \]
This means that the sum, $(P2)$ has expected value $0$ and is sub-Gaussian with  parameter:
\[ \sigma^2 = 2^2 \sum_{i=1}^B \sigma_i^2 = 4 \sum_{i=1}^B \frac{\hat{p_i} e_i^2}{4n} \leq \frac{(1+c(n)){E^*}^2}{n} \]
By applying the sub-Gaussian tail inequality, we get that with probability at least $1-\delta$,
\[ |(P2)| \leq \sqrt{\frac{2(1+c(n)){E^*}^2}{n} \log{\frac{2}{\delta}}} \]
Since this was true for all $Z$, this is true if we marginalize over $Z$ as well, which completes the proof.
\end{proof}

\begin{lemma}
\label{lem:p3_bound}
Let $(P3)$ be as defined in Lemma~\ref{lem:plugin_decomp}.
Suppose $|\hat{p_i} - p_i| < c(n)p_i < 0.5p_i$ for all $i$. Then with probability $\geq 1 - \delta$:
\[ |(P3)| \leq \frac{B}{2n} \log{\frac{2B}{\delta}} \]
\end{lemma}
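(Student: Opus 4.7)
The plan is to bound each summand $\hat{p}_i (\hat{y}_i - y_i^*)^2$ individually using Hoeffding's inequality, then sum over the $B$ bins using a union bound. The key observation is that the factor of $\hat{p}_i$ out front cancels the $1/\hat{p}_i$ dependence that appears in the Hoeffding bound, so each term contributes a uniform $O((\log B)/n)$, giving the total $O(B(\log B)/n)$.

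More concretely, first I would condition on $Z = (f(x_1), \dots, f(x_n))$, just as in the proof of Lemma~\ref{lem:p2_bound}. Given $Z$, the values $\hat{p}_i$ and the membership of points into bins are determined, and $\hat{y}_i$ is the empirical mean of $\hat{p}_i n$ independent Bernoulli$(y_i^*)$ samples (since the $y_j$ given the corresponding $f(x_j)$ are i.i.d.\ Bernoulli$(y_i^*)$ within bin $i$). By Hoeffding's inequality applied to bounded $[0,1]$-valued variables, for each fixed $i$, with probability at least $1 - \delta/B$,
\[
(\hat{y}_i - y_i^*)^2 \leq \frac{1}{2 \hat{p}_i n} \log\frac{2B}{\delta}.
\]
Multiplying both sides by $\hat{p}_i$ yields $\hat{p}_i(\hat{y}_i - y_i^*)^2 \leq \frac{1}{2n}\log\frac{2B}{\delta}$. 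Note that we do not need the assumption $|\hat{p}_i - p_i| < c(n)p_i$ for the bound itself; it is only used implicitly to ensure $\hat{p}_i > 0$ so that $\hat{y}_i$ is well-defined (i.e., each bin is non-empty).

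Taking a union bound over the $B$ bins, with probability at least $1-\delta$, this inequality holds simultaneously for every $i \in \{1, \dots, B\}$. Summing gives
\[
(P3) = \sum_{i=1}^{B} \hat{p}_i (\hat{y}_i - y_i^*)^2 \leq B \cdot \frac{1}{2n} \log\frac{2B}{\delta} = \frac{B}{2n} \log\frac{2B}{\delta},
\]
as desired. Since this bound on $(P3)$ holds for every realization of $Z$ on the high-probability event, we can marginalize out $Z$ to conclude the unconditional statement. I do not anticipate any real obstacle here—unlike $(P2)$, we are not trying to exploit cancellations across bins, so a naive per-bin Hoeffding followed by union bound suffices, which is precisely why $(P3)$ contributes the $B/n$ bias term that the debiased estimator is later designed to remove.
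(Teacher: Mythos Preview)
Your proposal is correct and follows essentially the same route as the paper: fix the bin assignments (the paper phrases this as ``fix arbitrary $\hat p_i$'s'', you phrase it as conditioning on $Z$), apply Hoeffding's inequality in each bin to get $(\hat y_i - y_i^*)^2 \le \frac{1}{2\hat p_i n}\log\frac{2B}{\delta}$ with probability $1-\delta/B$, multiply through by $\hat p_i$, union bound over the $B$ bins, and sum. Your remark that the hypothesis $|\hat p_i - p_i| < c(n)p_i$ is used only to guarantee $\hat p_i > 0$ matches the paper's observation exactly.
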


\begin{proof}
Fix arbitrary $\hat{p_i}$s satisfying $|\hat{p_i} - p_i| < c(n) p_i < 0.5p_i$. Note that this gives us $\hat{p_i} > 0$.

By Hoeffding's bound, for any fixed $i$, with probability at least $1-\frac{\delta}{b}$:
\[ |\hat{y_i} - y_i^*| \leq \sqrt{\frac{1}{2\hat{p_i}n} \log{\frac{2B}{\delta}}} \]
Applying union bound over $i = 1, \cdots, B$, we get that the above holds \emph{for all $i$} with probability at least $1 - \delta$. Then with probability at least $1 - \delta$, for all $i$:
\[ | \hat{p_i} (\hat{y_i} - y_i^*)^2 | \leq \frac{1}{2n} \log{\frac{2B}{\delta}} \]
Summing over the bins $i = 1, \cdots, B$, we get:
\[ 0 \leq (P3) \leq \frac{B}{2n} \log{\frac{2B}{\delta}}  \]
\end{proof}

Bounding the error of the plugin estimator simply involves combining the bounds for each of the terms, $P1$, $P2$, $P3$.

\begin{restatable}{theorem}{pluginBound}
\label{thm:plugin-bound}
Let $p_i = P(f(X) = s_i)$ and suppose $p_i > \piSmallBound{}$ for all $i$. Let $c(n)$ be defined as:
\[ c(n) = \sqrt{\frac{3}{n \min p_i} \log{\frac{2B}{\delta}}} \]
Then for the plugin estimator, with probability at least $1 - 3\delta$,
\[ | \hat{E_{pl}}^2 - {E^*}^2 | \leq c(n){E^*}^2 + \sqrt{\frac{2(1+c(n)){E^*}^2}{n} \log{\frac{2}{\delta}}} + \frac{B}{2n} \log{\frac{2B}{\delta}} \]
\end{restatable}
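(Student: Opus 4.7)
The plan is to simply assemble the three lemmas the authors have already established into a single high-probability bound via a triangle inequality and union bound. The theorem statement is essentially the sum of the three term bounds from Lemmas~\ref{lem:p1_bound},~\ref{lem:p2_bound}, and~\ref{lem:p3_bound}, so the work is in verifying that the conditioning is consistent and that the failure probabilities add up cleanly.

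First I would invoke Lemma~\ref{lem:pi_bound}: given the hypothesis $p_i > \piSmallBound{}$, with probability at least $1-\delta$ every $\hat p_i$ satisfies $|\hat p_i - p_i| < c(n) p_i$, where $c(n) = \sqrt{\frac{3}{n \min p_i} \log{\frac{2B}{\delta}}}$; note that since $p_i > \piSmallBound{}$ we automatically have $c(n) < 1/2$, so $\hat p_i \in (\tfrac12 p_i, \tfrac32 p_i)$ and in particular $\hat p_i > 0$, which all subsequent lemmas require. I would call this event $\mathcal{A}$ and condition on it throughout.

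Next, on $\mathcal{A}$, apply the decomposition from Lemma~\ref{lem:plugin_decomp},
\[ \pluginEst{} - {E^*}^2 = \bigl((P1) - {E^*}^2\bigr) - (P2) + (P3), \]
and bound each piece separately. Lemma~\ref{lem:p1_bound} is deterministic given $\mathcal{A}$ and yields $|(P1) - {E^*}^2| \leq c(n) {E^*}^2$. Lemma~\ref{lem:p2_bound} yields $|(P2)| \leq \sqrt{\frac{2(1+c(n)){E^*}^2}{n}\log\frac{2}{\delta}}$ with probability at least $1 - \delta$ (over the randomness of the labels given the $\hat p_i$'s). Lemma~\ref{lem:p3_bound} yields $|(P3)| \leq \frac{B}{2n}\log\frac{2B}{\delta}$ with probability at least $1 - \delta$. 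Each of these latter two uses the sub-Gaussian/Hoeffding argument conditional on the $x_i$'s (equivalently, on the $\hat p_i$'s) exactly as in the proofs of those lemmas.

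Finally, by triangle inequality,
\[ |\pluginEst{} - {E^*}^2| \leq |(P1) - {E^*}^2| + |(P2)| + |(P3)|, \]
and a union bound over the three failure events gives the stated conclusion with probability at least $1 - 3\delta$. There is no real obstacle here; the only subtlety worth flagging is that Lemmas~\ref{lem:p2_bound} and~\ref{lem:p3_bound} are stated under the assumption $|\hat p_i - p_i| < c(n) p_i < \tfrac12 p_i$, which is exactly the event $\mathcal{A}$ guaranteed by Lemma~\ref{lem:pi_bound}, so the conditioning is compatible and the union bound is clean.
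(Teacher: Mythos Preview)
Your proposal is correct and matches the paper's own proof essentially line for line: invoke Lemma~\ref{lem:pi_bound} to get the good event on the $\hat p_i$, apply the decomposition of Lemma~\ref{lem:plugin_decomp}, bound the three pieces by Lemmas~\ref{lem:p1_bound}, \ref{lem:p2_bound}, \ref{lem:p3_bound}, and finish with the triangle inequality and a union bound. If anything, you spell out the conditioning and the reason $c(n)<1/2$ more carefully than the paper does.
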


\begin{proof}
We have:
\[ |\pluginEst{} - {E^*}^2| \leq |P1 - {E^*}^2| + |P2| + |P3| \]
From Lemma~\ref{lem:pi_bound} we have $|\hat{p_i} - p_i| < c(n)p_i < 0.5p_i$ with probability $\geq 1 - \delta$. Conditioning on this, we combine Lemmas~\ref{lem:p1_bound},~\ref{lem:p2_bound},~\ref{lem:p3_bound} with union bound to get the desired result.
\end{proof}

We then prove the final bound for the plugin estimator, which we recall below.

\begin{finalPlugin}
\finalPluginText{}
\end{finalPlugin}

\begin{proof}
This is now a simple corollary of Theorem~\ref{thm:plugin-bound}. For large enough constant $c$, where $c$ is a constant independent of all the other variables, we choose $n = c\frac{B}{\epsilon^2} \log{\frac{B}{\epsilon^2}}$. Plugging it into the bound of Theorem~\ref{thm:plugin-bound}, we get the desired result, that $\lvert \hat{E}^2 - {E^*}^2 \rvert \leq \frac{1}{2}{E^*}^2$. Notice that the dominating term is term $(P3)$ in Theorem~\ref{thm:plugin-bound}---we will see that the debiased estimator improves on this.

In fact, we can also show that in the worst case the plugin estimator will need at least $O(\frac{B}{\epsilon^2})$ samples to estimate the calibration error. To see this, first note that the bias of the plugin estimator, which comes from term $(P3)$ is at least $\frac{B}{n}$. Furthermore, in the analysis of the debiased estimator we show that the variance of this term is on the order of $O(\frac{\sqrt{B}}{n})$. So if $n < 0.1\frac{B}{\epsilon^2}$ we can consider very large $B$, and use Chebyshev to show that that with high probability the estimation error is larger than $\epsilon^2$. 

\end{proof}

\subsection{Analysis of debiased estimator (proof of Theorem~\ref{thm:final-ours})}

Next, we bound the error of the debiased estimator. The proof follows along the lines of the plugin estimator. We begin with a decomposition (Lemma~\ref{lem:canceling_decomp}), similar to the decomposition of the plugin estimator. However, one of the terms in the decomposition, $C3$, is different. Lemma~\ref{lem:c3_bound} bounds this term $C3$. The rest of the proof is the same as for the plugin estimator, so we omit the other proofs.

As with the plugin estimator, we have a decomposition for the debiased estimator.

\begin{lemma}[Debiased decomposition]
\label{lem:canceling_decomp}
The debiased estimator satisfies the following decomposition:
\[ \debiasedEst{} = \underbrace{\sum_{i=1}^b \hat{p_i}e_i^2}_{(C1)}  - \underbrace{2\sum_{i=1}^b \hat{p_i}e_i(\hat{y_i} - y_i^*)}_{(C2)} + \underbrace{\sum_{i=1}^b \hat{p_i}\Big[ (\hat{y_i} - y_i^*)^2 - \frac{\hat{y_i}(1 - \hat{y_i})}{\hat{p_i}n-1} \Big]}_{(C3)} \]
\end{lemma}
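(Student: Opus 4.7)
The plan is to derive the claimed decomposition by pure algebra, piggybacking on the plugin decomposition (Lemma~\ref{lem:plugin_decomp}). The definition of the debiased estimator differs from the plugin estimator only by the additive bias-correction term $\sum_i \hat{p}_i \cdot \frac{\hat{y}_i(1-\hat{y}_i)}{\hat{p}_i n - 1}$, so the hard algebraic work has already been done; I only need to regroup terms.

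Concretely, I would first write
\[
\debiasedEst{} \;=\; \sum_{i=1}^b \hat{p}_i (s_i - \hat{y}_i)^2 \;-\; \sum_{i=1}^b \hat{p}_i \frac{\hat{y}_i(1-\hat{y}_i)}{\hat{p}_i n - 1},
\]
which is just the definition. The first sum is exactly $\pluginEst{}$, so by Lemma~\ref{lem:plugin_decomp} (which comes from substituting $s_i - \hat{y}_i = e_i - (\hat{y}_i - y_i^*)$ and expanding the square, using $e_i = s_i - y_i^*$) it equals
\[
\sum_{i=1}^b \hat{p}_i e_i^2 \;-\; 2\sum_{i=1}^b \hat{p}_i e_i(\hat{y}_i - y_i^*) \;+\; \sum_{i=1}^b \hat{p}_i (\hat{y}_i - y_i^*)^2.
\]

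Finally, I would absorb the correction term into the third sum: subtracting $\sum_i \hat{p}_i \cdot \frac{\hat{y}_i(1-\hat{y}_i)}{\hat{p}_i n - 1}$ from $\sum_i \hat{p}_i (\hat{y}_i - y_i^*)^2$ yields exactly the term labeled $(C3)$ in the statement, while the first two sums match $(C1)$ and $(C2)$ verbatim. This gives the desired identity.

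There is no real obstacle; the lemma is essentially a bookkeeping step whose purpose is to set up the subsequent analysis. The only thing to be mildly careful about is that the decomposition is purely algebraic and does not require $\hat{p}_i > 0$ or any concentration event — those assumptions will only enter when bounding $(C3)$ in Lemma~\ref{lem:c3_bound}, where the cancellation between $(\hat{y}_i - y_i^*)^2$ and its bias-correction $\frac{\hat{y}_i(1-\hat{y}_i)}{\hat{p}_i n - 1}$ is exploited to get the improved $\sqrt{B}$ dependence.
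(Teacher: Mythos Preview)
Your proposal is correct and matches the paper's approach: the paper does not spell out a proof for this lemma, treating it as immediate from the plugin decomposition (Lemma~\ref{lem:plugin_decomp}) once the bias-correction term is folded into the third summand. Your write-up makes explicit exactly the algebra the paper leaves implicit.
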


As with the plugin estimator, we bound each of the three terms. Notice that $C1$ and $C2$ are the same as terms $P1$ and $P2$ in the plugin estimator decomposition, so the bounds for those carry over. The next lemma bounds the error in $C3$.

\begin{lemma}
\label{lem:c3_bound}
Let $(C3)$ be as defined in Lemma~\ref{lem:canceling_decomp}.
Suppose $|\hat{p_i} - p_i| < c(n) p_i < 0.5 p_i$ for all $i$. Then with probability $\geq 1 - \delta$:
\[ |(C3)| \leq \frac{3\sqrt{B}}{n} \log{\frac{n}{\delta}} + \frac{\delta}{n} \]
\end{lemma}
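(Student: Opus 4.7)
The plan is to condition on $Z = (f(x_1), \ldots, f(x_n))$ and exploit the conditional independence of the contribution of each bin. Let $n_i = \hat{p}_i n$ denote the number of evaluation points with $f(x_j) = s_i$. Given $Z$, the $n_i$ labels in bin $i$ are i.i.d.\ Bernoulli with mean $y_i^*$, and the collections of labels across different bins are mutually independent. In particular, $\hat{y}_i \mid Z$ is the empirical mean of $n_i$ i.i.d.\ Bernoulli($y_i^*$) variables.

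The first key step is to show each summand of $(C3)$ is conditionally centered. A direct variance computation gives $\mathbb{E}[(\hat{y}_i - y_i^*)^2 \mid Z] = y_i^*(1-y_i^*)/n_i$, while the classical ``$n-1$'' correction for Bernoulli sample variance yields $\mathbb{E}[\hat{y}_i(1-\hat{y}_i)/(n_i - 1) \mid Z] = y_i^*(1-y_i^*)/n_i$. Hence, writing $W_i = \hat{p}_i\bigl[(\hat{y}_i - y_i^*)^2 - \hat{y}_i(1-\hat{y}_i)/(n_i - 1)\bigr]$, we have $\mathbb{E}[W_i \mid Z] = 0$, and $W_1, \ldots, W_B$ are conditionally independent given $Z$. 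This is where the debiased estimator wins: the per-bin $1/n$ biases that accumulated to $B/n$ for the plugin estimator are exactly annihilated, and what remains is a mean-zero sum with variance that scales favorably.

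Next I would apply a concentration inequality to $\sum_i W_i$ given $Z$. The catch, and the main technical obstacle I anticipate, is that $W_i$ is not deterministically bounded on the scale we need. However, Hoeffding's inequality applied within bin $i$ gives $|\hat{y}_i - y_i^*| \leq \sqrt{\log(n/\delta)/(2 n_i)}$ with high probability, so $(\hat{y}_i - y_i^*)^2 = O(\log(n/\delta)/n_i)$, and the correction term is at most $1/(4(n_i - 1))$. Combined with the assumption that the binning scheme is $2$-well-balanced (so $n_i \gtrsim n/B$, and in particular large), both pieces of $W_i$ are $O(\log(n/\delta)/n)$ with probability $\geq 1 - \delta/B$ per bin. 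To convert these high-probability per-bin bounds into a deterministic input for Hoeffding's inequality on the sum, I would use a truncation argument: define $\tilde{W}_i = W_i \cdot \mathbf{1}[|W_i| \leq \tau]$ for $\tau = C \log(n/\delta)/n$, apply Hoeffding to the independent bounded sequence $\tilde{W}_1, \ldots, \tilde{W}_B$ to get
\[
\Bigl| \sum_i \tilde{W}_i - \sum_i \mathbb{E}[\tilde{W}_i \mid Z] \Bigr| \leq O\!\left(\sqrt{B}\,\tau\,\sqrt{\log(1/\delta)}\right) = O\!\left(\frac{\sqrt{B}}{n}\log\frac{n}{\delta}\right),
\]
and union-bound over bins to conclude $\sum_i W_i = \sum_i \tilde{W}_i$ with probability $\geq 1 - \delta$.

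The remaining subtlety is the truncation bias $\bigl|\sum_i \mathbb{E}[\tilde{W}_i \mid Z]\bigr|$. Because $\mathbb{E}[W_i \mid Z] = 0$, this equals $\bigl|\sum_i \mathbb{E}[W_i \mathbf{1}[|W_i|>\tau] \mid Z]\bigr|$; using the tail bound on $|W_i|$ together with the deterministic bound $|W_i| \lesssim 1/n$ (which holds because $\hat{p}_i \leq 1$ and the two pieces of $W_i$ are each at most $O(1/n_i)$ up to the rare bad event of $\hat{y}_i$ being far from $y_i^*$), this bias contributes at most $O(\delta/n)$ in total, accounting for the additive $\delta/n$ term in the conclusion. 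Finally, marginalizing over $Z$ preserves the high-probability bound since the argument holds for all $Z$. Combining the concentration term with the truncation-bias term yields the desired $|(C3)| \leq \frac{3\sqrt{B}}{n}\log(n/\delta) + \delta/n$.
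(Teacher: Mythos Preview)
Your approach is essentially the same as the paper's: condition on $Z$, show each bin's contribution is conditionally mean-zero and conditionally independent, use per-bin Hoeffding to get a high-probability $O(\log(n/\delta)/n)$ bound on each $|W_i|$, then apply a sub-Gaussian/Hoeffding tail bound to the bounded independent sum to obtain the $\sqrt{B}/n$ concentration, and finally control the bias introduced by restricting to the good event. The paper phrases the last two steps as ``condition on the events $E_i$ and bound the mean shift'' rather than your explicit truncation $\tilde W_i = W_i\mathbf{1}[|W_i|\le\tau]$, but these are equivalent formalizations.

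One small slip: your parenthetical justification that $|W_i|\lesssim 1/n$ holds ``deterministically'' is not correct, since $(\hat y_i - y_i^*)^2$ can be as large as $1$ on the bad event. The correct deterministic bound is $|W_i|\le \hat p_i$ (from $|t_i|\le 1$); combining this with $\Pr(|W_i|>\tau\mid Z)\le \delta/n$ gives $\bigl|\sum_i \mathbb{E}[W_i\mathbf{1}[|W_i|>\tau]\mid Z]\bigr|\le \sum_i \hat p_i\cdot(\delta/n)=\delta/n$, which is exactly the bias term you want and matches the paper's argument.
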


\begin{proof}
Let $Z = (f(x_1), \cdots, f(x_n))$ be a random variable. We note that for all $i$, $\hat{p_i}$ is a deterministic function of $Z$. For convenience, define $t_i$ as follows:
\[ t_i = (\hat{y_i} - y_i^*)^2 - \frac{\hat{y_i}(1 - \hat{y_i})}{\hat{p_i}n-1} \]

\textbf{Computing the expectation:} The debiased estimator debiases the plugin estimator. In particular, we briefly explain why $\expect[C3 \mid Z] = 0$. Since $\hat{y_i}$ is the mean of $n\hat{p_i}$ draws of a Bernoulli with parameter $y_i^*$, we have:
\[ \expect[(\hat{y_i} - y_i^*)^2 \mid Z] = \frac{y_i^*(1 - y_i^*)}{n\hat{p_i}} \] 
The term we subtracted is the unbiased estimate of the standard deviation of the samples, so from elementary statistics:
\[ \expect\Big[\frac{\hat{y_i}(1 - \hat{y_i})}{\hat{p_i}n-1} \mid Z\Big] = \frac{y_i^*(1 - y_i^*)}{n\hat{p_i}} \]

Which implies that $\expect[C3 \mid Z] = 0$.

\textbf{Bounding each term:} By Hoeffding's bound, for any fixed $i$, we get that with probability at least $1 - \frac{\delta}{n}$: 
\[ |\hat{y_i} - y_i^*| \leq \sqrt{\frac{1}{2\hat{p_i}n} \log{\frac{2n}{\delta}}} \]

Let $E_i$ be the event that this is indeed the case. Condition on $E_i$ holding for all $i$ -- by union bound this happens with probability at least $1 - \delta$.
With some algebra, we then get:
\[ \lvert \hat{p_i}t_i \rvert = \Big\lvert \hat{p_i}\Big[ (\hat{y_i} - y_i^*)^2 - \frac{\hat{y_i}(1 - \hat{y_i})}{\hat{p_i}n-1} \Big] \Big\rvert \leq \frac{3}{2n} \log{\frac{B}{\delta}} \]

\textbf{Concentration:} Next, we analyze the concentration of $T = \big[(C3) \mid Z, \forall i. E_i\big]$ around its mean $\mu$. $\lvert \hat{p_i}t_i \rvert$ is bounded so is sub-Gaussian with parameter:
\[ \sigma_i^2 = \frac{9}{4n^2}\log{\frac{B}{\delta}} \]
Each term $\hat{p_i}t_i$ in the sum is independent, even when conditioned on $Z$.
So $T$ is sub-Gaussian with parameter:
\[ \sigma^2 = \sum_{i=1}^B \sigma_i^2 = \frac{9B}{4n^2}\log{\frac{B}{\delta}} \]
So by the sub-Gaussian tail bound, we have:
\[ \lvert T - \mu \rvert \leq \sqrt{2\sigma^2\log{\frac{1}{\delta}}} \leq \frac{3\sqrt{2}}{2} \frac{\sqrt{B}}{n} \sqrt{\log{\frac{n}{\delta}} \log{\frac{1}{\delta}}} \]
This can be simplified to:
\[ \lvert T - \mu \rvert \leq \frac{3\sqrt{B}}{n} \log{\frac{n}{\delta}} \]

\textbf{Bounding the bias:} Although $\expect[C3 \mid Z] = 0$, conditioning on $E_i$ introduces some bias.
However, we can show this bias is small. First, notice that $|t_i| \leq 1$. The event $E_i$ holds with probability at least $1 - \frac{\delta}{n}$. Then by the law of total expectation, conditioning on $E_i$ shifts the mean by at most $\frac{\delta}{n}$ -- in other words $|\expect[t_i \mid E_i, Z]| \leq \frac{\delta}{n}$.
Summing over $t_i$s, we get:
\[ \lvert \expect[(C3) \mid Z, \forall i. E_i] \rvert \leq \sum_{i=1}^B \hat{p_i} \lvert \expect[t_i \mid E_i, Z] \rvert \leq \frac{\delta}{n} \]

\textbf{Finishing up:} Combining the bias and concentration, we get that with probability at least $1 - 2\delta$:
\[ |(C3)| \leq \frac{3\sqrt{B}}{n} \log{\frac{n}{\delta}} + \frac{\delta}{n}\]

\end{proof}

\tm{where the theorem above is used. It sounds atypical to have a theorem in appendix..}
\tm{it's almost always better to write theorem first and then lemmas.. }
\ak{The main theorem is the main text, should I restate it at the start of this section? In some sense what I presented in the main text is a corollary of the theorem below, but it sounds weird to give a corollary without a theorem in the main text. And this theorem is a bit too unwieldy to put in the main text.}
We combine the bounds for $(C1)$, $(C2)$, $(C3)$, as in Theorem~\ref{thm:plugin-bound}, to bound the estimation error of the debiased estimator.

\begin{restatable}{theorem}{cancelingBound}
\label{thm:our-bound}
In the same setting as Theorem~\ref{thm:plugin-bound}, for the debiased estimator, with probability at least $1 - 4\delta$,
\[ | \hat{E}^2 - {E^*}^2 | \leq c(n){E^*}^2 + \sqrt{\frac{2(1+c(n)){E^*}^2}{n} \log{\frac{2}{\delta}}} + \frac{3\sqrt{B}}{n} \log{\frac{n}{\delta}} + \frac{\delta}{n}\]
\end{restatable}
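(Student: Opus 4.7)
The plan is to mirror the proof of Theorem~\ref{thm:plugin-bound} exactly, using the decomposition of the debiased estimator in Lemma~\ref{lem:canceling_decomp} and then applying one bound per term combined with a union bound. By the triangle inequality,
\[ \bigl| \debiasedEst{} - {E^*}^2 \bigr| \leq |(C1) - {E^*}^2| + |(C2)| + |(C3)|, \]
so it suffices to control each term separately on an event of total probability at least $1 - 4\delta$.

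First I would invoke Lemma~\ref{lem:pi_bound} to obtain the event $\mathcal{A} = \{\,|\hat{p_i} - p_i| < c(n) p_i < 0.5 p_i \text{ for all } i\,\}$, which holds with probability at least $1 - \delta$. All subsequent bounds will be applied conditional on $\mathcal{A}$. Next, because the terms $(C1)$ and $(C2)$ in Lemma~\ref{lem:canceling_decomp} are identical to the terms $(P1)$ and $(P2)$ in the plugin decomposition (Lemma~\ref{lem:plugin_decomp}), I can reuse Lemma~\ref{lem:p1_bound} and Lemma~\ref{lem:p2_bound} verbatim: on $\mathcal{A}$ we have $|(C1) - {E^*}^2| \leq c(n){E^*}^2$ deterministically, and with an additional $1 - \delta$ event we have $|(C2)| \leq \sqrt{\frac{2(1+c(n)){E^*}^2}{n} \log{\frac{2}{\delta}}}$. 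The only genuinely new ingredient is the debiasing term $(C3)$, which is handled by Lemma~\ref{lem:c3_bound}: on $\mathcal{A}$, with probability at least $1 - 2\delta$, $|(C3)| \leq \frac{3\sqrt{B}}{n}\log\frac{n}{\delta} + \frac{\delta}{n}$.

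Finally I would union-bound the three probabilistic events ($\mathcal{A}$, the sub-Gaussian tail for $(C2)$, and the combined tail plus conditioning correction for $(C3)$), which loses a total probability of $\delta + \delta + 2\delta = 4\delta$, and sum the three deterministic upper bounds obtained above to arrive at exactly the claimed inequality.

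There is essentially no new obstacle here: all the work was pushed into Lemma~\ref{lem:c3_bound}, where the careful conditioning on $Z = (f(x_1),\dots,f(x_n))$ and on the events $E_i$ handled the subtle issue that the debiased per-bin terms are mean-zero \emph{given} $Z$ but only approximately mean-zero after conditioning on the good tail event. Once that lemma is in hand, Theorem~\ref{thm:our-bound} is a one-line assembly by triangle inequality plus union bound, entirely parallel in structure to the proof of Theorem~\ref{thm:plugin-bound}, with the $\frac{B}{2n}\log\frac{2B}{\delta}$ term of $(P3)$ replaced by the sharper $\frac{3\sqrt{B}}{n}\log\frac{n}{\delta} + \frac{\delta}{n}$ bound on $(C3)$, which is exactly the improvement that yields the $\sqrt{B}$ rather than $B$ dependence in the corollary Theorem~\ref{thm:final-ours}.
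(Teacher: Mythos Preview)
Your proposal is correct and matches the paper's own argument essentially verbatim: the paper simply says to combine the bounds for $(C1)$, $(C2)$, $(C3)$ ``as in Theorem~\ref{thm:plugin-bound}'', which is exactly the triangle-inequality-plus-union-bound assembly you describe. You also correctly account for the $2\delta$ loss from Lemma~\ref{lem:c3_bound} (one $\delta$ for the Hoeffding events $E_i$ and one for the sub-Gaussian tail), which together with the $\delta$ from Lemma~\ref{lem:pi_bound} and the $\delta$ from Lemma~\ref{lem:p2_bound} yields the stated $1-4\delta$.
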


We interpret the bound in Theorem~\ref{thm:our-bound} in two regimes. In the first regime, we fix the problem parameters $p_i, {E^*}^2$, and look at what happens as we send $n$ to infinity. In that case, the second term dominates, and we see that the error is approximately proportional to $\frac{1}{\sqrt{n}}$, which is the same as for the plugin estimator. However, in general we do not need to estimate the calibration error extremely finely, and may be satisfied as long as we estimate the calibration error within a constant multiplicative factor. That is, we might only need $n$ to be large enough so that our estimate $\hat{E}^2$ is on the right order, e.g. between $0.5 {E^*}^2$ and $1.5 {E^*}^2$ (where $0.5$ and $1.5$ can be replaced by other constants). In that regime, the third term dominates and the error is approximately proportional to $\frac{\sqrt{B}}{n}$, which is better than for the plugin estimator where it is proportional to $\frac{B}{n}$ (see Theorem~\ref{thm:plugin-bound}). This is captured in the final bound, where the proof closely parallels that of Theorem~\ref{thm:final-plugin}.

\begin{finalCanceling}
\finalCancelingText{}
\end{finalCanceling}

\newpage

\newcommand{\lonece}[0]{\ensuremath{\ell_1\textup{-CE}}}
\newcommand{\loneerror}[0]{\ensuremath{\mathcal{E}}}
\newcommand{\pluginLoneEst}[0]{\ensuremath{\hat{\mathcal{E}}_{\textup{pl}}}}
\newcommand{\debiasedLoneEst}[0]{\ensuremath{\hat{\mathcal{E}}_{\textup{db}}}}

\section{Additional experiments for section~\ref{sec:verifying_calibration}}
\label{sec:verifying_calibration_appendix_experiments}

\subsection{Debiasing the ECE}
\label{sec:debiasing_ece_experiments}

We propose a way to more accurately estimate the $\ell_1$ calibration error (popularly known as ECE), and run experiments on ImageNet and CIFAR-10 that show that we can estimate the error much better than prior work, which uses the plugin estimator. The key insight is the same as for the $\ell_2$ calibration error---the plugin estimator for the $\ell_1$ calibration error is biased and this bias leads to inaccurate estimates. To estimate the error better we can subtract an approximation of the bias which leads to a better estimate. The main difference is that for the $\ell_1$ calibration error we were not able to approximate the bias with a closed form expression and instead use a Gaussian approximation.

Recall that the $\lonece{}$ is the $\ell_p{}$ calibration error with $p=1$, redefined below:

\begin{definition}
The $\ell_1$ calibration error of $f : \mathcal{X} \to [0, 1]$ is given by:
\begin{align}
\lonece(f) = \expect\big[ \left|f(X) - \expect[Y \mid f(X)] \right| \big]
\end{align}
\end{definition}

Estimating the $\lonece{}$ for many models is challenging (see Section~\ref{sec:challenges-measuring}) so prior work instead selects a binning scheme $\bins{}$ and estimates $\lonece(f_{\bins{}})$ of a model $f$. Suppose we wish to measure the binned calibration error $\loneerror{} = \lonece(f_{\bins{}})$ of a model $f : \mathcal{X} \to [0, 1]$ where $|\bins{}| = B$. Suppose we get an evaluation set $T_n = \{(x_1, y_1), \dots, (x_n, y_n)\}$. Past work typically estimates the $\ell_1$ calibration error using a plugin estimate for each term:

\begin{definition}[Plugin estimator for $\lonece{}$]
  Let $L_k$ denote the data points where the model outputs a prediction in the $k$-th bin of $\bins{}$: $L_k = \{ (x_j, y_j) \in T_n \; | \; f(x_j) \in I_k \}$.
  
  Let $\hat{p}_k$ be the estimated probability of $f$ outputting a prediction in the $k$-th bin:
$\hat{p}_k = \frac{|L_k|}{n}$.

Let $\hat y_k$ be the empirical average of $Y$ in the $k$-th bin: $\hat y_k = \sum_{x, y \in L_k} \frac{y}{|L_k|}$.

Let $\hat s_k$ be the empirical average of the model outputs in the $k$-th bin: $\hat s_k = \sum_{x, y \in L_k} \frac{f(x)}{|L_k|}$.
  
  The plugin estimate for the binned $\ell_1$ calibration error is the weighted squared difference between $\hat y_k$ and $\hat s_k$:
\[ \pluginLoneEst{} = \sum_{k=1}^B \hat{p}_k \lvert \hat s_k - \hat y_k \rvert \]
\end{definition}

The plugin estimate is a biased estimate of the binned calibration error. Intuitively, this is because of the absolute value: on any finite samples $s_k$ and $y_k$ will differ and the absolute value of the difference will be positive, even if the population values are the same. More concretely consider a model $f$ where $\expect[f(X) \mid f(X) \in I_k] = \expect[Y \mid f(X) \in I_k]$ in every bin $k$. In that case the binned calibration error $\loneerror{}$ is $0$. But on any finite samples the plugin estimate $\pluginLoneEst{}$ will be larger than $0$. In particular, the plugin estimator overestimates the binned calibration error, and the extent of overestimation may be different for different models.

To improve the estimate, we can subtract an approximation of the bias. That is, we would like to output $\pluginLoneEst{} - (\expect[\pluginLoneEst{}] - \loneerror{})$ as our estimate of the calibration error, where $\expect[\pluginLoneEst{}] - \loneerror{}$ is the bias. However, $\expect[\pluginLoneEst{}] - \loneerror{}$ is difficult to approximate in closed form. Instead, we propose approximating it by simulating draws from a normal approximation. More precisely, let $y_k = \expect[Y \mid f(X) \in I_k]$. Then, each label in the $k$-th bin is a draw from a Bernoulli distribution with parameter $y_k$. So $\hat y_k$ is the mean of $n \hat p_k$ Bernoulli draws. Assuming that the number of points in each bin is not too small, we can approximate $\hat y_k$ using a Gaussian approximation, and use that to approximate the bias $\expect[\pluginLoneEst{}] - \loneerror{}$.

\begin{definition}[Debiased estimator for $\lonece{}$]
For each $k$, let $R_k$ be a random variable sampled from a normal approximation of the label distribution in the $k$-th bin: $R_k \sim N(\hat y_k, \frac{\hat y_k (1 - \hat y_k)}{n \hat{p}_k})$. The debiased estimate for the binned $\ell_1$ calibration error subtracts an approximation of the bias from the plugin estimate:
\[ \debiasedLoneEst{} = \pluginLoneEst{} - (\expect\Big[\sum_{k=1}^B \hat{p}_k \lvert \hat s_k - R_k \rvert \Big] - \pluginLoneEst{}) \]
\end{definition}

We can approximate the expectation in the debiased estimator by simulating many draws from a normal distribution, which is computationally fairly inexpensive. Note that our proposed estimator is a heuristic approach, and future work should examine whether we can get provably better estimation rates for estimating the $\ell_1$ calibration error, as we did for the $\ell_2$ calibration error. That might involve analyzing our proposed estimator, or may involve coming up with a completely different estimator.

\begin{figure}
  \centering
  \centering
     \begin{subfigure}[b]{0.45\textwidth}
         \centering
         \includegraphics[width=\textwidth]{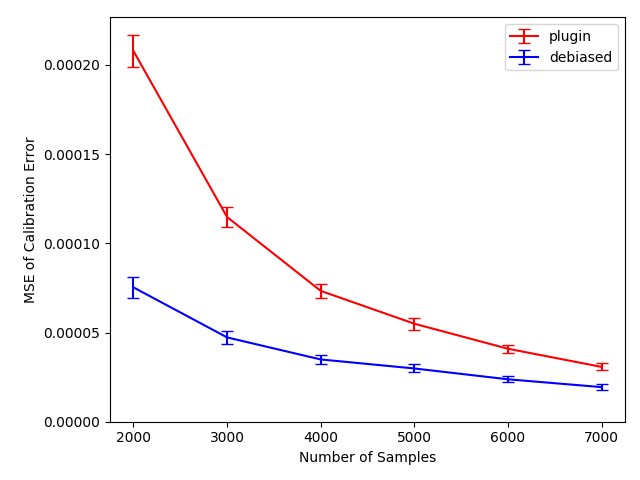}
         \caption{$B = 15$
         }
     \end{subfigure}
     \hfill
     \begin{subfigure}[b]{0.45\textwidth}
         \centering
         \includegraphics[width=\textwidth]{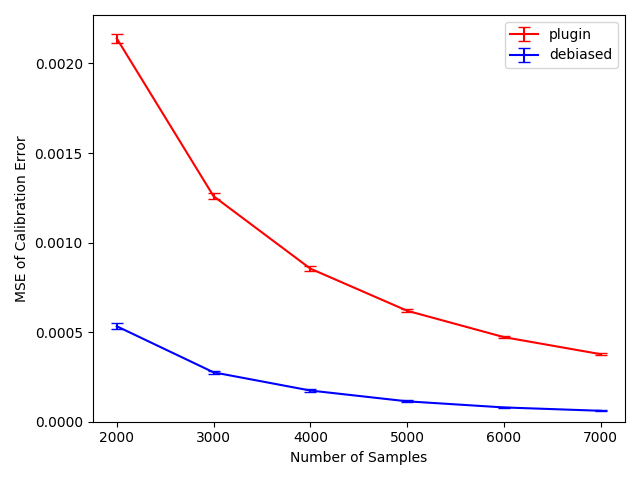}
         \caption{$B = 100$
         }
     \end{subfigure}
  \caption{
    Mean-squared errors of plugin and debiased estimators on a recalibrated VGG16 model on ImageNet with $90\%$ confidence intervals (lower values better). The debiased estimator is closer to the ground truth, which corresponds to $0$ on the vertical axis, and much more so when $B$ is large or $n$ is small.
    Note that this is the MSE of the ECE estimates, not the MSE of the model in Figure~\ref{fig:nan2}.
}
  \label{fig:mse_estimators_imagenet_ece_bins}
\end{figure}

We run multiclass top-label calibration experiment on CIFAR-10 and ImageNet which suggests that the debiased estimator produces better estimates of the calibration error than the plugin estimator. We describe the protocol for ImageNet first, which is similar to the experimental protocol in Section~\ref{sec:verifying_calibration_experiments}. We split the validation set of size 50,000 into two sets $\calset{}$ and $\verifset{}$ of sizes 3,000 and 47,000 respectively. Note that a practitioner would not need so many data points when estimating their model's calibration, we use 47,000 points only so that we can reliably compare the estimators. We use $\calset{}$ to re-calibrate a trained VGG-16 model and select a binning scheme $\bins{}$ so that each bin contains an equal number of points (uniform-mass binning). We calibrate the top probability prediction as described in Section~\ref{sec:formulation} using Platt scaling. For varying values of $n$, we sample $n$ points with replacement from $\verifset{}$, and estimate the binned $\ell_1$ calibration error (ECE) using the plugin estimator and our proposed debiased estimator. We used $B = 100$ or $B = 15$ bins in our experiments. We then compute the squared deviation of these estimates from the binned $\ell_1$ calibration error measured on the entire set $\verifset{}$. We repeat this resampling 1,000 times to get the mean squared deviation of the estimates from the ground truth and 90\% confidence intervals. Figure~\ref{fig:mse_estimators_imagenet_ece_bins} shows that the debiased estimates are much closer to the ground truth than the plugin estimates---the difference is especially significant when the number of samples $n$ is small or the number of bins $B$ is large. Note that having a perfect estimate corresponds to $0$ on the vertical axis.

\begin{figure}
  \centering
  \centering
     \begin{subfigure}[b]{0.45\textwidth}
         \centering
         \includegraphics[width=\textwidth]{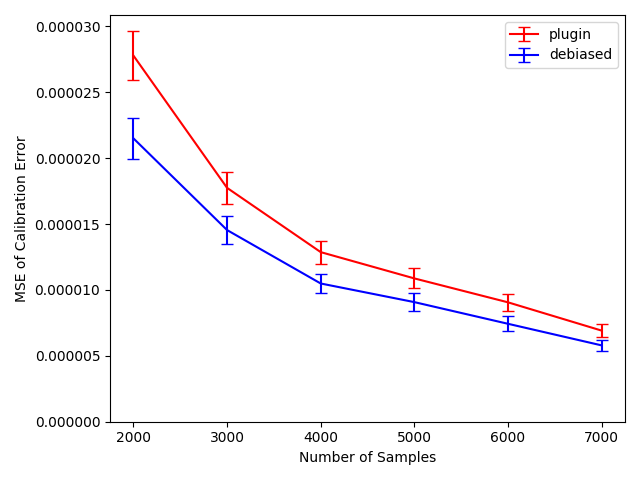}
         \caption{$B = 15$
         }
     \end{subfigure}
     \hfill
     \begin{subfigure}[b]{0.45\textwidth}
         \centering
         \includegraphics[width=\textwidth]{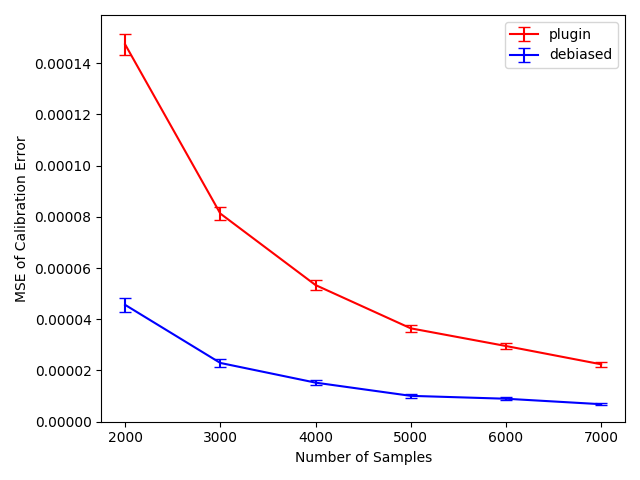}
         \caption{$B = 100$
         }
     \end{subfigure}
  \caption{
    Mean-squared errors of plugin and debiased estimators on a recalibrated VGG16 model on CIFAR-10 with $90\%$ confidence intervals (lower values better). The debiased estimator is closer to the ground truth, which corresponds to $0$ on the vertical axis, especially when $B$ is large or $n$ is small.
    Note that this is the MSE of the ECE estimates, not the MSE of the model in Figure~\ref{fig:nan2}.
}
  \label{fig:mse_estimators_cifar_ece_bins}
\end{figure}

For CIFAR-10 we use the same protocol except we split the validation set of size 10,000 into two sets $\calset{}$ and $\verifset{}$ of sizes 3,000 and 7,000 respectively. Figure~\ref{fig:mse_estimators} shows that the debiased estimates are much closer to the ground truth than the plugin estimates in this case as well.

\subsection{Additional experiments for estimating calibration error}

In Section~\ref{sec:verifying_calibration} we ran an experiment on CIFAR-10 to show that the debiased estimator gives estimates closer to the true calubration error than the plugin estimator. To give more insight into this, Figure~\ref{fig:histograms_estimators_bins} shows a histogram of the absolute difference between the estimates and ground truth for the plugin and debiased estimator, over the 1,000 resamples, when we use $B = 10$ or $B = 100$ bins. For $B = 10$ bins it is not completely clear which estimator is doing better but the debiased estimator avoids very bad estimates. However, when $B = 100$, the debiased estimator produces estimates much closer to the ground truth ($0$ on the x-axis).

\begin{figure}
  \centering
  \centering
     \begin{subfigure}[b]{0.45\textwidth}
         \centering
         \includegraphics[width=\textwidth]{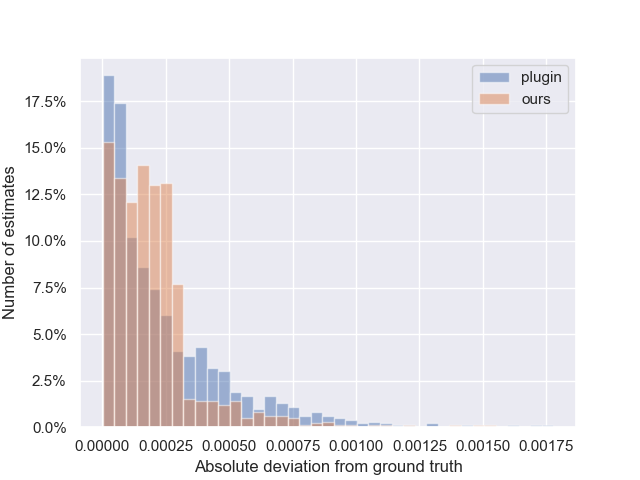}
         \caption{$B = 10$ bins}
     \end{subfigure}
     \hfill
     \begin{subfigure}[b]{0.45\textwidth}
         \centering
         \includegraphics[width=\textwidth]{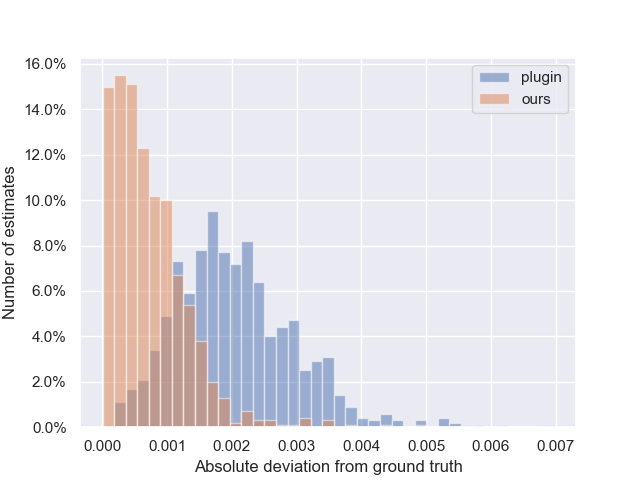}
         \caption{$B = 100$ bins}
     \end{subfigure}
  \caption{Histograms of the absolute value of the difference between estimated and ground truth squared calibration errors ($0$ on the x-axis). For $B = 10$ bins, the results are mixed but we avoid very bad estimates. For $B=100$ our estimates are much closer to ground truth.\tnote{same comments as before}}
  \label{fig:histograms_estimators_bins}
\end{figure}

We also show histograms for the ECE experiments in Appendix~\ref{sec:debiasing_ece_experiments}, in Figure~\ref{fig:histograms_estimators_ece_imagenet_bins} for ImageNet and Figure~\ref{fig:histograms_estimators_ece_cifar_10_bins} for CIFAR-10. These histograms show that the proposed debiased estimator produces estimates much closer to the ground truth than the plugin estimator.

\begin{figure}
  \centering
  \centering
     \begin{subfigure}[b]{0.45\textwidth}
         \centering
         \includegraphics[width=\textwidth]{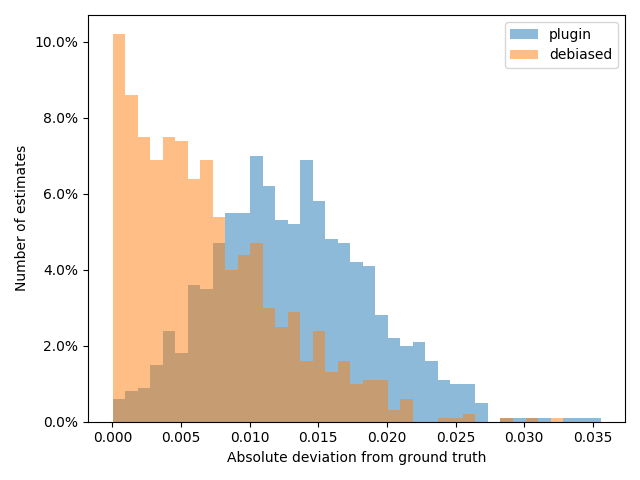}
         \caption{$B = 15$ bins}
     \end{subfigure}
     \hfill
     \begin{subfigure}[b]{0.45\textwidth}
         \centering
         \includegraphics[width=\textwidth]{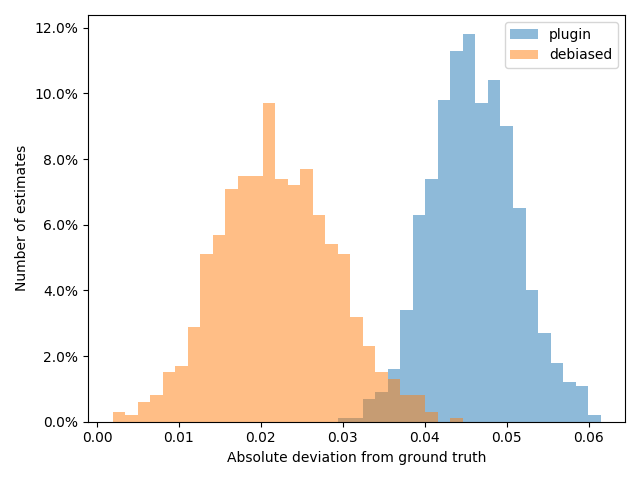}
         \caption{$B = 100$ bins}
     \end{subfigure}
  \caption{Histograms of the absolute value of the difference between estimated and ground truth ECE ($0$ on the x-axis) on ImageNet.}
  \label{fig:histograms_estimators_ece_imagenet_bins}
\end{figure}

\begin{figure}
  \centering
  \centering
     \begin{subfigure}[b]{0.45\textwidth}
         \centering
         \includegraphics[width=\textwidth]{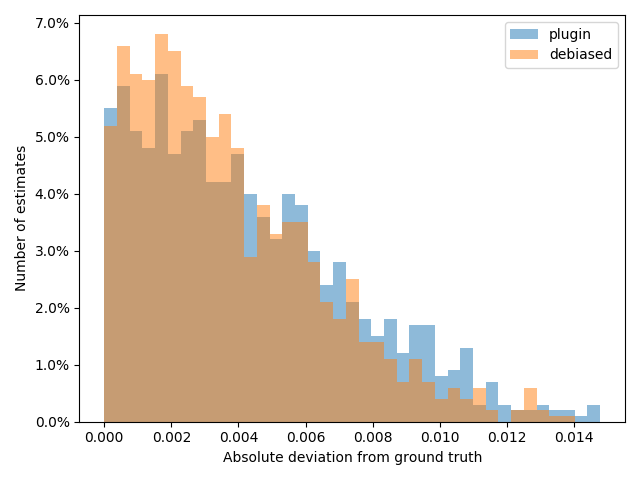}
         \caption{$B = 15$ bins}
     \end{subfigure}
     \hfill
     \begin{subfigure}[b]{0.45\textwidth}
         \centering
         \includegraphics[width=\textwidth]{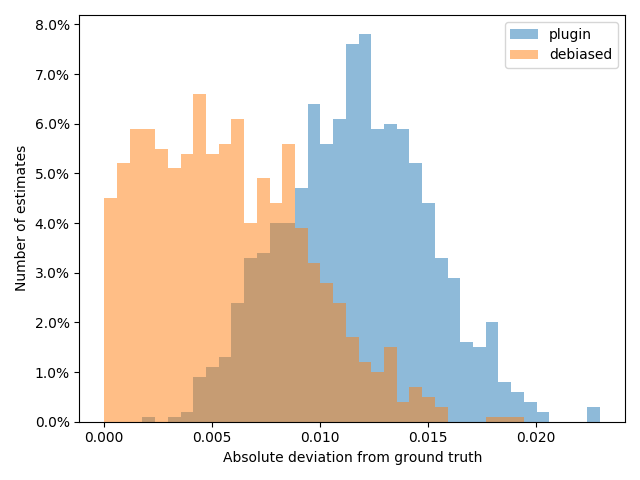}
         \caption{$B = 100$ bins}
     \end{subfigure}
  \caption{Histograms of the absolute value of the difference between estimated and ground truth ECE ($0$ on the x-axis) on CIFAR-10.}
  \label{fig:histograms_estimators_ece_cifar_10_bins}
\end{figure}

We also ran a marginal multiclass calibration experiment on CIFAR-10 to show that our estimator allows us to select models with a lower mean-squared error subject to a given calibration constraint. In this case we split the validation set into $\calset{}$ and $\verifset{}$ of size 6000 and 4000 respectively, and recalibrated a trained model on $\calset{}$. On $\verifset{}$, we estimate the calibration error using the plugin and debiased estimators and use 100 Bootstrap resamples to compute a 90\% upper confidence bound on the estimate (from the variance of the Bootstrap samples). We compute the mean-squared error and the upper bounds on the calibration error for $B = 10, 15, \dots, 100$ and show the Pareto curve in Figure~\ref{fig:mse_vs_ce_estimator}. Figure~\ref{fig:mse_vs_ce_estimator} shows that for any desired calibration error, the debiased estimator enables us to pick out models with a better mean-squared error. For example, if we want a model with calibration error less than $1.5\%$, the debiased estimator tells us we can confidently use 100 bins, while relying on the plugin estimator only lets us use 15 bins and incurs a 13\% higher mean-squared error.



\begin{figure}
  \centering
  \includegraphics[width=0.9\textwidth]{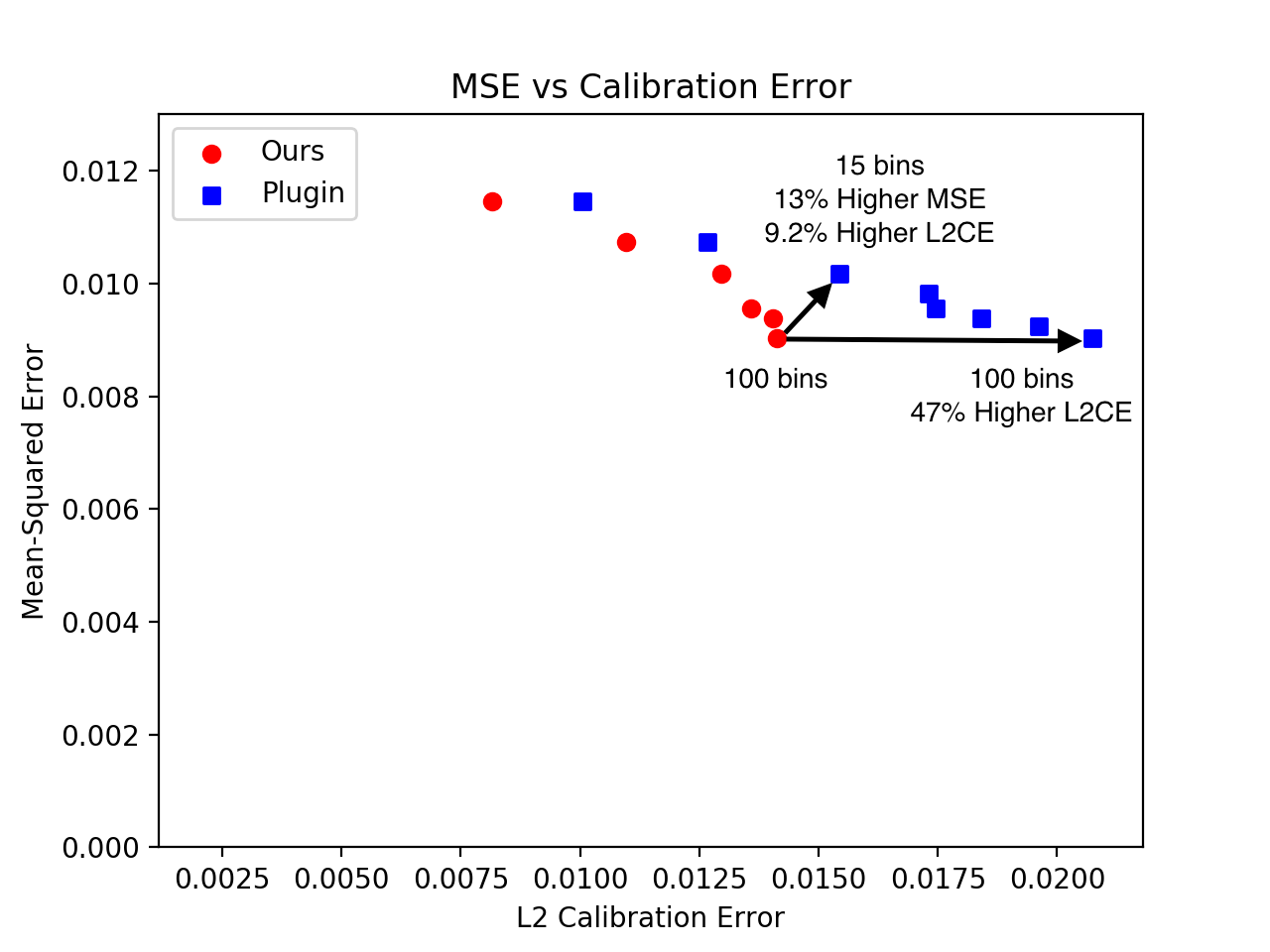}
  \caption{Plot of mean-squared error against 90\% upper bounds on the calibration error computed by the debiased estimator and the plugin estimator, when we vary the number of bins $B$. For a given calibration error, our estimator enables us to choose models with a better mean-squared error. If we want a model with calibration error less than 0.015, the debiased estimator tells us we can confidently use 100 bins, while relying on the plugin estimator only lets us use 15 bins and incurs a 13\% higher mean-squared error.}
  \label{fig:mse_vs_ce_estimator}
\end{figure}

\end{document}